\documentclass{article} 
\usepackage{iclr2027_conference,times}
\iclrfinalcopy
\usepackage{fancyhdr}

\usepackage{amsmath,amsfonts,bm}

\def\eqref#1{equation~\ref{#1}}

\def\1{\bm{1}}

\def\ve{{\bm{e}}}

\def\vw{{\bm{w}}}
\def\vx{{\bm{x}}}
\def\vy{{\bm{y}}}

\def\mL{{\bm{L}}}

\def\mX{{\bm{X}}}

\DeclareMathAlphabet{\mathsfit}{\encodingdefault}{\sfdefault}{m}{sl}
\SetMathAlphabet{\mathsfit}{bold}{\encodingdefault}{\sfdefault}{bx}{n}

\def\gA{{\mathcal{A}}}
\def\gB{{\mathcal{B}}}

\def\gE{{\mathcal{E}}}
\def\gF{{\mathcal{F}}}
\def\gG{{\mathcal{G}}}
\def\gH{{\mathcal{H}}}
\def\gI{{\mathcal{I}}}

\def\gS{{\mathcal{S}}}
\def\gT{{\mathcal{T}}}

\def\gV{{\mathcal{V}}}

\def\gX{{\mathcal{X}}}

\usepackage{url}
\usepackage{subcaption}

\RequirePackage{fix-cm}

\usepackage{textcomp}
\usepackage{tikz}
\usetikzlibrary{calc}

\usepackage{caption}
\usepackage{placeins}
\usepackage{comment}
\usepackage[inline]{enumitem}
\usepackage{xcolor}
\definecolor{linkblue}{RGB}{0, 70, 140}

\usepackage[
    colorlinks=true,
    linkcolor=linkblue,
    urlcolor=linkblue,
    citecolor=linkblue
]{hyperref}

\usepackage{colortbl}

\usepackage{xspace}
\usepackage{graphicx}
\usepackage{rotating}

\usepackage{tabularx}
\usepackage{booktabs}
\usepackage{multirow}
\usepackage{makecell}

\usepackage{algorithm}
\usepackage{algorithmic}
\usepackage{wrapfig}

\newif\ifshowcomments
\showcommentsfalse
\newif\ifshowkncomments
\showkncommentstrue
\newif\ifshowmhcomments
\showmhcommentstrue
\newif\ifshowsdcomments
\showsdcommentstrue

\newcommand{\SD}[1]{\ifshowcomments\textcolor{blue!70!black}{\textbf{[SD: #1]}}\else\ifshowsdcomments\textcolor{blue!70!black}{\textbf{[SD: #1]}}\fi\fi}

\newtheorem{theorem}{Theorem}[section]

\newtheorem{proposition}[theorem]{Proposition}

\newenvironment{proof}[1][Proof]{\par\noindent\textit{#1.}}{\hfill\rule{0.5em}{0.5em}\par}

\newcommand{\sgt}{\texttt{Scaffold}\xspace}
\newcommand{\sgte}{\texttt{Scaffold-Greedy}\xspace}
\newcommand{\sgtb}{\texttt{Scaffold-Batch}\xspace}
\newcommand{\sgth}{\texttt{Scaffold-Heap}\xspace}
\newcommand{\sgtf}{\texttt{Scaffold-Fast}\xspace}
\newcommand{\sgtsa}{\texttt{Scaffold-Sample}\xspace}

\makeatletter

\providecommand{\eCon}{}
\renewcommand{\eCon}{\@ifnextchar_{\eCon@sub}{c_E}}
\def\eCon@sub_#1{c_{E,#1}}

\providecommand{\vCon}{}
\renewcommand{\vCon}{\@ifnextchar_{\vCon@sub}{c_V}}
\def\vCon@sub_#1{c_{V,#1}}

\providecommand{\pathDil}{}
\renewcommand{\pathDil}{\@ifnextchar_{\pathDil@sub}{D^{\mathrm{path}}}}
\def\pathDil@sub_#1{D_{#1}^{\mathrm{path}}}

\providecommand{\eConPath}{}
\renewcommand{\eConPath}{\@ifnextchar_{\eConPath@sub}{C_E^{\mathrm{path}}}}
\def\eConPath@sub_#1{C_{E,#1}^{\mathrm{path}}}

\providecommand{\vConPath}{}
\renewcommand{\vConPath}{\@ifnextchar_{\vConPath@sub}{C_V^{\mathrm{path}}}}
\def\vConPath@sub_#1{C_{V,#1}^{\mathrm{path}}}

\makeatother

\providecommand{\pathDilMax}[1]{D_{#1,\max}^{\mathrm{path}}}
\providecommand{\eConPathMax}[1]{C_{E,#1,\max}^{\mathrm{path}}}
\providecommand{\vConPathMax}[1]{C_{V,#1,\max}^{\mathrm{path}}}

\providecommand{\score}{}
\renewcommand{\score}{s}

\providecommand{\pathE}{}
\renewcommand{\pathE}{p_E}

\providecommand{\pathN}{}
\renewcommand{\pathN}{p_V}

\usepackage{titlesec}
\usepackage{titletoc}

\titlespacing*{\section}
  {0pt}{1.2ex plus 0.2ex minus 0.2ex}{0.3ex}

\titlespacing*{\subsection}
  {0pt}{0.9ex plus 0.2ex minus 0.1ex}{0.25ex}

\titlespacing*{\subsubsection}
  {0pt}{0.7ex plus 0.2ex minus 0.1ex}{0.2ex}

\titlespacing*{\paragraph}
  {0pt}{0.4ex plus 0.1ex minus 0.1ex}{0.5em}

\newcommand{\scaffoldlink}{%
\href{https://github.com/siddhartha047/Scaffold}%
{\nolinkurl{github.com/siddhartha047/Scaffold}}%
}

\newcommand{\scaffoldgnnlink}{%
\href{https://github.com/siddhartha047/Scaffold-GNN}%
{\nolinkurl{github.com/siddhartha047/Scaffold-GNN}}%
}

\title{\sgt: Support Graph Theory Based Sparsification for Graph Neural Networks}

\author{
\parbox{0.96\textwidth}{
\raggedright
{\bfseries
Siddhartha Shankar Das\textsuperscript{1},
Sai Karthik Navuluru\textsuperscript{2},
S. M. Ferdous\textsuperscript{3},
Ryan A. Rossi\textsuperscript{4},\\[-1pt]
Baris Coskunuzer\textsuperscript{2},
Lakshman Tamil\textsuperscript{2},
Edoardo Serra\textsuperscript{5},
Alex Pothen\textsuperscript{6},
Robert Rallo\textsuperscript{1},
Mahantesh M. Halappanavar\textsuperscript{1}
}\\[4pt]
{\normalfont\footnotesize
\makebox[0.50\linewidth][l]{\textsuperscript{1}Pacific Northwest National Laboratory}%
\makebox[0.48\linewidth][l]{\textsuperscript{2}University of Texas at Dallas}\\[-1pt]
\makebox[0.50\linewidth][l]{\textsuperscript{3}University of North Carolina at Charlotte}%
\makebox[0.48\linewidth][l]{\textsuperscript{4}Adobe}\\[-1pt]
\makebox[0.50\linewidth][l]{\textsuperscript{5}Boise State University}%
\makebox[0.48\linewidth][l]{\textsuperscript{6}Purdue University}
}
}
}
\begin{document}

\maketitle
\pagestyle{fancy}
\fancyhf{}

\fancyhead[L]{\small Scaffold}
\renewcommand{\headrulewidth}{0.4pt}

\fancyfoot{}

\thispagestyle{fancy}

\begin{abstract}
Graph neural networks (GNNs) rely on message passing over graph edges, making their computational and memory costs strongly dependent on graph density. Graph sparsification offers a natural way to reduce these costs, but removing edges indiscriminately can distort important communication structure and degrade predictive performance.
We introduce \sgt, a topology-based, unsupervised graph sparsification framework derived from support graph theory preconditioners. \sgt explicitly controls two complementary structural quantities: \emph{dilation}, which measures the length of rerouting paths induced by removed edges, and \emph{congestion}, which measures how strongly these rerouted paths concentrate on the retained support.
By jointly controlling dilation and congestion, \sgt preserves short communication paths while avoiding structural bottlenecks. 
To our knowledge, \sgt is the first scalable GNN sparsification framework to use a joint supporting-path dilation-congestion criterion.
Across $19$ homophilic and heterophilic benchmarks spanning small to large graphs, \sgt achieves the best aggregate rank among the evaluated sparsification and related methods. 
Using only $10\%$--$50\%$ of the original edges per sparse support, \sgt recovers or closely approaches full-graph GNN performance while using less than half the memory of full-graph training and reducing end-to-end training time, including sparsification overhead.
We provide an open-source software package at \scaffoldlink.
\end{abstract}

\section{Introduction}
\label{sec:introduction}

Graph Neural Networks (GNNs)~\citep{Zhou2020Review,wu2022graph} have been remarkably successful across a wide range of graph-learning applications.
GNNs operate through two iterative steps, \emph{aggregation} and \emph{update}, where each node first aggregates information from its neighborhood and then updates its embedding using the aggregated information~\citep{Hamilton2017GraphSAGE}. 
Graphs used for GNN training are typically stored in sparse formats such as coordinate (COO), compressed sparse row (CSR), or compressed sparse column (CSC). 
COO stores edges explicitly as source-destination index pairs and is commonly used for edge-wise \texttt{Gather-scatter} operation, where source-node features are gathered along edges and aggregated at destination nodes. CSR/CSC instead store the sparse adjacency matrix in compressed row/column form and support efficient sparse-matrix-dense-matrix multiplication (\texttt{SpMM}).
These sparse aggregation operations often dominate GNN execution, with \texttt{SpMM} alone accounting for approximately $\mathbf{70\%}$ of the total computation cost~\citep{Liu2023DSpar}. Since their cost scales with the number of processed edges, GNN training can become expensive on larger and denser graphs~\citep{Chen2018FastGCN,Zeng2020GraphSAINT}.

A common approach to scaling GNNs is \textit{sampling} \citep{Hamilton2017GraphSAGE,Chiang2019ClusterGCN}, which reduces per-iteration computation by typically controlling the size of the sampled computation graph rather than explicitly preserving its global structure. Graph \textit{sparsification} offers a complementary structure-aware sampling perspective by selecting a subset of edges while preserving properties of the original graph so that the resulting subgraph remains an effective support for message passing~\citep{Spielman2011Spectral,Loukas2019Reduction,Hashemi2024Survey}. However, not all sparsification objectives align with the communication requirements of GNNs. Equally sized sparse graphs can induce very different communication structures, forcing originally adjacent nodes to communicate through long detours or rerouting many communication paths on a small number of edges or hub nodes. This is particularly important in the \emph{unsupervised} setting, where labels and downstream task information are unavailable for identifying task-specific edge importance. The overarching goal is therefore to construct sparse graphs that preserve structural properties that support effective communication while satisfying a fixed edge budget.

Among structural properties, \emph{low dilation} and \emph{low congestion} are particularly relevant to sparsification. In support graph theory, dilation measures how far communication is rerouted when an edge is removed, while congestion measures the crowdedness or density of rerouted paths on the retained support. These concepts originate from combinatorial preconditioners to solve systems of linear equations, and were later formalized through support theory and congestion-dilation analysis~\citep{Vaidya1991,Boman2003,Bern2006Support,Maggs2005SupportTree}. Together, they characterize how well a sparse graph supports the original graph.
We bring this perspective to graph representation learning through \sgt, a practical and scalable sparsification framework that explicitly controls dilation and congestion under a prescribed edge budget. If removing an edge forces its endpoints to communicate through a long or highly congested supporting path, that edge is poorly represented by the current sparse graph and receives higher priority for retention. \sgt therefore constructs sparse supports in which removed connections remain represented by short paths while avoiding heavily congested communication bottlenecks.



\begin{wrapfigure}{r}{0.48\textwidth}
    \centering
    \includegraphics[width=\linewidth]{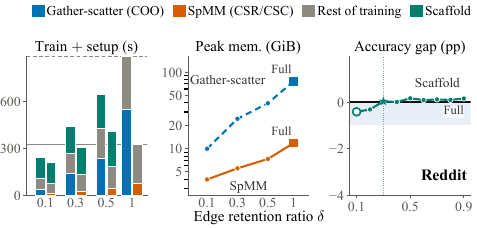}
    \captionsetup{font=small}
    \caption{Runtime, memory, and accuracy trade-off.}
    \label{fig:intro_cost}
\end{wrapfigure}

We develop five realizations of this principle for different scalability regimes:
\texttt{greedy} fully recomputes support scores, \texttt{heap} uses lazy updates,
\texttt{batch} scores sampled candidate batches, \texttt{fast} uses static
tree-prefix scoring for large graphs, and \texttt{sample} converts structural
scores into a distribution over sparse graph views. Together, these variants
support both fixed sparsification and stochastic GNN sampling.
These sparse supports also provide practical efficiency benefits. 
As illustrated in Figure~\ref{fig:intro_cost} on \textsc{Reddit}, \sgt approaches full-graph performance at low edge-retention ratios (from $10-100\%$; X-axis) while substantially reducing end-to-end training time and peak memory in both \texttt{COO}- and \texttt{SpMM}-based implementations. Our contributions are as follows:
\begin{enumerate}[label=(\roman*), leftmargin=*, labelindent=0pt,
                  labelsep=0.5em, itemsep=0pt, topsep=0pt]

\item We introduce \sgt, a topology-based unsupervised and theoretically rigorous sparsification framework that translates support-graph dilation and congestion from combinatorial preconditioners to graph representation learning (\S\ref{sec:preliminaries}).

\item We develop a scalable family of \sgt algorithms with different quality-cost trade-offs, supporting both fixed sparse graphs and stochastic sparse views for efficient GNN training and inference
(\S\ref{sec:method}).

\item We characterize how dilation and edge congestion bound the structural
and communication distortion introduced by sparsification, and empirically
study their implications for the GNN message passing framework
(\S\ref{subsec:scaffold-theory}).

\item Across $19$ datasets and $20$ related baselines, \sgt achieves strong
predictive performance while reducing memory and end-to-end training cost.
With each sparse support retaining only $30\%$ and $50\%$ of the original
edges, \sgt reaches within one percentage point of full-graph performance on
$12/19$ and $16/19$ datasets, respectively
(\S\ref{sec:experiments}).

\end{enumerate}



\section{Preliminaries}
\label{sec:preliminaries}

\paragraph{Sparse support and supporting paths.}
Let $\gG=(\gV,\gE,\vw)$ be an undirected graph with $n=|\gV|$ nodes and
$m=|\gE|$ edges. For an edge-retention ratio $\delta\in(0,1]$, let
$q=\lceil\delta m\rceil$ denote the target edge budget. We seek a spanning
subgraph $\gH=(\gV,\gE_{\gH},\vw_{\gH})\subseteq\gG$ with
$|\gE_{\gH}|=q$, and denote the omitted edges by
$\gI=\gE\setminus\gE_{\gH}$.
Following the support-graph view~\citep{Vaidya1991,Boman2003,Bern2006Support},
each omitted edge $e=(u,v)\in\gI$ is represented by a supporting path $P_e$
in $\gH$ connecting its endpoints. In \sgt, $P_e$ is chosen as a shortest
path in the current support. Figure~\ref{fig:sgnotations} illustrates this
view and the structural quantities used below.

\begin{figure}[!t]
    \centering
    \includegraphics[width=\linewidth]
    {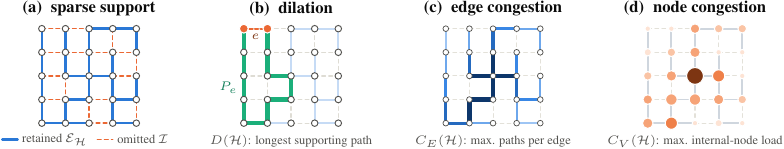}
    \caption{Support-graph features: (a) sparse support, (b) dilation,
    (c) edge congestion, and (d) node congestion. Darker edges and larger
    nodes indicate higher path loads; node loads exclude endpoints.}
    \label{fig:sgnotations}
\end{figure}

\paragraph{Dilation and congestion.}
Classical support theory characterizes support quality using dilation and
edge congestion~\citep{Vaidya1991,Boman2003,Bern2006Support}; we additionally
consider node congestion for GNN message passing.
For the unweighted graphs studied in this work, we define
\[
D(\gH)=\max_{e\in\gI}|P_e|,
\qquad
C_E(\gH)=
\max_{a\in\gE_{\gH}}
|\{e\in\gI:a\in P_e\}|,~\text{and}
\]
\[
C_V(\gH)=
\max_{x\in\gV}
|\{e\in\gI:x\in\operatorname{int}(P_e)\}|.
\]
Thus, dilation measures how far an omitted connection is rerouted, while
edge and node congestion measure how strongly rerouted paths concentrate on
the retained support. Weighted extensions, further support-theoretic
background, and controlled
examples illustrating these quantities are provided in Appendix~\ref{app:preliminaries}.


\paragraph{Sparsification objective.}
Given an edge budget $q$, we seek a spanning support, $\gH^\star$, in which
omitted edges remain represented by short and lightly congested paths:
\begin{equation}
\begin{aligned}
\Phi(\gH)
&=
\underbrace{(1+D(\gH))^{\alpha}}_{\text{dilation}}
\underbrace{(1+C_E(\gH))^{\beta_E}}_{\text{edge congestion}}
\underbrace{(1+C_V(\gH))^{\beta_V}}_{\text{node congestion}},\\[-2pt]
\gH^\star
&\in
\arg\min_{\substack{
\gH\subseteq\gG,\;|\gE_{\gH}|=q\\
\gH\ {\rm spans\ each\ component}
}}
\Phi(\gH).
\end{aligned}
\label{eq:scaffold-global-objective}
\end{equation}
Here, $\alpha,\beta_E,\beta_V\geq0$ control the relative contribution of
the three criteria. Classical support theory uses dilation and edge
congestion, while $C_V$ is an additional GNN-oriented extension;
setting $\beta_V=0$ recovers the core dilation--edge-congestion objective.

\section{Proposed Method for Sparsification: \sgt}
\label{sec:method}

Equation~\ref{eq:scaffold-global-objective} defines the desired sparse support globally, but direct optimization is NP-hard, even in the dilation-only case (Appendix~\ref{app:nphardproof}). \sgt therefore replaces the global objective with a candidate-wise structural surrogate. We first define this score, then present scalable realizations, describe their use within GNNs, and summarize their structural properties and computational cost.

\subsection{From the Global Objective to a Greedy Surrogate}
\label{subsec:scaffold-greedy}

Let $\gH_t$ be the current support and
$\gI_t=\gE\setminus\gE_{\gH_t}$ the remaining candidates. An exact greedy step
would select
\begin{equation}
e_t^{\mathrm{exact}}
=
\arg\max_{e\in\gI_t}\Delta_t(e),
\qquad
\Delta_t(e)
=
\Phi(\gH_t)-\Phi(\gH_t\cup\{e\}).
\label{eq:scaffold-exact-marginal}
\end{equation}
However, evaluating every marginal gain requires repeatedly recomputing
supporting paths and congestion after hypothetical insertions, making exact
greedy selection expensive.

Therefore, \sgt instead scores each candidate from its current supporting path.
Starting from a spanning-forest backbone $\gH_0=\gF$, with one spanning tree
per connected component, for $e=(u,v)\in\gI_t$ let
$P_e=\operatorname{ShortestPath}_{\gH_t}(u,v)$.
For the unweighted graphs studied in this work, its path dilation is
\begin{equation}
\pathDil_t(e)=|P_e|.
\label{eq:scaffold-dilation}
\end{equation}

For a path $P_f$ supporting $f=(u,v)$, let
$\operatorname{int}(P_f)=V(P_f)\setminus\{u,v\}$ denote its internal vertices.
For the current candidate paths, the edge- and node-congestion counts are
\[
\eCon_t(a)=|\{f\in\gI_t:a\in P_f\}|,
\qquad
\vCon_t(x)=|\{f\in\gI_t:x\in\operatorname{int}(P_f)\}|.
\]

We aggregate them along $P_e$ using power means ($\pathE,\pathN\ge1$): \\
\noindent
\begin{minipage}[H]{0.49\linewidth}
\small
\begin{equation}
\eConPath_t(e)
=
\left(
\frac{\sum_{a\in P_e}\eCon_t(a)^{\pathE}}
{|P_e|}
\right)^{1/\pathE};
\label{eq:scaffold-path-edge-congestion}
\end{equation}
\end{minipage}
\hfill
\begin{minipage}[H]{0.49\linewidth}
\footnotesize
\begin{equation}
\vConPath_t(e)
=
\left(
\frac{\sum_{x\in\operatorname{int}(P_e)}
\vCon_t(x)^{\pathN}}
{|\operatorname{int}(P_e)|}
\right)^{1/\pathN}.
\label{eq:scaffold-path-node-congestion}
\end{equation}
\end{minipage}

We set $\vConPath_t(e)=1$, when $\operatorname{int}(P_e)=\emptyset$; larger $\pathE$ or $\pathN$ values emphasize stronger bottlenecks.

Let $\pathDilMax{t}$, $\eConPathMax{t}$, and $\vConPathMax{t}$ denote the corresponding maxima over $\gI_t$. The normalized \sgt score is
\begin{equation}
\score_t(e)
=
\underbrace{
\left(
\frac{\pathDil_t(e)}
     {\pathDilMax{t}}
\right)^{\alpha}
}_{\text{dilation}}
\underbrace{
\left(
\frac{\eConPath_t(e)}
     {\eConPathMax{t}}
\right)^{\beta_E}
}_{\text{edge congestion}}
\underbrace{
\left(
\frac{\vConPath_t(e)}
     {\vConPathMax{t}}
\right)^{\beta_V}
}_{\text{node congestion}} .
\label{eq:scaffold-score}
\end{equation}
A zero exponent disables the corresponding component. For example,  $(1,1,0)$ gives the dilation--edge-congestion criterion, while $\beta_V>0$ additionally includes node congestion. The resulting rule replaces the exact global marginal gain with the local support score:
\begin{equation}
\underbrace{
e_t^{\mathrm{exact}}
=
\arg\max_{e\in\gI_t}\Delta_t(e)
}_{\text{exact marginal}}
\qquad\Longrightarrow\qquad
\underbrace{
e_t^\star
=
\arg\max_{e\in\gI_t}\score_t(e)
}_{\text{\sgt surrogate}},
\label{eq:scaffold-surrogate-selection}
\end{equation}
where $\score_t(e)$ measures the current structural deficit of candidate $e$, whereas $\Delta_t(e)$ also accounts for how inserting $e$ reroutes other candidates. \sgt partially captures the cross-candidate effects by refreshing paths and scores as $\gH_t$ grows. Even dilation-only reduction is non-submodular, so the standard submodular-greedy guarantee does
not apply; further analysis is given in Appendix~\ref{app:scaffold-nonsubmodular}.

\subsection{Family of \sgt Algorithms}
\label{subsec:scaffold-family}

All \sgt variants share three components: a support backbone $\gF$, the
support score in Eq.~\ref{eq:scaffold-score}, and a candidate-selection
policy. For feasible budgets, the spanning-forest backbone ensures that every
omitted edge has a supporting path, while the selection policy determines how
aggressively paths and scores are recomputed. In general, $\gF$ may be
deterministic, randomized, weight-aware, or low-stretch
\citep{Abraham2008LowStretch,Abraham2012Nearly}. For weighted graphs, \sgt
uses the supplied edge weights through the path metric.
A weighted formulation of \sgt using weighted path lengths and routing loads is given in Appendix~A.3. Since all benchmark
graphs in this work are unweighted, we use a deterministic spanning forest
(\texttt{SF}) together with randomized forests and fast
low-stretch approximations. 
These algorithms are novel support-theoretic sparsifiers.

\paragraph{Greedy reference.}
\sgte is the full-recomputation realization of
Eq.~\ref{eq:scaffold-surrogate-selection}. Starting from
$\gH_0=\gF$, it recomputes supporting paths, congestion statistics, and
candidate scores after every insertion, retains the highest-scoring edge,
and repeats until $|\gE_{\gH}|=q$. Algorithm~\ref{alg:scaffold-greedy}
gives the reference construction, while Figure~\ref{fig:algorithm}
illustrates one run.

\makeatletter
\@ifundefined{sgpairalg}{%
  \newsavebox{\sgpairalg}%
  \newsavebox{\sgpairfig}%
  \newlength{\sgpairht}%
  \newlength{\sgpairdp}%
}{}
\makeatother

\begin{figure}[!htbp]
  \centering

  \sbox{\sgpairalg}{%
    \begin{minipage}{0.545\linewidth}
      \setlength{\intextsep}{0pt}%
      \begin{algorithm}[H]
        \fontsize{8}{9.5}\selectfont
        \caption{\textsc{\sgte}$(\gG,\delta)$}
        \label{alg:scaffold-greedy}
        \algsetup{indent=1em,linenosize=\fontsize{7}{8}\selectfont}
        \begin{algorithmic}[1]
          \REQUIRE Graph $\gG=(\gV,\gE,\vw)$, retention ratio $\delta$
          \ENSURE Sparse support $\gH$
          \STATE $q\leftarrow\lceil\delta m\rceil$; choose spanning forest $\gF$
          \STATE $\gH_0\leftarrow\gF$;
            $\gI_0\leftarrow\gE\setminus\gE_{\gF}$; $t\leftarrow0$
          \WHILE{$|\gE_{\gH_t}|<q$}
            \STATE Compute $P_e$, $\pathDil_t(e)$, $\eConPath_t(e)$,
              $\vConPath_t(e)$ for $e\in\gI_t$
            \STATE Compute $\score_t(e)$ for $e\in\gI_t$
            \STATE $e_t^\star\leftarrow\arg\max_{e\in\gI_t}\score_t(e)$
            \STATE $\gH_{t+1}\leftarrow\gH_t\cup\{e_t^\star\}$;\\
              $\gI_{t+1}\leftarrow\gI_t\setminus\{e_t^\star\}$;
              $t\leftarrow t+1$
          \ENDWHILE
          \STATE \textbf{return} $\gH_t$
        \end{algorithmic}
      \end{algorithm}
    \end{minipage}}%

  \sbox{\sgpairfig}{%
    \includegraphics[width=0.417\linewidth]
      {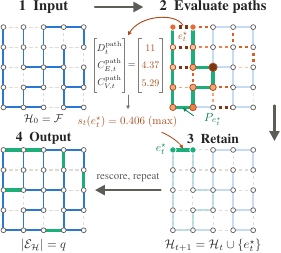}}%

  \setlength{\sgpairht}{\ht\sgpairalg}%
  \addtolength{\sgpairht}{\dp\sgpairalg}%
  \setlength{\sgpairdp}{\ht\sgpairfig}%
  \addtolength{\sgpairdp}{\dp\sgpairfig}%
  \ifdim\sgpairdp>\sgpairht
    \setlength{\sgpairht}{\sgpairdp}%
  \fi

  \begin{minipage}[t]{0.545\linewidth}
    \vspace{0pt}%
    \centering
    {\small\textbf{(a) Greedy reference}\par}
    \vspace{4pt}
    \parbox[c][\sgpairht][c]{\linewidth}{%
      \centering\usebox{\sgpairalg}}%
  \end{minipage}\hfill
  \begin{minipage}[t]{0.417\linewidth}
    \vspace{0pt}%
    \centering
    {\small\textbf{(b) Example construction}\par}
    \vspace{4pt}
    \parbox[c][\sgpairht][c]{\linewidth}{%
      \centering\usebox{\sgpairfig}}%
  \end{minipage}

  \caption{Starting from a spanning backbone, \sgt repeatedly scores the omitted edges, retains the highest-scoring candidate, and refreshes the support until the target budget is reached.}
  \label{fig:algorithm}
\end{figure}

\paragraph{Scalable variants.}
Full recomputation makes \texttt{Greedy} expensive, so we develop progressively cheaper realizations of the same scoring principle.
\texttt{Heap} caches paths and refreshes only affected candidates; \texttt{Batch} scores sampled candidate batches and inserts their top-$r$ edges between refreshes; \texttt{Fast} computes all scores once on the initial forest using Lowest Common Ancestor (\texttt{LCA}) based tree-prefix operations in $\mathcal{O}((m+n)\log n)$ time; and \texttt{Sample} converts scores from multiple backbones into sampling weights, allowing subsequent exactly
budgeted sparse views in $\mathcal{O}(m)$ time.
Table~\ref{tab:scaffold-variants} reports complexity, runtimes, and parameters that can be tuned for cost--quality trade-offs. Appendix~\ref{App:method} provides detailed algorithms, backbone choices, and runtime analysis.

\begin{table}[!htbp]
\centering
\caption{\sgt{} variants and measured runtime ($\delta=0.2$, 8 CPU workers).}
\label{tab:scaffold-variants}
\scriptsize
\setlength{\tabcolsep}{3pt}
\renewcommand{\arraystretch}{1.08}

\resizebox{\linewidth}{!}{%
\begin{tabular}{@{}llllrl@{}}
\toprule
\textbf{Variant}
& \textbf{Scoring / selection}
& \textbf{Complexity}
& \textbf{Graph $(n,m)$}
& \textbf{Time (s)}
& \textbf{Use} \\
\midrule

Greedy
& dynamic, all; best one
& $\mathcal{O}(TnC_{\mathrm{sp}})$
& $(400,\,3.1\mathrm{K})$
& 0.78
& reference \\

Heap
& dynamic, affected; heap max.
& $\mathcal{O}((n+T(\kappa+c))C_{\mathrm{sp}}+m\log m)$
& $(4\mathrm{K},\,39.9\mathrm{K})$
& 25.16
& small \\

Batch
& dynamic, $b$ sampled; top-$r$
& $\mathcal{O}(\lceil T/r\rceil bC_{\mathrm{sp}})$
& $(10\mathrm{K},\,250\mathrm{K})$
& 31.01
& medium/large \\

Fast
& static on $\gF$; top-$T$
& $\mathcal{O}((m+n)\log n)$
& $(200\mathrm{K},\,2.4\mathrm{M})$
& 2.30
& large \\

Sample
& static, $R$ forests; budgeted draw
& $\mathcal{O}(R(m+n)\log n)$ pre.; $\mathcal{O}(m)$ draw
& $(200\mathrm{K},\,2.4\mathrm{M})$
& 9.02
& large \\

\bottomrule
\end{tabular}%
}
\vspace{1pt}
\parbox{\linewidth}{\scriptsize
$T=q-|\gE_{\gF}|$; $C_{\mathrm{sp}}$ is one BFS/Dijkstra search.
Times are medians of three complete calls including forest construction and
selection. Sample uses $R=8$; Batch uses $b=512$, $r=64$.
Graph sizes differ across rows; see Appendix~\ref{app:scaffold-runtime}.}
\end{table}

\subsection{\sgt for GNN Training and Inference}
\label{subsec:scaffold-gnn}
Since \sgt depends on graph topology, support graphs are generated independently of labels and GNN parameters, and directly replace $\gG$ during message passing. 
In \sgt-1, only one $q$-edge support is constructed once and used throughout training and inference. In \sgt-$K$, a new $q$-edge support is generated every $\rho$ epochs, while only the $K$ strongest validation supports are retained in an online support bank. At inference, these supports
are combined as $\gH_{\mathrm{union}}=\bigcup_{k=1}^{K}\gH^{(k)}$ and evaluated in a single forward pass. Since $|\gE_{\gH_{\mathrm{union}}}|$ may exceed $q$, its realized inference edge count is reported separately. Appendix~\ref{app:gnntraining} provides further details.

\begin{figure}[!htbp]
\centering
\includegraphics[width=\linewidth]{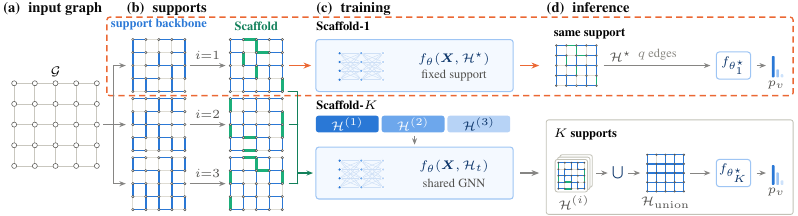}
\caption{\sgt-1 uses one fixed $q$-edge support for training and inference, whereas \sgt-$K$ periodically refreshes $q$-edge supports, retains the validation-best $K$, and performs one inference pass on their union.}
\label{fig:pipeline}
\end{figure}

\subsection{Structural Properties and Complexity}
\label{subsec:scaffold-theory}
Our theoretical analysis covers sparse-support properties rather than nonlinear GNN dynamics.
In the dilation-only setting, selecting large-detour edges bounds the maximum dilation remaining after
sparsification.
For an unweighted graph $\gG$ and a support $\gH\subseteq\gG$ preserving its connected components, dilation and edge congestion yield
$
\mL_{\gH}
\preceq
\mL_{\gG}
\preceq
\bigl(1+D(\gH)C_E(\gH)\bigr)\mL_{\gH}
$
\citep{Boman2003,Bern2006Support}.
This bounds effective-resistance distortion for connected graphs, while separate path arguments bound graph distances and edge-cut sizes. Node congestion gives an analogous neighborhood-concentration bound but
is not required by the Laplacian bound.
Complete statements, assumptions and proofs are provided in Appendix~\ref{app:scaffold-theory}.
A general approximation guarantee for the full dilation--congestion objective remains open, and these structural bounds do not imply monotonic improvement in downstream GNN accuracy.

\paragraph{Runtime and memory.}
For an $L$-layer GNN with hidden dimension $d$, message passing on a $q$-edge
\sgt support costs $\mathcal{O}(Lqd)$ per epoch, compared with
$\mathcal{O}(Lmd)$ on the full graph, while graph-dependent memory decreases
from $\mathcal{O}(m)$ to $\mathcal{O}(q)$. If one support costs
$C_{\sgt}$ to generate, refreshing every $\rho$ epochs gives amortized
generation cost $C_{\sgt}/\rho$. Support generation can additionally be
performed asynchronously with GNN training. Detailed construction costs and
scaling measurements are given in Appendix~\ref{app:scaffold-runtime}.

\section{Related Work}
\label{sec:related-work}
Support-graph theory~\citep{Vaidya1991,Boman2003,Bern2006Support} was developed for combinatorial preconditioners, with support-tree preconditioners and low-stretch spanning trees constructing sparse backbones with short replacement paths \citep{Maggs2005SupportTree,elkin2005lower,Abraham2008LowStretch, Abraham2012Nearly}. 
Graph spanners~\citep{Althofer1990Spanners} are closely related to the dilation component of \sgt, whereas \sgt follows the broader support-theoretic view by jointly controlling dilation and congestion under a fixed edge budget. 
To the best of our knowledge, \sgt is the first GNN sparsifier to explicitly use this joint criterion to construct the graph used directly for graph learning.
Broadly, sparsification approaches in GNNs can be grouped into: \emph{topology-based} sparsifiers using spectral,
degree, similarity, or traversal criteria~\citep{Spielman2011Spectral,
spielman2008graph,Hamann2016Structure,Voudigari2016Rank,Xu2007SCAN,
Satuluri2011Local,Leskovec2006Sampling},
\emph{semantic-based} methods that learn sparse structures from downstream
signals~\citep{Chen2021UnifiedLottery,Zhang2024GraphLotteryAutomated,
Zhang2025MoG}, and \emph{sampling-based} methods that reduce the computation
graph processed during training~\citep{Hamilton2017GraphSAGE,Chen2018FastGCN,
Zeng2020GraphSAINT,Chiang2019ClusterGCN}. \sgt is a topology-based approach that scores how well omitted edges are represented by paths in the retained support. A broader discussion of related work is provided in Appendix~\ref{app:extended-related}.

\section{Experimental Results}
\label{sec:experiments}

\textbf{Datasets and evaluation.}
We evaluate \sgt on 19 node-classification benchmarks: eight homophilic, six heterophilic, and five large-scale graphs. All graphs are made undirected and self-loop-free before sparsification. We report Accuracy for single-label classification and ROC-AUC for \textsc{Minesweeper}, \textsc{Questions}, and \textsc{ogbn-proteins}. Dataset statistics, splits, metrics, and training configurations are given in Appendix~\ref{app:experimental}. 

\textbf{Baselines and protocol.}
We compare against topology- and semantic-based methods together with Full-Graph, Spanning Forest (\texttt{SF}) and No-Graph references. 
We use the per-dataset configurations from TunedGNN~\citep{luo2024classic} and sparsifiers share the same splits, architecture, optimization, and training budget. 
Smaller graphs use $\delta\in\{0.3,0.5,0.7\}$ and large graphs use $\delta\in\{0.1,0.3,0.5\}$; 
below the spanning-forest floor ($q<n-\#\mathrm{components}(\gG)$), construction stops at the target budget $q$.
Unless specified, \sgt uses \texttt{batch}, \texttt{fast}, and \texttt{sample} at $(\alpha,\beta_E,\beta_V)=(1,1,0)$.
\sgt-1 uses one fixed $q$-edge support with deterministic spanning forest (\texttt{SF}) throughout training and inference, whereas \sgt-$K$ uses random spanning forests (\texttt{RandSF}) as a starting backbone and periodically refreshes $q$-edge supports while asynchronously computing support to ensure the GPU is not blocked for sparse computation. Full settings are detailed in Appendix~\ref{app:competing-method-settings}\footnote{\sgt with GNN:~\scaffoldgnnlink}.


\subsection{Predictive Performance under Sparsification}
\label{subsec:exp-quality}

\textbf{Comparison with existing sparsifiers.}
We summarize key performance in Table~\ref{tab:1andkshot-main-results-aggregated}. 
We compare sparsifiers at dataset-specific targets $\delta_d$ chosen at or above the spanning-forest floor, so the target budget is sufficient to preserve all connected components, with each method retaining exactly $q=\lceil\delta_d m\rceil$ edges.
\sgt-1 uses one fixed $q$-edge support throughout training and inference, whereas \sgt-$K$ refreshes $q$-edge supports during training and performs inference on the union of the $K$ best validation supports. \sgt-1 achieves the highest geometric mean among the matched-budget methods, while \sgt-$K$ can provide gains through support refresh.
The main table reports the validation-selected best of \texttt{Batch}, \texttt{Fast}, and \texttt{Sample} for each dataset.
Complete variant-wise results at other retention ratios are provided in Appendix~\ref{app:complete-numerical-results} (Tables~\ref{tab:appendix-full-results-homophilic}, \ref{tab:appendix-full-results-heterophilic}, and \ref{tab:appendix-full-results-large}).
Figure~\ref{fig:scaffold-cd-all-methods-pooled} summarizes the average-rank comparison~\citep{benavoli2016should} over $57$ dataset-budget evaluations, including \sgt at its target-budget and best-ratio settings, the full graph and other baselines.

\begin{figure}[!htbp]
  \centering
  \parbox{0.98\linewidth}{\scriptsize
Wilcoxon--Holm comparison~\citep{demsar2006statistical,benavoli2016should} over 19 datasets at three retention ratios (57 tasks). Lower rank is better; bars join groups with no significant pairwise differences at $\alpha=0.05$. \textbf{Best} uses \sgt's best ratio; \textbf{Target} uses the prescribed ratio.}
  \input{figures/Introduciton/cd_all-methods_pooled_wilcoxon-holm_wocaption_tikz}
  \caption{Wilcoxon--Holm average-rank comparison over 19 datasets and 3 edge-retention ratios.}
  \label{fig:scaffold-cd-all-methods-pooled}
\end{figure}

\begin{table*}[t]
\centering
\caption{Accuracy/ROC-AUC at dataset-specific targets $\delta_d\ge0.3$ above the spanning-forest floor. \sgt-1 is ranked at the matched budget; \sgt-$K$ is unranked.}
\label{tab:1andkshot-main-results-aggregated}
\fontsize{5.7pt}{5.9pt}\selectfont
\setlength{\tabcolsep}{0.48pt}
\renewcommand{\arraystretch}{0.90}

\vspace{1pt}
\begin{minipage}{0.99\textwidth}
\scriptsize
Green/blue/amber mark the top three matched-budget methods; references are unranked. Values are mean $\pm$ std.\ (\%); \textbf{GM}: geometric mean. For each dataset and \sgt setting, validation selects the best of \texttt{Batch}/\texttt{Fast}/\texttt{Sample}, marked $^{b/f/s}$. 
\sgt-$K$ arrows compare against \sgt-1.
$^{\dagger}$ROC-AUC; $^{\ddagger}$\textsc{slst} on \textsc{chameleon}/\textsc{squirrel} (\textsc{sf} otherwise). Full results: Appendix~\ref{app:complete-numerical-results}.
\end{minipage}
\end{table*}




\textbf{Performance across retention ratios.}
\label{para:scafvsfullgcn}
Figure~\ref{fig:sgtvstuned} compares \sgt-1 and \sgt-$K$ with the corresponding
full-graph Tuned-GCN as the retention ratio $\delta$ increases, mapped along $X$-axis.
Both variants approach full-graph performance as more edges are retained.
At $\delta=0.3$ and $0.5$, \sgt-1 reaches within one percentage point of full-graph performance on 9/19 and 13/19 datasets, respectively, while \sgt-$K$ reaches 12/19 and 16/19.

\begin{figure}[!t]
    \centering
    \includegraphics[width=\linewidth]{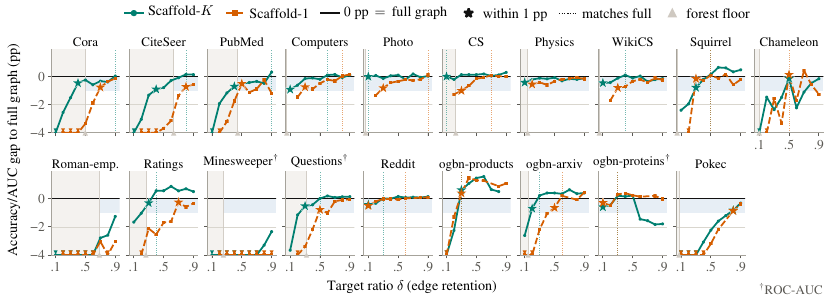}
   \caption{Performance gap of \sgt-1 and \sgt-$K$ to the full graph across $\delta$. Blue marks results within $1\%$; stars mark the first such $\delta$, and gray marks below the spanning-forest.}
    \label{fig:sgtvstuned}
\end{figure}



\subsection{Runtime and Memory Efficiency}
\label{subsec:exp-efficiency}
Figure~\ref{fig:runtime_profiling} shows that reducing the retained-edge ratio directly lowers both message-passing time and peak GPU memory on \textsc{Reddit} and \textsc{ogbn-products}. 
Even after including support graph construction, the reduced \texttt{Gather-Scatter/SpMM} workload yields lower end-to-end training cost at sparse budgets. 
The benefits are greatest at lower retention ratios, on large graphs, and over long runs, where per-epoch savings amortize sparsification overhead.

Figure~\ref{fig:runtime_all} further compares standalone sparsification and total end-to-end costs across large graphs: \sgt remains practical, while several more expensive structural or learned baselines incur substantially higher costs. We note that settings for \sgt and sampling methods can be adapted to trade off runtime versus quality. Here, we use the same settings as in the preceding accuracy experiments. Detailed runtimes, memory measurements, worker
scaling, and per-dataset results are reported in Appendix~\ref{app:runtime-details}.

\begin{figure}[!htbp]
    \centering
    \includegraphics[width=\linewidth]{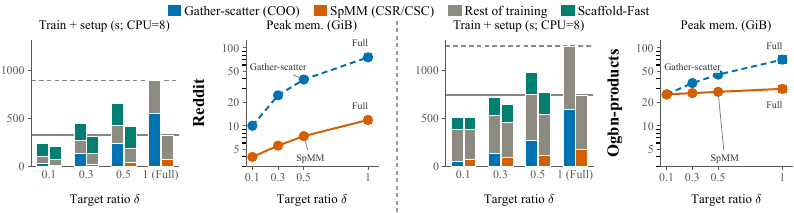}
    \caption{Runtime/memory profile of \sgtf-1 with \texttt{Gather-Scatter/SpMM} aggregation in comparison to the Full-GCN.}
    \label{fig:runtime_profiling}
\end{figure}

\begin{figure}[!htbp]
    \centering
    \includegraphics[width=\linewidth]{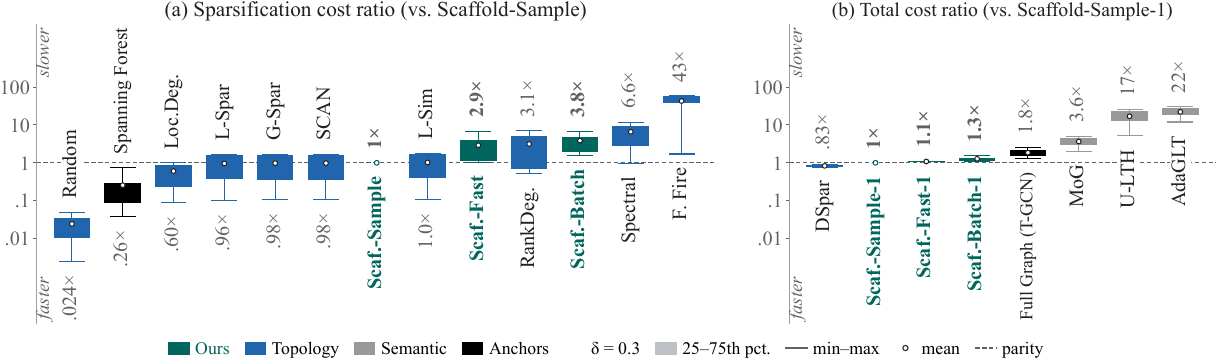}
    \caption{(a) Standalone sparsification cost relative to
    \sgtf, and (b) total end-to-end cost relative to \sgtsa-1 over $300$ epochs at $\delta=0.3$. More details are in Appendix~\ref{app:runtime-details}.}
    \label{fig:runtime_all}
\end{figure}


\subsection{Ablation and Structural Analysis}
\label{subsec:ablation}
\paragraph{Support-score components and structural quality.}
Figure~\ref{fig:components} compares components of Equation~\ref{eq:scaffold-global-objective} under the same support backbone (\texttt{SF}) and budget.
Prioritizing low dilation improves over random filling ($\emptyset$), which highlights the importance of short communication paths. Edge or node congestion alone can favor longer routes and is less effective, whereas combining dilation and edge congestion gives the strongest predictive association with test performance.
Node congestion captures a distinct theoretical failure mode in controlled examples (\S\ref{app:support-gnn-scenarios}), but provides limited additional benefit on these GNN benchmarks.
This pattern is not specific to \sgt. In Figure~\ref{fig:dilconvsaccuracy}, all sparsifiers share the same spanning forest and edge budget, isolating their edge-selection rules.
Methods that better reduce dilation and edge congestion achieve higher GNN accuracy, with dilation showing the strongest association. 
This may partly explain why Random remains competitive once connectivity is guaranteed, as random edges can shorten detours and partially alleviate structural bottlenecks.

\begin{figure}[!t]
    \centering
    \includegraphics[width=\linewidth]
    {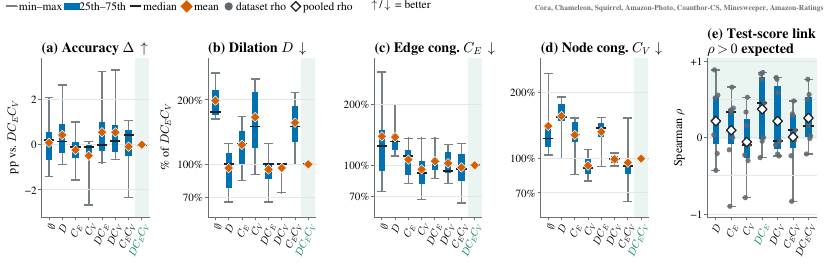}
    \caption{Dilation achieves the best individual performance, while combining
    dilation and edge congestion gives the strongest accuracy--structure
    trade-off and association with test performance.}
    \label{fig:components}
\end{figure}

\begin{figure}[!htbp]
    \centering
    \includegraphics[width=\linewidth]
    {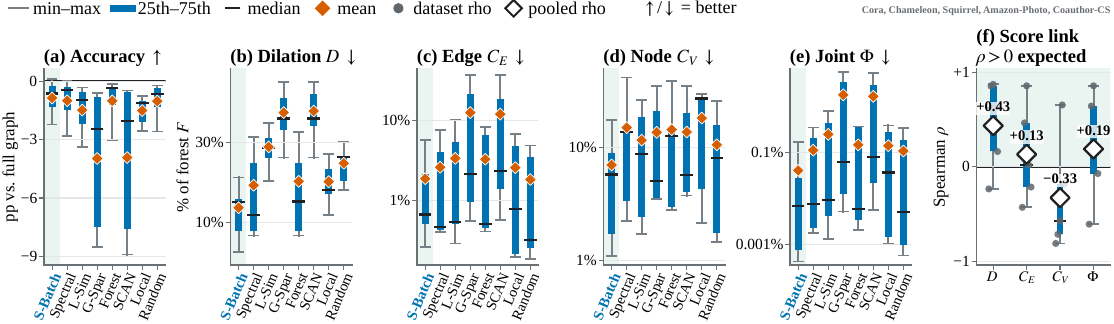}
    \caption{Support quality at $\delta=0.7$ from a common spanning-forest
    backbone. Lower dilation and edge congestion generally correspond to higher
    GNN accuracy.}
    \label{fig:dilconvsaccuracy}
\end{figure}

\paragraph{Scaffold variants and support backbones.}
Figure~\ref{fig:variants_backbones} shows a clear quality-cost trade-off: \texttt{Heap} and \texttt{Batch} provide stronger structural refinement at higher runtime, while \texttt{Fast} and \texttt{Sample} trade some structural quality for much lower construction cost. Predictive performance is not strictly determined by structural score, indicating that backbone choice and stochasticity also matter.

\begin{figure}[!htbp]
    \centering
    \includegraphics[width=\linewidth]    {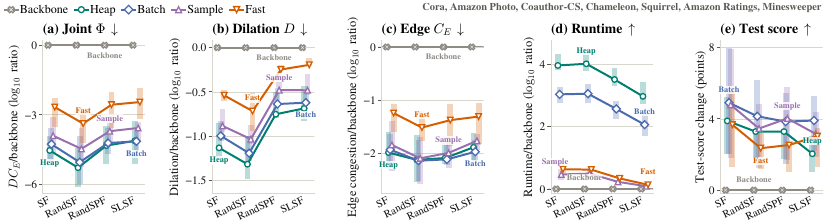}
    \caption{Quality--cost trade-offs across variants and backbones. \texttt{SF}: spanning forest; \texttt{RandSF}: random spanning forest; \texttt{RandSPF}: random shortest-path forest; \texttt{SLSF}: scalable low-stretch forest.}
    \label{fig:variants_backbones}
\end{figure}

\paragraph{Additional Ablations and Analyses.}
Appendix~\ref{app:ablation} extends the main analysis with \texttt{Greedy} variant and backbone comparisons
(\S\ref{app:variant-backbone}),
path-congestion aggregation choices ($p_E,p_V$)
(\S\ref{app:pathcongestion}), and analyses of inference supports $K$ and full-graph evaluation (\S\ref{app:k-sensitivity}).
We further study support-refresh frequency (\S\ref{app:refresh-frequency}),
synchronous versus asynchronous support generation (\S\ref{app:sync-async}), 
the interaction between training-time support refresh and inference topology (\S\ref{app:train-inference}),
and whether repeated structure-aware supports provide benefits beyond uniform random resampling
(\S\ref{app:sparsesamplerefresh}).
Additional experiments examine structural implications
(\S\ref{app:structural-implications}) and compatibility across GNN architectures (\S\ref{app:other-gnns}).

\section{Conclusions}
We introduced \sgt, a novel unsupervised topology-based graph sparsification framework that uses the support-theoretic notions of dilation and congestion to construct sparse communication graphs for GNNs.
Across benchmarks, \sgt maintains strong performance at low edge budgets while reducing memory and end-to-end training cost. These results indicate that, at a fixed edge budget, how omitted connections are represented by the retained graph matters for GNN performance.
Our theory characterizes the resulting structural distortion, but a general approximation guarantee for the full dilation-congestion objective remains open. The current evaluation focuses on static, undirected, unweighted node-classification graphs. Extensions of \sgt to task-aware, localized, and broader graph-learning settings are left for future work.


\clearpage
\newpage

\section*{Acknowledgments}
This work was supported in part by the U.S. Department of Energy, Office of Science,
Office of Advanced Scientific Computing Research, through the Competitive Portfolios
program at Pacific Northwest National Laboratory (PNNL). PNNL is a multi-program
national laboratory operated for the U.S. Department of Energy by Battelle Memorial
Institute under Contract No.~DE-AC05-76RL01830.

\section*{Acknowledgments}

This work was supported in part by the U.S. Department of Energy, Office of Science,
Office of Advanced Scientific Computing Research, through the Competitive Portfolios
program at Pacific Northwest National Laboratory (PNNL). PNNL is a multi-program
national laboratory operated for the U.S. Department of Energy by Battelle Memorial
Institute under Contract No.~DE-AC05-76RL01830.

\section*{Author Contributions}
Mahantesh M. Halappanavar originated the idea of bringing support-graph-theoretic principles to graph sparsification for GNNs and helped shape the conceptual direction of \sgt. Siddhartha Shankar Das led the development of the methodology and algorithms, software implementation, experimental evaluation, and manuscript preparation. Sai Karthik Navuluru assisted with the development and implementation of the proposed methods. All authors contributed to the work and reviewed the manuscript.

\section*{AI Use Disclosure}
Generative AI tools were used to assist with polishing the manuscript and
supporting research execution, including software implementation and experimental workflows. All AI-assisted content and code were
reviewed, revised, and validated by the authors. The authors independently
verified the reported results and claims and take full responsibility for the
final content of this work.

\section*{Reproducibility Statement}
We provide two code repositories to support reproducibility. (i)~\scaffoldlink~contains the standalone, installable \sgt package implementing the proposed graph sparsification methods. (ii)~\scaffoldgnnlink~contains the integration of \sgt with the GNN benchmarks used in this work, together with implementations of the evaluated baselines and the dataset-specific configurations and hyperparameter settings used to reproduce the reported results. Appendix~\ref{app:experimental} provides additional details on the datasets, training procedures, and evaluation protocols. All support, variant, and checkpoint selection is performed exclusively using validation data; test labels are never used for model or method selection.


\bibliography{iclr2027_conference}
\bibliographystyle{iclr2027_conference}
\clearpage
\newpage
\appendix

\startcontents[appendix]
\section*{Appendix Contents}
\printcontents[appendix]{}{1}{\setcounter{tocdepth}{2}}
\section{Supplementary Preliminaries}
\label{app:preliminaries}
This section provides additional support-theoretic background, weighted-path conventions, and examples illustrating the role of support quantities in GNN message passing.

\subsection{Notations}
\begin{table*}[!htbp]
\centering
\caption{Core notation used in \sgt.}
\label{tab:notations}
\footnotesize
\setlength{\tabcolsep}{5pt}
\renewcommand{\arraystretch}{1.04}

\begin{tabularx}{\textwidth}{@{}
  >{\raggedright\arraybackslash}p{0.28\textwidth}
  >{\raggedright\arraybackslash}X
@{}}
\toprule
\textbf{Symbol} & \textbf{Meaning} \\
\midrule

$\gG=(\gV,\gE,\vw)$; $n,m$
& Input graph with $n=|\gV|$ nodes and $m=|\gE|$ edges. \\

\rowcolor[gray]{0.975}
$\gH_t$; $\gF$
& Current sparse support and initial spanning-forest backbone. \\

$\delta$; $q=\lceil\delta m\rceil$
& Edge-retention ratio and target edge budget. \\

\rowcolor[gray]{0.975}
$\gI_t=\gE\setminus\gE_{\gH_t}$
& Candidate edges omitted from the current support. \\

$P_e$
& Supporting path of omitted edge $e$ in $\gH_t$. \\

\rowcolor[gray]{0.975}
$D(\gH),\,C_E(\gH),\,C_V(\gH)$
& Maximum dilation, edge congestion, and node congestion. \\

$\pathDil_t(e),\,\eConPath_t(e),\,\vConPath_t(e)$
& Candidate path dilation and edge/node congestion scores. \\

\rowcolor[gray]{0.975}
$\alpha,\beta_E,\beta_V$
& Weights of the three support-score components. \\

$\score_t(e)$
& Normalized \sgt score of candidate edge $e$. \\

\rowcolor[gray]{0.975}
$\gH_{\mathrm{union}}$
& Union of retained supports used for multi-support inference. \\

\bottomrule
\end{tabularx}
\end{table*}

\subsection{Support-Theoretic Background}
\label{app:support-theory}
Support graph theory was developed for constructing sparse combinatorial
preconditioners for symmetric diagonally dominant linear systems~\citep{Vaidya1991,Boman2003,Bern2006Support}. For graph Laplacians
$\mL_{\gG}$ and $\mL_{\gH}$, the support number is defined as
\[
\sigma(\mL_{\gG},\mL_{\gH})
=
\min\{\tau:\mL_{\gG}\preceq\tau\mL_{\gH}\},
\]
where smaller $\tau$ indicates that the sparse graph $\gH$ provides a
closer spectral support for $\gG$.

For graph Laplacians, this algebraic relation also admits a combinatorial
interpretation. An edge omitted from $\gH$ is represented by a supporting
path in $\gH$ connecting the same pair of endpoints. The quality of this
representation depends on two complementary properties: (i) how long the
supporting path becomes relative to the omitted edge, and (ii) how strongly
supporting paths overlap on the retained graph. These effects are captured
by \emph{dilation} and \emph{congestion}, respectively. 
Classical congestion--dilation arguments relate their joint effect to the quality of the support graph~\citep{Vaidya1991,Boman2003,Bern2006Support}.

\sgt builds directly on this perspective. Rather than using the sparse
support $\gH$ as a preconditioner for solving a linear system, we use it as
the communication graph for GNN message passing. An omitted edge is therefore
well supported when its endpoints remain connected by a short path and when
that path does not pass through heavily shared parts of the retained graph.
This motivates the dilation--congestion criteria used throughout \sgt.

\subsection{Weighted Formulation of \sgt}
\label{app:weighted-support}
The main text uses the unweighted formulation used in our experiments.
More generally, let $\ell_e>0$ denote the path length of edge $e$ and
$\omega_e>0$ its routing load. For an omitted edge $e=(u,v)$, let
\[
P_e=\operatorname{ShortestPath}_{\gH}(u,v;\ell),
\qquad
D_{\ell}(\gH)
=
\max_{e\in\gI}
\frac{\sum_{a\in P_e}\ell_a}{\ell_e}.
\]
Weighted edge and node congestion are
\[
C_E^{\omega}(\gH)
=
\max_{a\in\gE_{\gH}}
\sum_{\substack{e\in\gI\\a\in P_e}}\omega_e,
\qquad
C_V^{\omega}(\gH)
=
\max_{x\in\gV}
\sum_{\substack{e\in\gI\\x\in\operatorname{int}(P_e)}}\omega_e .
\]
The corresponding weighted objective is
\[
\Phi_{\ell,\omega}(\gH)
=
(1+D_{\ell}(\gH))^\alpha
(1+C_E^{\omega}(\gH))^{\beta_E}
(1+C_V^{\omega}(\gH))^{\beta_V}.
\]
Setting $\ell_e=\omega_e=1$ recovers
Equation~\ref{eq:scaffold-global-objective}.

The candidate-wise surrogate extends analogously. For the current candidate
set $\gI_t$, define
\[
D_t^{\mathrm{path},\ell}(e)
=
\frac{\sum_{a\in P_e}\ell_a}{\ell_e},
\qquad
c_{E,t}^{\omega}(a)
=
\sum_{\substack{f\in\gI_t\\a\in P_f}}\omega_f,
\qquad
c_{V,t}^{\omega}(x)
=
\sum_{\substack{f\in\gI_t\\x\in\operatorname{int}(P_f)}}\omega_f.
\]
The path-congestion scores are obtained from these weighted loads using the
same power-mean aggregation as
Equations~\ref{eq:scaffold-path-edge-congestion}--\ref{eq:scaffold-path-node-congestion},
and Equation~\ref{eq:scaffold-score} is applied unchanged after normalization.
Thus, $\ell_e=\omega_e=1$ recovers the unweighted \sgt surrogate used
throughout this work.

The spanning-forest formulation requires
$q\ge n-c(\gG)$ to preserve every connected component. If
$q<n-c(\gG)$, no support can simultaneously satisfy the requested budget and
preserve all components; in this case, we enforce the budget and do not invoke
guarantees requiring a spanning support.

\subsection{Why the Support Quantities Matter for GNNs}
\label{app:support-gnn-scenarios}

\begin{figure}[!t]
    \centering
    \includegraphics[width=\linewidth]
    {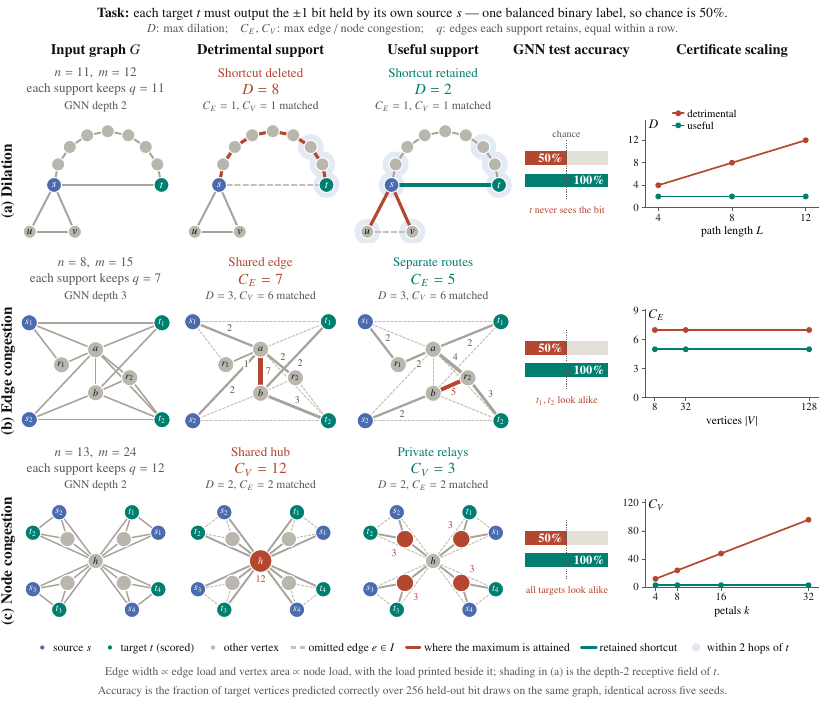}
    \caption{Controlled roles of dilation and congestion in GNN message passing.
    Each row varies one support quantity while matching the other two at the
    same edge budget. High dilation causes finite-hop under-reaching, while
    high edge or node congestion creates shared communication bottlenecks.
    }
    \label{fig:appendix-dil-con-scenario}
\end{figure}

Equation~\ref{eq:scaffold-global-objective} combines dilation, edge
congestion, and node congestion in the \sgt objective. Classical support theory
directly motivates the first two quantities, while node congestion is included
to capture concentration at intermediate vertices during message passing.
Although node congestion provides limited additional benefit on our
node-classification benchmarks, it represents a distinct communication failure
mode that can arise independently of edge congestion.

Figure~\ref{fig:appendix-dil-con-scenario} gives controlled
node-classification examples that isolate the three quantities. In each row,
the detrimental and useful supports retain the same number of edges and are
matched on the other two support quantities, so the structural difference is
restricted to the quantity being studied. Each target must predict the
$\pm1$ bit associated with its source, giving chance accuracy of $50\%$.

For \emph{dilation}, deleting a shortcut increases the source--target detour
from $D=2$ to $D=8$ while both congestion measures remain fixed. The source
then lies outside the two-layer receptive field, reducing accuracy from
$100\%$ to chance. This illustrates finite-hop under-reaching caused by long
supporting paths.

For \emph{edge congestion}, the two supports have the same dilation and node
congestion, but the detrimental support routes more omitted relations through
a shared edge, increasing $C_E$ from $5$ to $7$. The resulting bottleneck
makes the paired targets indistinguishable, whereas separating the routes
recovers perfect prediction.

For \emph{node congestion}, dilation and edge congestion remain fixed while
the detrimental support routes multiple relations through a common hub,
increasing $C_V$ from $3$ to $12$. The shared intermediate representation
causes the targets to become indistinguishable, whereas private relays avoid
this node-level bottleneck.

These constructions show that dilation, edge congestion, and node congestion
capture complementary support-level communication failures, even when their
empirical importance differs across benchmark families.

\section{Proofs and Structural Guarantees}
\label{app:scaffold-theory}

The guarantees below are stated for the unweighted setting used in our
experiments and in the main-text objective. 
We assume that $\gH$ preserves the connected components of
$\gG$, so every omitted edge has a supporting path. The weighted formulation
is given in Appendix~\ref{app:weighted-support}; a weighted extension of the
Laplacian bound is provided in the following sections.

\subsection{Optimization of the Global Objective is NP-Hard}
\label{app:nphardproof}

\begin{theorem}[Hardness of budgeted support sparsification]
\label{thm:scaffold-hardness}
For any $\alpha>0$, optimizing
Equation~\ref{eq:scaffold-global-objective} is NP-hard even for connected
unweighted graphs with $\beta_E=\beta_V=0$. The hardness already holds for
$q=n-1$.
\end{theorem}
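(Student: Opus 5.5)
With $\beta_E=\beta_V=0$ the objective is $(1+D(\gH))^{\alpha}$. For $\alpha>0$ this is strictly increasing in $D(\gH)$, so minimizing $\Phi$ is equivalent to minimizing the dilation $D(\gH)$. When $q=n-1$ and $\gG$ is connected, a connected spanning subgraph with $n-1$ edges is exactly a spanning tree. The problem therefore becomes: find a spanning tree $T$ of $\gG$ minimizing $\max_{(u,v)\in\gE\setminus\gE_T} d_T(u,v)$. This is the \emph{minimum max-stretch spanning tree} problem, also called tree $t$-spanner. For every tree edge the stretch is $1$, so the maximum over omitted edges coincides with the max stretch whenever some edge is omitted. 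Its decision version asks whether $\gG$ admits a tree $t$-spanner. Cai and Corneil (1995) proved this NP-complete for every fixed $t\ge 4$. I plan to reduce from that result, and I will make the reduction explicit rather than merely cite it.

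The proof has three steps.

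\emph{Step 1 (monotonicity).} Show that $\Phi$-optimality is equivalent to dilation-optimality. Any algorithm computing $\gH^\star$ therefore outputs a spanning tree of minimum max-stretch.

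\emph{Step 2 (feasible set).} Identify the feasible set at $q=n-1$ with the spanning trees of $\gG$. With exactly $n-1$ edges, "spans each component" forces acyclicity and connectivity.

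\emph{Step 3 (decision reduction).} Given an instance $(\gG,t)$ of tree $t$-spanner, solve the optimization and compare $D(\gH^\star)$ with $t$. For an unweighted graph, the shortest supporting path in a tree is the unique tree path, so $|P_e|=d_T(u,v)$ and $D$ is exactly the max stretch over non-tree edges. Hence $\gG$ has a tree $t$-spanner if and only if $D(\gH^\star)\le t$. A polynomial-time optimizer would then decide an NP-complete problem. One edge case needs handling: if $\gG$ is itself a tree, $\gI=\emptyset$ and $D$ is $0$ (max over the empty set). That case is trivially decidable and does not affect hardness.

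The main obstacle is whether to rely on the citation or give a self-contained gadget reduction. If a self-contained proof is required, I would try a reduction from 3-SAT in the spirit of Cai--Corneil for $t=4$. Each variable gets a gadget admitting exactly two low-stretch tree choices, corresponding to true and false. Each clause gets a vertex whose non-tree edges have stretch at most $4$ only if at least one incident literal gadget is oriented "true". The delicate part is verifying that no unintended spanning tree achieves stretch $\le 4$. I expect that to be the hardest check, so I would default to citing the established NP-completeness and keep Steps 1--3 as the rigorous core.
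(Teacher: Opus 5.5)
Your proposal is correct and follows the paper's route: you use monotonicity of $(1+D)^{\alpha}$, identify the feasible set at $q=n-1$ with the spanning trees of $\gG$, and reduce from Cai--Corneil's NP-complete tree $t$-spanner (minimum max-stretch spanning tree) problem. The one step the paper spells out that you only assert is that, for unweighted graphs, the all-pairs stretch of a spanning tree $\gT$ equals $\max_{(u,v)\in\gE} d_{\gT}(u,v)=\max\{1,D(\gT)\}$, since each edge of a shortest $\gG$-path can be replaced by its tree path; this is what justifies $D(\gH^\star)\le t$ exactly when $\gG$ has a tree $t$-spanner.
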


\begin{proof}~
Set $\beta_E=\beta_V=0$ and $q=n-1$. Since $\gG$ is connected, every feasible
support is a spanning tree $\gT$, and minimizing the objective is equivalent
to minimizing $D(\gT)$.
For a spanning tree, the maximum stretch satisfies
\[
\max_{u\neq v}
\frac{\operatorname{dist}_{\gT}(u,v)}
     {\operatorname{dist}_{\gG}(u,v)}
=
\max_{(u,v)\in\gE}
\operatorname{dist}_{\gT}(u,v)
=
\max\{1,D(\gT)\}.
\]
The first equality follows by taking a shortest $u$--$v$ path in $\gG$ and
replacing each of its edges by the corresponding path in $\gT$; hence the
stretch of any vertex pair is bounded by the maximum stretch of an original
edge. The second equality follows because retained tree edges have stretch
$1$, while $D(\gT)$ is the maximum stretch over omitted edges.

Thus, minimizing $D(\gT)$ is equivalent to the Minimum Maximum-Stretch
Spanning Tree problem, which is NP-hard
\citep{Cai1995TreeSpanners,emek2009approximating}.
\end{proof}

\subsection{Dilation Reduction is Non-Submodular}
\label{app:scaffold-nonsubmodular}

\begin{proposition}[Dilation reduction is not submodular]
\label{prop:scaffold-nonsubmodular}
For a fixed backbone $\gF$, let
\[
f(S)=D(\gF)-D(\gF\cup S),
\]
where dilation is measured over the edges that remain omitted.
Then $f$ is not submodular in general.
\end{proposition}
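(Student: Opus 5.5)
The plan is to exhibit an explicit small counterexample. Recall that $f$ is submodular iff for all $A\subseteq B$ and $x\notin B$,
\[
f(A\cup\{x\})-f(A)\ \ge\ f(B\cup\{x\})-f(B).
\]
The idea is to find two omitted edges that each attain the maximum dilation and whose supporting paths do not interact. Adding either edge alone then leaves the other as a witness of the same maximum, so neither singleton reduces $D$. Adding both removes both witnesses, and $D$ drops to the dilation of some harmless third edge. The marginal gain of the second edge is therefore larger after the first has been added, which is the opposite of diminishing returns.

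\textbf{Construction.} Fix an integer $L\ge 3$ and let $N=2L+2$. Let $\gF$ be the path $v_0v_1\cdots v_N$, and let
\[
\gG=\gF\cup\{e_1,e_2,e_3\},\qquad e_1=(v_0,v_L),\quad e_2=(v_L,v_{2L}),\quad e_3=(v_{2L},v_{2L+2}).
\]
Then $\gG$ is connected and unweighted, and $\gF$ is a spanning tree of it. In $\gF$ the supporting paths of $e_1,e_2,e_3$ are the unique subpaths between their endpoints, of lengths $L$, $L$ and $2$. These subpaths are pairwise edge-disjoint. Hence $D(\gF)=L$ and $f(\emptyset)=0$.

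\textbf{Computing the values.} I then compute $D(\gF\cup S)$ for $S=\{e_1\}$, $\{e_2\}$ and $\{e_1,e_2\}$, using shortest paths in the current support as in the definition.
\begin{itemize}
\item For $S=\{e_1\}$: the chord $e_1$ only creates the cycle on $v_0,\dots,v_L$. Every $v_L$--$v_{2L}$ path still uses exactly the edges $v_Lv_{L+1},\dots,v_{2L-1}v_{2L}$. So the omitted edge $e_2$ still has dilation $L$, giving $D(\gF\cup\{e_1\})=L$ and $f(\{e_1\})=0$.
\item For $S=\{e_2\}$: symmetrically, $e_2$ does not shorten the $v_0$--$v_L$ path, so $e_1$ keeps dilation $L$ and $f(\{e_2\})=0$.
\item For $S=\{e_1,e_2\}$: the only omitted edge is $e_3$. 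Its dilation is still $2$, because neither chord touches the segment $v_{2L}v_{2L+1}v_{2L+2}$. So $D=2$ and $f(\{e_1,e_2\})=L-2$.
\end{itemize}

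\textbf{Conclusion.} Take $A=\emptyset$, $B=\{e_1\}$ and $x=e_2$. Then
\[
f(A\cup\{x\})-f(A)=0\ <\ L-2=f(B\cup\{x\})-f(B),
\]
since $L\ge 3$, so $f$ is not submodular. In fact the gap $L-2$ can be made arbitrarily large.

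\textbf{Main obstacle.} The only delicate point is verifying that adding one chord does not shorten the supporting path of the other omitted edges. This holds because the chords lie on edge-disjoint segments of a path, so every cycle they create stays inside its own segment, and any route between the other endpoints must traverse the same path edges as before. A secondary point is the convention for $D$ when no edges remain omitted. The extra edge $e_3$ is included precisely to sidestep this: $D(\gF\cup\{e_1,e_2\})$ is then a genuine maximum over a nonempty set.
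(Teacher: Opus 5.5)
Your proposal is correct and follows essentially the same route as the paper. Both arguments use two omitted edges of equal maximum dilation whose supporting paths do not interact, a third omitted edge of dilation $2$ so that the final maximum is over a nonempty set, and the choice $A=\emptyset$, $B=\{e_1\}$, $x=e_2$; the paper builds this from three branches at a common root (dilations $3,3,2$, gap $1$), whereas your path-with-chords version is parametrized by $L$ and shows the violation gap $L-2$ can be made arbitrarily large.
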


\begin{proof}~
Consider three branches
\[
r-a_1-a_2-a_3,\qquad
r-b_1-b_2-b_3,\qquad
r-c_1-c_2,
\]
with candidate edges
$e_a=(r,a_3)$, $e_b=(r,b_3)$, and $e_c=(r,c_2)$.
Their initial dilations are $3,3,2$.

Let $A=\emptyset$, $B=\{e_a\}$, and $e=e_b$.
Adding $e_b$ to $A$ leaves $e_a$ with dilation $3$, so
\[
f(A\cup\{e_b\})-f(A)=0.
\]
After $e_a$ is already retained, adding $e_b$ reduces the maximum remaining
dilation from $3$ to $2$, giving
\[
f(B\cup\{e_b\})-f(B)=1.
\]
Hence the marginal gain increases as the selected set grows, violating the
diminishing-returns condition.
\end{proof}

Thus, the standard submodular-greedy approximation guarantee does not apply.

\subsection{Guarantee for Dilation-Only Selection}
\label{app:returnedsupportproof}

The following result applies only when candidate selection is based on
dilation; it does not directly apply when congestion changes the ordering.

\begin{theorem}[Dilation-only returned-support bound]
\label{thm:scaffold-returned-support}
Let $\gF$ be a spanning-forest backbone of an unweighted graph, with
$|\gE_{\gF}|\le q<m$, and let
\[
T=q-|\gE_{\gF}|.
\]
Order the initial candidate dilations as
$
D_{(1)}\ge D_{(2)}\ge\cdots
$
and define
\[
S_{\gF}
=
\sum_{e\in\gE\setminus\gE_{\gF}}
\pathDil_{\gF}(e).
\]
If either
(i) each insertion chooses a candidate of maximum current dilation, or
(ii) the $T$ candidates with largest initial dilation are selected,
then
\[
D(\gH_T)
\le
D_{(T+1)}
\le
\frac{S_{\gF}}{T+1}.
\]
\end{theorem}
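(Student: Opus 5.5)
The argument rests on one monotonicity fact: \emph{adding edges to a support can only shorten supporting paths}. If $\gH\subseteq\gH'$ and $e=(u,v)$ is omitted from both, then $\operatorname{dist}_{\gH'}(u,v)\le\operatorname{dist}_{\gH}(u,v)$, since every path in $\gH$ is also a path in $\gH'$. Because $P_e$ is a shortest path in the current support, every candidate $e$ still omitted from $\gH_t\supseteq\gF$ satisfies
\[
\pathDil_t(e)\le\pathDil_{\gF}(e).
\]
The spanning-forest backbone keeps every omitted edge connected, so all these dilations are finite. With this fact, $D(\gH_T)\le D_{(T+1)}$ follows by a counting argument for each selection rule, and the second inequality is an averaging (Markov-type) bound.

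\textbf{Case (ii), static top-$T$.} The remaining omitted edges $\gI_T$ are exactly the candidates whose initial dilation has rank $T+1$ or worse, with ties broken arbitrarily. Each $e\in\gI_T$ therefore has $\pathDil_{\gF}(e)\le D_{(T+1)}$. By monotonicity its final dilation is at most this value, so taking the maximum over $\gI_T$ gives $D(\gH_T)\le D_{(T+1)}$. If $T+1$ exceeds the number of candidates, then $\gI_T=\emptyset$ and the bound holds vacuously; since $q<m$, this does not occur.

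\textbf{Case (i), dynamic max-dilation greedy.} Let $e^\ast\in\gI_T$ attain $D(\gH_T)$. I will show that the $T$ edges $e_0,\dots,e_{T-1}$ selected before time $T$, together with $e^\ast$, form $T+1$ distinct candidates whose \emph{initial} dilations are all at least $D(\gH_T)$. That forces $D_{(T+1)}\ge D(\gH_T)$.
\begin{itemize}
\item At step $t$, $e^\ast$ was still a candidate, so the greedy choice gives $\pathDil_t(e_t)\ge\pathDil_t(e^\ast)$.
\item Monotonicity gives $\pathDil_t(e^\ast)\ge\pathDil_T(e^\ast)=D(\gH_T)$, because $e^\ast$ remains omitted through time $T$ and dilations can only decrease.
\item Monotonicity also gives $\pathDil_{\gF}(e_t)\ge\pathDil_t(e_t)$.
\end{itemize}
Chaining these yields $\pathDil_{\gF}(e_t)\ge D(\gH_T)$ for every selected $e_t$, and trivially $\pathDil_{\gF}(e^\ast)\ge D(\gH_T)$. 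The $(T+1)$-th largest initial dilation is therefore at least $D(\gH_T)$. This chaining is the only delicate step: one must compare against $e^\ast$'s dilation at time $t$, not at time $T$, and use that dilation is nonincreasing along the process. It is also why the result is restricted to dilation-only selection.

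\textbf{Averaging bound.} All dilations are nonnegative, and the top $T+1$ values are each at least $D_{(T+1)}$. Hence
\[
(T+1)\,D_{(T+1)}\le\sum_{i\le T+1}D_{(i)}\le S_{\gF},
\]
which gives $D_{(T+1)}\le S_{\gF}/(T+1)$.
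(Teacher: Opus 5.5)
Your proposal is correct and takes essentially the same route as the paper: monotonicity of shortest-path dilation under edge insertion, the immediate static case, and a counting argument for the dynamic greedy case followed by the averaging bound $(T+1)D_{(T+1)}\le S_{\gF}$. The only differences are cosmetic. You argue directly from an edge $e^\ast$ attaining $D(\gH_T)$, whereas the paper phrases the same chain as a contradiction against the threshold $\theta=D_{(T+1)}$, and you also check explicitly that $q<m$ makes $\gI_T\neq\emptyset$ and $D_{(T+1)}$ well defined.
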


\begin{proof}~
Adding retained edges cannot increase shortest-path dilation.
Let $\theta=D_{(T+1)}$.

For static selection, every remaining candidate initially has dilation at
most $\theta$, and therefore also has final dilation at most $\theta$.

For dynamic selection, suppose some edge remaining after $T$ insertions has
dilation greater than $\theta$. Its dilation was then greater than $\theta$
at every earlier step. Since each step chooses a maximum-dilation candidate,
all $T$ selected edges also had current, and hence initial, dilation greater
than $\theta$. Together with the remaining edge, this gives $T+1$ initial
candidates with dilation greater than $D_{(T+1)}$, a contradiction.
Thus,
\[
S_{\gF}\ge (T+1)D_{(T+1)},
\]
which gives the second inequality.
\end{proof}

This result controls the largest remaining detour, but not maximum
congestion, which motivates including congestion in the full \sgt score.

\subsection{Dilation and Edge Congestion Bound Structural Distortion}
\label{app:supportdistortionproof}

\begin{theorem}[Dilation--congestion support bound]
\label{thm:scaffold-gnn-tradeoff}
Let $\gH\subseteq\gG$ be an unweighted support preserving the connected
components of $\gG$. Then
\[
\mL_{\gH}
\preceq
\mL_{\gG}
\preceq
\tau_{\gH}\mL_{\gH},
\qquad
\tau_{\gH}
=
1+D(\gH)C_E(\gH).
\]
\end{theorem}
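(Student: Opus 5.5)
The plan is to write both Laplacians as sums of edge Laplacians and split the full graph's edges into retained and omitted ones. Let $\mL_a=(\ve_u-\ve_v)(\ve_u-\ve_v)^\top$ denote the Laplacian of a single edge $a=(u,v)$. Then
\[
\mL_{\gG}=\sum_{a\in\gE_{\gH}}\mL_a+\sum_{e\in\gI}\mL_e=\mL_{\gH}+\mL_{\gI}.
\]
The lower bound $\mL_{\gH}\preceq\mL_{\gG}$ is immediate, because $\mL_{\gI}$ is a sum of positive semidefinite rank-one terms. For the upper bound it then suffices to prove
\[
\mL_{\gI}\preceq D(\gH)\,C_E(\gH)\,\mL_{\gH},
\]
since adding $\mL_{\gH}$ to both sides gives the factor $\tau_{\gH}=1+D(\gH)C_E(\gH)$.

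I would prove this via the quadratic form, testing an arbitrary vector $\vx\in\mathbb{R}^n$. Each omitted edge $e=(u,v)\in\gI$ has a supporting path $P_e$ in $\gH$, and this is where component preservation is used. Write $P_e=(u=y_0,y_1,\dots,y_k=v)$ with $k=|P_e|\le D(\gH)$. Telescoping gives $x_u-x_v=\sum_{i}(x_{y_{i-1}}-x_{y_i})$. Applying Cauchy--Schwarz yields the path inequality
\[
(x_u-x_v)^2\le |P_e|\sum_{a=(y,z)\in P_e}(x_y-x_z)^2\le D(\gH)\,\vx^\top\Bigl(\sum_{a\in P_e}\mL_a\Bigr)\vx .
\]
In matrix form this reads $\mL_e\preceq |P_e|\sum_{a\in P_e}\mL_a$.

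Next I sum over all $e\in\gI$ and exchange the order of summation:
\[
\vx^\top\mL_{\gI}\vx\le D(\gH)\sum_{a\in\gE_{\gH}}\bigl|\{e\in\gI:a\in P_e\}\bigr|\,\vx^\top\mL_a\vx .
\]
Each multiplicity is at most $C_E(\gH)$ by definition, and every term $\vx^\top\mL_a\vx$ is nonnegative. So the right side is at most $D(\gH)C_E(\gH)\,\vx^\top\mL_{\gH}\vx$, which is the required inequality.

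The only delicate points are bookkeeping rather than genuine obstacles. First, the edge-by-edge path inequality must use the path length $|P_e|$ and not a global constant before it is bounded by $D(\gH)$. Second, the exchange of summation must count each retained edge once per supporting path that traverses it. Supporting paths are simple shortest paths, so no edge repeats within a single path. Third, in the degenerate case $\gI=\emptyset$ the claim holds trivially with $\mL_{\gG}=\mL_{\gH}$. Finally, positive semidefiniteness of each $\mL_a$ justifies replacing the multiplicities by their maximum.
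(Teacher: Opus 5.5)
Your proposal is correct and follows the paper's proof essentially step for step. Both arguments use the same chain: the lower bound from $\gH\subseteq\gG$, Cauchy--Schwarz along each supporting path $P_e$ with $|P_e|\le D(\gH)$, summation over omitted edges with each retained edge charged at most $C_E(\gH)$ times, and finally adding $\vx^\top\mL_{\gH}\vx$ to both sides; your explicit decomposition $\mL_{\gG}=\mL_{\gH}+\mL_{\gI}$ and your remark that shortest supporting paths are simple only spell out bookkeeping the paper leaves implicit.
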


\begin{proof}~
Since $\gH\subseteq\gG$,
$\mL_{\gH}\preceq\mL_{\gG}$.

For an omitted edge $e=(u,v)$ with supporting path $P_e$,
Cauchy--Schwarz gives
\[
(x_u-x_v)^2
=
\left(
\sum_{a=(i,j)\in P_e}(x_i-x_j)
\right)^2
\le
|P_e|
\sum_{a=(i,j)\in P_e}(x_i-x_j)^2.
\]
Because $|P_e|\le D(\gH)$,
\[
(x_u-x_v)^2
\le
D(\gH)
\sum_{a=(i,j)\in P_e}(x_i-x_j)^2.
\]
Summing over all omitted edges, each retained edge occurs in at most
$C_E(\gH)$ supporting paths. Therefore
\[
\vx^\top(\mL_{\gG}-\mL_{\gH})\vx
\le
D(\gH)C_E(\gH)\,
\vx^\top\mL_{\gH}\vx.
\]
Adding $\vx^\top\mL_{\gH}\vx$ to both sides proves the result.
\end{proof}

\paragraph{Weighted extension.}
For positive edge conductances $w_e$, let $r_e=1/w_e$ and define
\[
D_R(\gH)
=
\max_{e=(u,v)\in\gI}
w_e\sum_{a\in P_e}\frac{1}{w_a}.
\]
If $\gH$ retains the original edge weights, the same argument with weighted
Cauchy--Schwarz gives
\[
\mL_{\gH}
\preceq
\mL_{\gG}
\preceq
\bigl(1+D_R(\gH)C_E(\gH)\bigr)\mL_{\gH}.
\]
For unit weights, $D_R(\gH)=D(\gH)$.

\subsection{Effective-Resistance, Distance, and Congestion Consequences}
\label{app:structural-consequences}

\paragraph{Effective resistance.}
For connected $\gG$ and $\gH$, Theorem~\ref{thm:scaffold-gnn-tradeoff}
implies
\[
R_{\gG}(u,v)
\le
R_{\gH}(u,v)
\le
\tau_{\gH}R_{\gG}(u,v)
\qquad
\forall\,u,v\in\gV.
\]
On $\mathbf{1}^{\perp}$ the Laplacians are positive definite, so
inverting their ordering gives
\[
\mL_{\gG}^{+}
\preceq
\mL_{\gH}^{+}
\preceq
\tau_{\gH}\mL_{\gG}^{+},
\]
and the result follows from
$
R_{\gX}(u,v)
=
(\ve_u-\ve_v)^\top
\mL_{\gX}^{+}
(\ve_u-\ve_v).
$

\paragraph{Communication distance.}
For unweighted graphs,
\[
\operatorname{dist}_{\gH}(u,v)
\le
\max\{1,D(\gH)\}
\operatorname{dist}_{\gG}(u,v).
\]
Intuitively, take a shortest $u$--$v$ path in $\gG$ and replace every
omitted edge by its supporting path in $\gH$. Each replacement has length
at most $D(\gH)$.

\paragraph{Edge-cut concentration.}
For any $S\subseteq\gV$,
\[
|\partial_{\gG}S|
\le
\bigl(1+C_E(\gH)\bigr)|\partial_{\gH}S|.
\]
Every omitted edge crossing the cut has a supporting path that also crosses
the cut. Assign it to one retained crossing edge on that path. Since each
retained edge lies on at most $C_E(\gH)$ supporting paths, each retained cut
edge receives at most $C_E(\gH)$ such assignments.

\paragraph{Node-neighborhood concentration.}
Let $N_{\gX}(S)\setminus S$ denote the vertices outside $S$ adjacent to a
vertex in $S$ in graph $\gX$. Then
\[
|N_{\gG}(S)\setminus S|
\le
\bigl(1+C_V(\gH)\bigr)
|N_{\gH}(S)\setminus S|.
\]
For each $\gG$-neighbor of $S$ that is not already an $\gH$-neighbor,
follow its omitted edge's supporting path and assign the vertex to the first
vertex of that path outside $S$. This vertex is internal to the supporting
path, and each such vertex receives at most $C_V(\gH)$ assignments.

\paragraph{Scope of the guarantees.}
These results concern the structure of the sparse support. The
dilation-only bound does not apply to the full dilation--congestion selection
rule, and we do not claim a general approximation ratio or monotonic
improvement in downstream GNN accuracy.

\subsection{Support Quality Under Edge Unions}
\label{app:unionproof}

\begin{proposition}[Monotonicity under edge addition]
\label{prop:union-support}
Let
$\gH\subseteq\gH'\subseteq\gG$, with both supports preserving the connected
components of $\gG$ and retaining the original edge weights. Then
\[
\sigma(\mL_{\gG},\mL_{\gH'})
\le
\sigma(\mL_{\gG},\mL_{\gH}).
\]
\end{proposition}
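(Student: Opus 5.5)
The plan is to argue directly from the definition of the support number, $\sigma(\mL_{\gG},\mL_{\gX})=\min\{\tau:\mL_{\gG}\preceq\tau\mL_{\gX}\}$. The key observation is that Laplacians are monotone in the edge set. Since $\gH\subseteq\gH'$ and both retain the original edge weights, we can write
\[
\mL_{\gH'}-\mL_{\gH}=\sum_{a\in\gE_{\gH'}\setminus\gE_{\gH}} w_a\,\mL_a ,
\]
where $\mL_a=(\ve_i-\ve_j)(\ve_i-\ve_j)^\top$ for $a=(i,j)$. This difference is positive semidefinite, because for every $\vx$ the quadratic form equals $\sum_a w_a(x_i-x_j)^2\ge 0$. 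Hence $\mL_{\gH}\preceq\mL_{\gH'}$.

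Next, I check that both support numbers are finite, so that the minima are well defined. By the component-preservation assumption, $\ker\mL_{\gH}=\ker\mL_{\gG}$: both kernels are spanned by the component indicator vectors. The same holds for $\gH'$. Therefore $\mL_{\gG}\preceq\tau\mL_{\gH}$ for some finite $\tau$. Concretely, Theorem~\ref{thm:scaffold-gnn-tradeoff} already gives $\tau_{\gH}=1+D(\gH)C_E(\gH)$ in the unweighted case; more generally, one can compare the nonzero spectra on the common range. The feasible set $\{\tau:\mL_{\gG}\preceq\tau\mL_{\gX}\}$ is closed and upward closed, so the minimum is attained.

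Now let $\tau=\sigma(\mL_{\gG},\mL_{\gH})\ge 0$. This gives the chain
\[
\mL_{\gG}\preceq\tau\mL_{\gH}\preceq\tau\mL_{\gH'} ,
\]
where the second inequality uses $\tau\ge0$ and the monotonicity step above. So $\tau$ is feasible for $\gH'$, and by minimality $\sigma(\mL_{\gG},\mL_{\gH'})\le\tau$.

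There is no serious obstacle. The only care needed is the finiteness and attainment of the support numbers, which is exactly where component preservation is used.
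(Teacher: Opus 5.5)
Your proof is correct and matches the paper's argument. Both derive $\mL_{\gH}\preceq\mL_{\gH'}$ from the positive-semidefinite Laplacians of the added edges, then chain $\mL_{\gG}\preceq\sigma(\mL_{\gG},\mL_{\gH})\mL_{\gH}\preceq\sigma(\mL_{\gG},\mL_{\gH})\mL_{\gH'}$ and conclude by minimality. Your additional check that the support numbers are finite, attained, and nonnegative, using equal kernels under component preservation, is extra rigor that the paper leaves implicit.
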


\begin{proof}~
Since adding retained edges adds positive-semidefinite edge Laplacians,
\[
\mL_{\gH}\preceq\mL_{\gH'}.
\]
By definition of the support number,
\[
\mL_{\gG}
\preceq
\sigma(\mL_{\gG},\mL_{\gH})\,\mL_{\gH}
\preceq
\sigma(\mL_{\gG},\mL_{\gH})\,\mL_{\gH'}.
\]
Hence
\[
\sigma(\mL_{\gG},\mL_{\gH'})
\le
\sigma(\mL_{\gG},\mL_{\gH}).
\]
\end{proof}

For the \sgt-$K$ inference union,
$\gH^{(k)}\subseteq\gH_{\mathrm{union}}$ for every retained support, so
\[
\sigma(\mL_{\gG},\mL_{\gH_{\mathrm{union}}})
\le
\min_k
\sigma(\mL_{\gG},\mL_{\gH^{(k)}}).
\]
Thus, taking the union cannot worsen the support-number approximation of any
constituent support. This is a structural statement and does not imply
monotonic improvement in GNN predictive performance.

\section{Supplementary Method Details}
\label{App:method}

This appendix provides additional details on the support backbones, scalable
\sgt variants, runtime, and GNN training/inference procedures described in
Section~\ref{sec:method}.

\subsection{\sgt Algorithms, Backbones, and Variants}
\label{app:scaffold-algorithms}

Let $q=\lceil\delta m\rceil$. For feasible budgets, each \sgt variant starts
from a spanning-forest backbone $\gF$, with one spanning tree per connected
component of $\gG$. If the target lies below the spanning-forest floor,
construction stops at $q$ edges and guarantees requiring a spanning backbone
are not invoked. For a current support $\gH$ and candidate set $\gB$,
\textsc{SupportScores}$(\gG,\gH,\gB)$ computes supporting paths and evaluates
Eq.~\ref{eq:scaffold-score}, with congestion and normalization computed over
$\gB$.

\subsubsection{Support Backbones}
\label{app:scaffold-backbone-details}
The backbone $\gF$ may be deterministic, randomized, weight-aware, or
low-stretch~\citep{elkin2005lower,Abraham2008LowStretch,Abraham2012Nearly}.
Different backbone constructions induce different supporting paths and, consequently, different dilation and congestion profiles. All benchmark graphs in this work are unweighted; weighted extensions are described in Appendix~\ref{app:weighted-support}. The subsequent \sgt scoring principle is otherwise unchanged.

We use spanning-forest terminology throughout: \texttt{SF} denotes the
deterministic construction, \texttt{RandSF} a randomized forest,
\texttt{SPF} a shortest-path forest, and \texttt{GLSF}/\texttt{LLSF}
low-stretch constructions. \texttt{MaxSF} and \texttt{MinSF} denote
maximum- and minimum-weight spanning forests, respectively. Under the unit
weights used in our experiments, the two objectives are identical and,
with the same deterministic tie-breaking, yield the same forest.


\begin{figure}[!t]
    \centering
    \includegraphics[width=\linewidth]
    {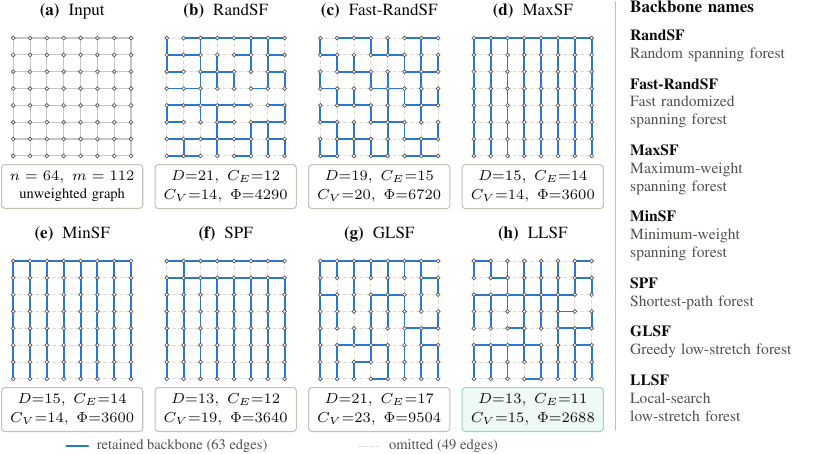}
    \caption{Support-forest backbones on a connected $8\times8$ grid.
    (a) Input; (b)--(h) spanning-forest constructions with $63$ retained
    edges (blue) and $49$ omitted edges (dashes). Since the input is connected,
    each forest contains one tree. Panels report dilation $D$, edge and node
    congestion $C_E,C_V$, and
    $\Phi=(1+D)(1+C_E)(1+C_V)$ before edge additions.
    Green marks the lowest $\Phi$. MaxSF and MinSF coincide under unit
    weights; LLSF refines GLSF through local swaps.}
    \label{fig:sgt-backbones}
\end{figure}

\subsubsection{Variant Mechanisms}
\label{app:scaffold-variant-details}

\sgte is the full dynamic reference and recomputes all candidate paths and
scores after every insertion; its pseudocode is given in
Algorithm~\ref{alg:scaffold-greedy} in the main paper.
The remaining variants progressively reduce this recomputation, as summarized
in Figure~\ref{fig:scaffold-variant-mechanisms}.

\begin{figure*}[!htbp]
    \centering
    \includegraphics[width=\textwidth]{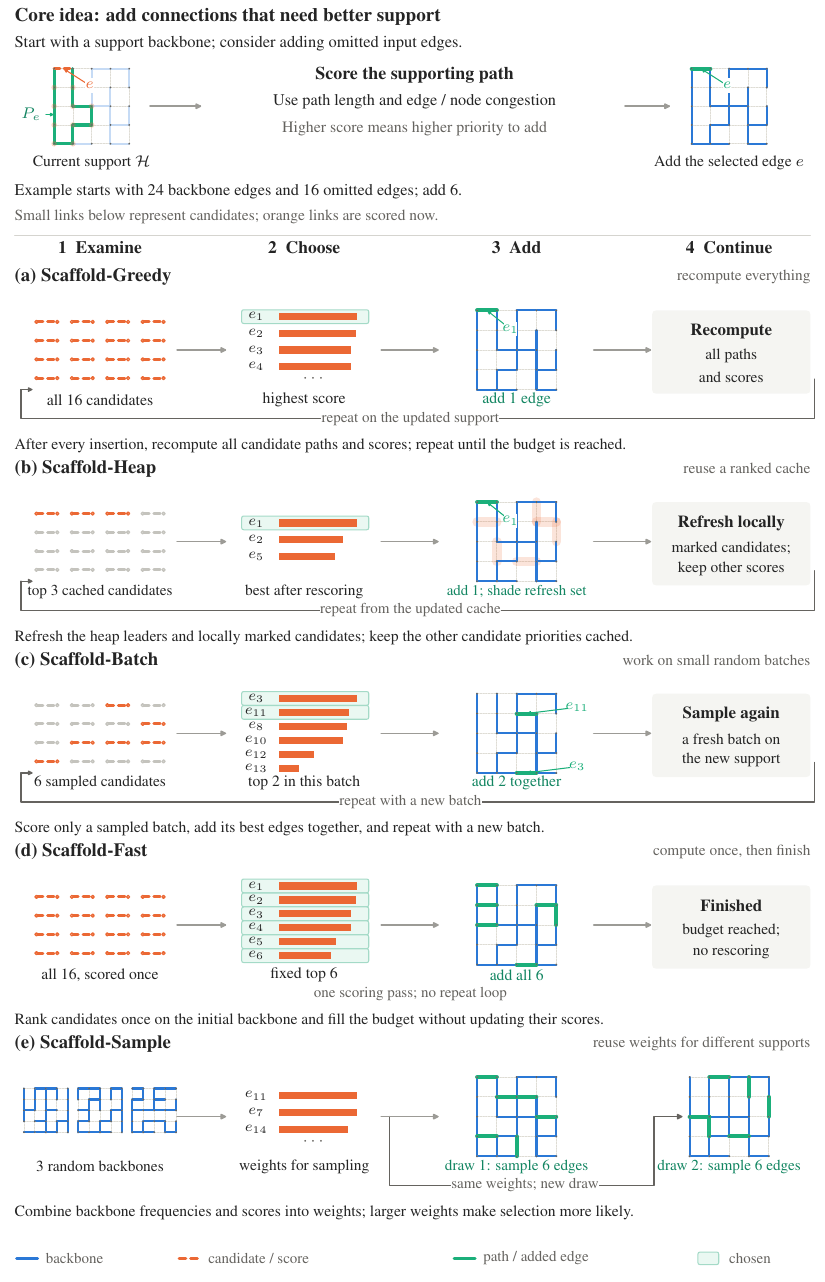}
    \caption{(a) \texttt{Greedy} recomputes all paths and scores after each insertion.
    (b) \texttt{Heap} caches priorities and selectively refreshes affected candidates.
    (c) \texttt{Batch} scores sampled batches and inserts their top-$r$ edges.
    (d) \texttt{Fast} scores candidates once on the initial backbone.
    (e) \texttt{Sample} aggregates randomized-backbone signals for repeated sparse draws.}
    \label{fig:scaffold-variant-mechanisms}
\end{figure*}

\paragraph{\sgth.}
\sgth caches candidate paths and priorities in a max-heap. At each round,
a small active set is refreshed, the highest-scoring candidate is inserted,
and only candidates affected by this insertion are reconsidered. Cached
priorities outside this set may therefore be stale, making \sgth a lazy
approximation to Greedy.

\begin{algorithm}[!htbp]
\small
\caption{\textsc{\sgth}: selective dynamic refresh}
\label{alg:scaffold-heap}
\begin{algorithmic}[1]
\REQUIRE $\gG,\delta$, active size $\kappa$, refresh cap $c$
\ENSURE Sparse support $\gH$
\STATE $q\leftarrow\lceil\delta m\rceil$; choose $\gF$; $\gH\leftarrow\gF$
\STATE initialize cached paths/scores and max-heap $Q$
\WHILE{$|\gE_{\gH}|<q$}
    \STATE $\gA\leftarrow\textsc{PopCandidates}(Q,\kappa)$
    \STATE recompute paths and scores for $\gA$ on $\gH$
    \STATE $e^\star\leftarrow\arg\max_{e\in\gA}\score(e)$
    \STATE $\gH\leftarrow\gH\cup\{e^\star\}$
    \STATE refresh up to $c$ affected candidates and update $Q$
\ENDWHILE
\STATE \textbf{return} $\gH$
\end{algorithmic}
\end{algorithm}

\paragraph{\sgtb.}
\sgtb evaluates only a random subset of the remaining candidates at each
round. The sampled batch is rescored on the current support and its top-$r$
edges are inserted before the next refresh. Depending on batch settings and selection number, we can make \sgtb scalable for large graphs.

\begin{algorithm}[!htbp]
\small
\caption{\textsc{\sgtb}: batched dynamic scoring}
\label{alg:scaffold-batch}
\begin{algorithmic}[1]
\REQUIRE $\gG,\delta$, batch size $b$, insertion size $r$
\ENSURE Sparse support $\gH$
\STATE $q\leftarrow\lceil\delta m\rceil$; choose $\gF$; $\gH\leftarrow\gF$
\WHILE{$|\gE_{\gH}|<q$}
    \STATE $\gI\leftarrow\gE\setminus\gE_{\gH}$
    \STATE sample $\gB\subseteq\gI$, $|\gB|=\min\{b,|\gI|\}$
    \STATE $\widehat{s}\leftarrow
           \textsc{SupportScores}(\gG,\gH,\gB)$
    \STATE $\gS\leftarrow
           \textsc{TopK}
           (\gB,\min\{r,q-|\gE_{\gH}|\};\widehat{s})$
    \STATE $\gH\leftarrow\gH\cup\gS$
\ENDWHILE
\STATE \textbf{return} $\gH$
\end{algorithmic}
\end{algorithm}

\paragraph{\sgtf.}
\sgtf removes dynamic rescoring. Since $\gF$ is a forest, each candidate whose
endpoints lie in the same component has a unique path in $\gF$. LCA queries,
forest-difference accumulation, and root-prefix sums allow all initial path
quantities to be computed together. The resulting ranking is then kept fixed.

\begin{algorithm}[!htbp]
\small
\caption{\textsc{\sgtf}: static forest scoring}
\label{alg:scaffold-fast}
\begin{algorithmic}[1]
\REQUIRE $\gG,\delta$
\ENSURE Sparse support $\gH$
\STATE $q\leftarrow\lceil\delta m\rceil$; choose backbone $\gF$
\STATE $\gI\leftarrow\gE\setminus\gE_{\gF}$
\STATE $s_0\leftarrow\textsc{ForestScores}(\gG,\gF)$
       \COMMENT{score once}
\STATE $\gS\leftarrow
       \textsc{TopK}(\gI,q-|\gE_{\gF}|;s_0)$
\STATE \textbf{return} $\gF\cup\gS$
\end{algorithmic}
\end{algorithm}

\noindent
\textsc{ForestScores} roots each tree in the forest, constructs the
corresponding LCA structures, accumulates candidate-path edge/node loads, and
uses prefix sums to recover the quantities in
Eq.~\ref{eq:scaffold-score}. These are the same initial scores computed by
\sgte at $t=0$; \sgtf differs by keeping them fixed during selection.

\paragraph{\sgtsa.}
\sgtsa converts support information from multiple randomized spanning forests
into reusable sampling weights. Let $f_e$ summarize how frequently edge $e$
occurs in the scoring backbones and $a_e$ its normalized candidate-score
signal. These quantities are precomputed once and combined into sampling
weights $\pi_e$ according to the policy specified in the experimental
settings.

\begin{algorithm}[!htbp]
\small
\caption{\textsc{\sgtsa}: reusable stochastic supports}
\label{alg:scaffold-sample}
\begin{algorithmic}[1]
\REQUIRE $\gG,\delta$, number of scoring backbones $R$
\ENSURE Sparse support $\gH$
\STATE $(f,a)\leftarrow\textsc{CachedForestSignals}(\gG,R)$
       \COMMENT{precompute once}
\STATE $q\leftarrow\lceil\delta m\rceil$; choose $\gF$
\STATE $\gI\leftarrow\gE\setminus\gE_{\gF}$;
       $k\leftarrow q-|\gE_{\gF}|$
\IF{$k=0$}
    \STATE \textbf{return} $\gF$
\ENDIF
\STATE $\pi\leftarrow\textsc{SamplingWeights}(f,a;\gI)$
\STATE sample exactly $k$ distinct edges $\gS\subseteq\gI$ using $\pi$
\STATE \textbf{return} $\gF\cup\gS$
\end{algorithmic}
\end{algorithm}

\subsubsection{Example Support Quality Across \sgt Variants}
\label{app:variant-example}

Figure~\ref{fig:sgtvariants_updates} compares the five \sgt variants
using the same \texttt{RandSF} backbone and edge budget. Greedy achieves the lowest
dilation--congestion objective in this example, with \texttt{Heap} remaining close.
\texttt{Batch}, \texttt{Fast}, and \texttt{Sample} use progressively cheaper selection policies and produce higher objective values here, illustrating how candidate selection affects support quality at fixed sparsity.

\begin{figure}[!t]
    \centering
    \includegraphics[width=\linewidth]
    {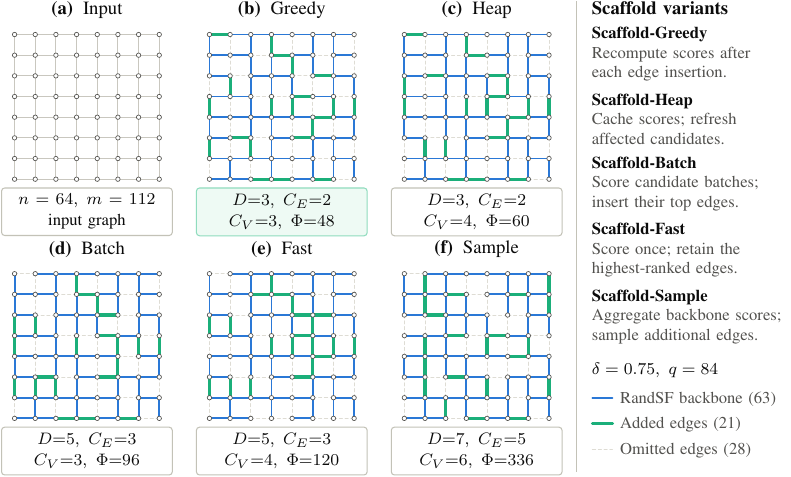}
    \caption{\sgt variants at a fixed edge budget.
    (a) Input grid. (b)--(f) Each variant retains $84$ edges
    ($\delta=0.75$): a shared RandSF backbone (blue) and $21$ added
    edges (green); dashed edges are omitted.
    Panels report dilation $D$, edge and node congestion $C_E,C_V$,
    and $\Phi=(1+D)(1+C_E)(1+C_V)$.
    Green shading marks the lowest $\Phi$ in this example.}
    \label{fig:sgtvariants_updates}
\end{figure}

\subsection{\sgt-1 and \sgt-$K$ Training and Inference}
\label{app:gnntraining}

Section~\ref{subsec:scaffold-gnn} introduces the two usage modes, and Algorithm~\ref{alg:scaffold-gnn} summarizes their procedures.
\sgt-1 constructs a single $q$-edge support and uses it throughout training and inference. 
In contrast, \sgt-$K$ periodically generates new $q$-edge supports and maintains an online bank $\mathcal{B}$ containing at most the $K$ strongest supports according to validation performance.

At inference, the retained supports are combined as $H_{\mathrm{union}}=\bigcup_{H\in\mathcal{B}}H$ and evaluated in a single
forward pass. Since every retained support is a subgraph of
$H_{\mathrm{union}}$, the union cannot worsen its support-number approximation
relative to any constituent support (Proposition~\ref{prop:union-support}).
Realized union sizes are reported in Appendix~\ref{app:inference_size}, while sensitivity to $K$, refresh frequency, asynchronous generation, and inference topology is examined
in Appendix~\ref{app:k-sensitivity}--\ref{app:train-inference}.

\begin{algorithm}[!htbp]
\small
\caption{\sgt-1 and \sgt-$K$ training and inference}
\label{alg:scaffold-gnn}

\begin{minipage}[t]{0.47\linewidth}
\vspace{0pt}
\textbf{(a) \sgt-1}
\begin{algorithmic}[1]
\REQUIRE $\gG,\mX,\vy,\delta$, GNN $f_\theta$
\ENSURE Prediction $\widehat{\vy}$

\STATE $\gH\leftarrow\textsc{\sgt}(\gG,\delta)$
       \COMMENT{$q$ edges}
\STATE train $f_\theta$ on $\gH$
\STATE select checkpoint $\theta^\star$
\STATE $\widehat{\vy}\leftarrow
       f_{\theta^\star}(\mX,\gH)$
\STATE \textbf{return} $\widehat{\vy}$
\end{algorithmic}
\end{minipage}
\hfill
\begin{minipage}[t]{0.51\linewidth}
\vspace{0pt}
\textbf{(b) \sgt-$K$}
\begin{algorithmic}[1]
\REQUIRE $\gG,\mX,\vy,\delta,\rho,K,T_{\mathrm{train}}$, $f_\theta$
\ENSURE Prediction $\widehat{\vy}$

\STATE $\mathcal{B}\leftarrow\emptyset$
       \COMMENT{at most $K$ supports}
\STATE $J\leftarrow\lceil T_{\mathrm{train}}/\rho\rceil$

\FOR{$j=1,\ldots,J$}
    \STATE $\gH_j\leftarrow
           \textsc{\sgt}(\gG,\delta;\text{random seed})$
           \COMMENT{$q$ edges}
    \STATE train on $\gH_j$ for the next $\rho$ epochs
    \STATE evaluate $\gH_j$ and supports in $\mathcal{B}$
           on validation data
    \STATE $\mathcal{B}\leftarrow
           \textsc{TopK}(\mathcal{B}\cup\{\gH_j\},K)$
\ENDFOR

\STATE select checkpoint $\theta^\star$
\STATE re-rank supports in $\mathcal{B}$ using $\theta^\star$
\STATE $\gH_{\mathrm{union}}\leftarrow
       \displaystyle\bigcup_{\gH\in\mathcal{B}}\gH$
\STATE $\widehat{\vy}\leftarrow
       f_{\theta^\star}(\mX,\gH_{\mathrm{union}})$
\STATE \textbf{return} $\widehat{\vy}$
\end{algorithmic}
\end{minipage}

\end{algorithm}

\subsection{Runtime and Scalability}
\label{app:scaffold-runtime}

For $q=\lceil\delta m\rceil$, \sgt construction consists of a
spanning-forest backbone $\gF$ followed by selection of
$T=q-|\gE_{\gF}|$ additional edges. The complexities in
Table~\ref{tab:scaffold-variants} describe the selection stage; total
construction cost additionally includes backbone construction.

\paragraph{Measurement protocol.}
We benchmark seeded unweighted synthetic graphs on a dual AMD EPYC 7282
machine using up to eight CPU workers. Times are medians of three warm complete
calls and include input normalization, fresh backbone construction, clustering
where applicable, candidate scoring/selection, and sparse-output assembly;
graph loading, JIT warm-up, and GNN training are excluded.
\texttt{Sample} includes preprocessing over $R=8$ randomized forests and its
first draw; subsequent draws reuse the precomputed sampling weights.
\texttt{Batch} uses ten fixed BFS-based candidate clusters formed by expanding
from high-degree seed nodes. Each candidate edge is assigned to one cluster,
while supporting paths are evaluated on the shared current support and may
cross cluster boundaries. \texttt{Batch} uses $b=512$ and $r=64$, while
\texttt{Heap} uses $\kappa=16$ and $c=64$.
To stress construction, these benchmarks enable all three score terms,
$(\alpha,\beta_E,\beta_V)=(1,1,1)$. All budgets are above the
spanning-forest floor, and outputs are checked for exact budget and
connectivity.

\paragraph{Parallel scaling.}
Candidate clusters are fixed across worker counts. Workers parallelize supporting-path searches and forest-scoring operations, while dynamic support updates remain sequential. Table~\ref{tab:scaffold-scaling} reports complete construction time for $P\in\{1,4,8\}$ using the same graphs and algorithm settings as Table~\ref{tab:scaffold-variants}.
Note that once \texttt{Sample} pre-processing is cached, subsequent exactly budgeted draws take $0.055$\,s on the $200$K-node benchmark.

\begin{table}[!htbp]
\centering
\caption{Complete \sgt construction time at $\delta=0.2$.}
\label{tab:scaffold-scaling}
\scriptsize
\setlength{\tabcolsep}{5pt}
\renewcommand{\arraystretch}{1.05}

\begin{tabular}{@{}llrrrc@{}}
\toprule
& & \multicolumn{3}{c}{\textbf{Time (s)}} & \\
\cmidrule(lr){3-5}
\textbf{Variant} & \textbf{Graph $(n,m)$}
& $P{=}1$ & $P{=}4$ & $P{=}8$
& \textbf{Speedup} \\
\midrule

\sgte
& $(400,\,3.1\mathrm{K})$
& 1.63 & 1.01 & 0.78 & $2.08\times$ \\

\sgth
& $(4\mathrm{K},\,39.9\mathrm{K})$
& 25.09 & 25.02 & 25.16 & $1.00\times$ \\

\sgtb
& $(10\mathrm{K},\,250\mathrm{K})$
& 206.12 & 57.54 & 31.01 & $6.65\times$ \\

\sgtf
& $(200\mathrm{K},\,2.4\mathrm{M})$
& 3.53 & 2.78 & 2.30 & $1.54\times$ \\

\sgtsa
& $(200\mathrm{K},\,2.4\mathrm{M})$
& 33.77 & 11.81 & 9.02 & $3.74\times$ \\

\bottomrule
\end{tabular}

\vspace{2pt}
\parbox{\linewidth}{\scriptsize
Times include fresh backbone construction; \texttt{Sample} includes
preprocessing and its first draw. Speedup is $T_1/T_8$. Fixed randomness
produces the same selected support across worker counts.}
\end{table}

\paragraph{Backbone construction.}
Table~\ref{tab:scaffold-backbones} isolates backbone construction.
\texttt{MaxSF}/\texttt{MinSF} use Kruskal-style union--find,
\texttt{RandSF} variants randomize edge order, and \texttt{SPF} uses a BFS
forest. \texttt{GLSF} and \texttt{LLSF} provide lower-stretch alternatives,
with \texttt{LLSF} refining a \texttt{GLSF} initialization through local
swaps. Under the unit weights used here, \texttt{MaxSF} and \texttt{MinSF}
coincide given the same tie-breaking.

\begin{table}[!htbp]
\centering
\caption{Support-backbone construction time on unweighted graphs.}
\label{tab:scaffold-backbones}
\scriptsize
\setlength{\tabcolsep}{3.5pt}
\renewcommand{\arraystretch}{1.05}

\begin{tabular}{@{}llrr@{}}
\toprule
\textbf{Backbone}
& \textbf{Complexity}
& \multicolumn{1}{c}{\shortstack{$n=200\mathrm{K}$\\$m=3.12\mathrm{M}$}}
& \multicolumn{1}{c}{\shortstack{$n=400$\\$m=1.54\mathrm{K}$}} \\
\cmidrule(l){3-4}
& & \multicolumn{1}{c}{ms} & \multicolumn{1}{c}{ms} \\
\midrule

\texttt{Fast-MaxSF/MinSF}
& $\mathcal{O}(m\alpha_{\rm UF}(n)+B)$
& 22.72--23.74 & 0.035--0.036 \\

\texttt{MaxSF/MinSF}
& $\mathcal{O}(m\log m)$
& 22.25--22.41 & 0.035 \\

\texttt{Fast-RandSF}
& $\mathcal{O}(m\alpha_{\rm UF}(n))$
& 98.76 & 0.113 \\

\texttt{RandSF}
& $\mathcal{O}(m\alpha_{\rm UF}(n))$
& 162 & 0.234 \\

\texttt{SPF}
& $\mathcal{O}(m+n\log n)$
& 817 & 1.13 \\

\texttt{GLSF}
& $\mathcal{O}(nm\bar c)$
& --- & 12,313 \\

\texttt{LLSF}
& $C_{\rm init}+
   \mathcal{O}(Lm\bar\ell(n+m)\log n)$
& --- & 392,810$^\dagger$ \\

\bottomrule
\end{tabular}

\vspace{2pt}
\parbox{\linewidth}{\scriptsize
$B=256$ and $\alpha_{\rm UF}$ is the inverse Ackermann function;
$\bar c$ and $\bar\ell$ denote the corresponding average construction factors.
\texttt{LLSF} uses \texttt{GLSF} initialization with $L=10$.
$^\dagger$One completed \texttt{LLSF} run; other entries are three-run medians.
Dashes denote unmeasured cases.}
\end{table}

\paragraph{Selection complexity.}
Candidates sharing a source can reuse a shortest-path search. Let
\[
C_{\mathrm{sp}}=
\begin{cases}
\mathcal{O}(n+q), & \text{BFS},\\
\mathcal{O}((n+q)\log n), & \text{Dijkstra}.
\end{cases}
\]
The selection costs are
\[
\begin{aligned}
\sgte &:~ \mathcal{O}(TnC_{\mathrm{sp}}),\\
\sgth &:~ \mathcal{O}\!\left(
(n+T(\kappa+c))C_{\mathrm{sp}}+m\log m\right),\\
\sgtb &:~ \mathcal{O}\!\left(
\left\lceil T/r\right\rceil bC_{\mathrm{sp}}\right),\\
\sgtf &:~ \mathcal{O}((m+n)\log n),\\
\sgtsa &:~ \mathcal{O}(R(m+n)\log n)
\text{ preprocessing},\quad \mathcal{O}(m)\text{ per draw}.
\end{aligned}
\]
\texttt{Heap} and \texttt{Batch} reduce dynamic rescoring, whereas
\texttt{Fast} and \texttt{Sample} avoid repeated full rescoring.
Backbone construction is additional to these selection bounds and is included
in the measured runtimes above.

Detailed GNN training-time and GPU-memory measurements are reported in
Appendix~\ref{app:runtime-details}.

\subsection{Possible Extension: Localized or Partitioned Supports}
\label{app:localized-scaffold}

The current \sgt formulation uses a spanning-forest backbone so that every
omitted edge has a finite supporting path whenever the requested budget is
feasible. A possible extension is to apply the same scoring principle locally
within graph partitions or clusters, constructing a connected support for each
subgraph rather than one global spanning forest.

The main difficulty is the treatment of cross-partition edges. If the
endpoints of an omitted edge are disconnected in the current support, its
supporting-path dilation is infinite and the standard \sgt score is not
directly applicable. A localized variant would therefore require an additional
mechanism for cross-partition connectivity, such as retaining a sparse global
skeleton, explicitly preserving selected boundary edges, or scoring
cross-partition edges separately.

We leave the design and analysis of such localized \sgt constructions to
future work.

\FloatBarrier
\clearpage
\section{Appendix Related Work}
\label{app:extended-related}

Graph sparsification methods differ in the information they use and the properties they aim to preserve. 
Topology methods range from simple degree and similarity
heuristics~\citep{Hamann2016Structure,Voudigari2016Rank,Xu2007SCAN,
Satuluri2011Local,Liu2023DSpar} to classical constructions with explicit
structural objectives. Cut sparsification~\citep{Benczur2004Cut} preserves
cuts, spectral sparsification~\citep{Spielman2011Spectral,
spielman2008graph,Batson2012Twice} preserves Laplacian structure, and
graph spanners~\citep{Althofer1990Spanners} control distance distortion.
Semantic methods instead use additional information to identify useful sparse
structures. NeuralSparse~\citep{Zheng2020NeuralSparse},
SparseGAT~\citep{ye2021sparse}, PTDNet~\citep{Luo2021LearningToDrop},
SGCN~\citep{Li2020SGCN}, Unified-LTH~\citep{Chen2021UnifiedLottery},
AdaGLT~\citep{Zhang2024GraphLotteryAutomated}, and
SGS-GNN~\citep{Das2025SGSGNN} learn graph structure together with downstream
training signals, while MoG~\citep{Zhang2025MoG} learns task-dependent routing
over multiple sparsifier experts.

Sampling methods address a related but different scalability problem.
GraphSAGE~\citep{Hamilton2017GraphSAGE} samples neighborhoods,
FastGCN~\citep{Chen2018FastGCN} and LADIES~\citep{Zou2019LADIES} sample
layer-wise computation, GraphSAINT~\citep{Zeng2020GraphSAINT} samples training
subgraphs, and Cluster-GCN~\citep{Chiang2019ClusterGCN} trains on graph
partitions. These approaches primarily limit the computation performed in each
training iteration rather than construct a single global sparse support.
Other related graph-modification approaches include stochastic edge removal
with DropEdge~\citep{Rong2020DropEdge}, diffusion-based preprocessing with
GDC~\citep{gasteiger2019diffusion}, and learned graph structures such as
ProGNN~\citep{Jin2020ProGNN} and NodeFormer~\citep{Wu2022Nodeformer}.

Recent work continues to explore GNN sparsification for efficient training and inference
~\citep{akkas2025shapley,liguori2025spectral,
liao2024unifews,ding2025large},
alongside broader methods for analyzing and estimating graph
sparsification error~\citep{wang2025empirical}.
Graph/model co-sparsification and lottery-ticket methods
~\citep{Chen2021UnifiedLottery,Wang2022DualLottery,
Hui2023GraphSparsityMatters,Zhang2024GraphLotteryAutomated},
learned graph structure and neural sparsification
~\citep{Zheng2020NeuralSparse,Li2020SGCN,Luo2021LearningToDrop,
Franceschi2019LDS,Jin2020ProGNN,Liu2022UnsupervisedGSL,
Wu2022Nodeformer,Li2024GSLB}, and classical spectral, cut, and
distance-preserving constructions
~\citep{Benczur2004Cut,Spielman2011Spectral,spielman2008graph,
Batson2012Twice,Althofer1990Spanners} further illustrate the breadth
of graph-reduction approaches. Our experiments use representative topology,
semantic, and sampling methods rather than attempting an exhaustive comparison
across all graph-reduction paradigms.

\sgt is a topology method, but differs in how it evaluates an omitted edge:
its importance is determined by the path that must support that edge in the
retained graph. With dilation alone, this is closely related to
distance-preserving sparsification such as graph spanners. Jointly considering
dilation and congestion instead follows the support-theoretic view that even
short replacement paths can create severe bottlenecks when many omitted
connections share the same retained edges. To our knowledge, prior GNN
sparsifiers have not explicitly used this joint support-theoretic
dilation--congestion criterion to construct the graph used directly for message
passing. The different \sgt variants realize this criterion with different quality-cost trade-offs.
\section{Appendix Experimental Details}
\label{app:experimental}

\subsection{Datasets and Shared Training Settings}
\label{app:dataset-settings}

We evaluate on 19 node-classification datasets spanning homophilic, heterophilic, and large-scale regimes. All graphs are converted to undirected, self-loop-free graphs before sparsification. Table~\ref{tab:shortdatasetdescription} reports dataset statistics and evaluation metrics, and Table~\ref{tab:presets} gives the per-dataset TunedGNN~\citep{luo2024classic} configurations shared across comparable methods.

\begin{table*}[!htbp]
\centering
\caption{Dataset statistics for the 19 node-classification benchmarks.}
\label{tab:shortdatasetdescription}
\scriptsize
\setlength{\tabcolsep}{5pt}
\renewcommand{\arraystretch}{1.2}
\begin{tabular*}{\textwidth}{@{\extracolsep{\fill}}lrrrrrrc@{}}
\toprule
& \multicolumn{3}{c}{\textbf{Graph scale}}
& \multicolumn{1}{c}{\textbf{Structure}}
& \multicolumn{3}{c}{\textbf{Task}} \\
\cmidrule(lr){2-4}\cmidrule(lr){5-5}\cmidrule(l){6-8}
\textbf{Dataset} & $|\gV|$ & $|\gE|$ & $d$ &
$\gH_{\mathrm{adj}}$ & $C$ & $F$ & \textbf{Metric} \\
\midrule
\rowcolor[gray]{0.93}\multicolumn{8}{@{}l@{}}{\textit{Heterophilic small/medium}} \\
Roman-empire~\citep{platonov2023critical}    & 22.66K  & 32.93K  & 2.91  & -0.05 & 18 & 300   & Acc \\
\rowcolor[gray]{0.975}Squirrel~\citep{platonov2023critical}       & 2.22K   & 47.00K  & 42.28 & 0.01  & 5  & 2.09K & Acc \\
Minesweeper~\citep{platonov2023critical}     & 10.00K  & 39.40K  & 7.88  & 0.01  & 2  & 7     & AUC \\
\rowcolor[gray]{0.975}Questions~\citep{platonov2023critical}       & 48.92K  & 153.54K & 6.28  & 0.02  & 2  & 301   & AUC \\
Chameleon~\citep{platonov2023critical}       & 890     & 8.85K   & 19.90 & 0.03  & 5  & 2.33K & Acc \\
\rowcolor[gray]{0.975}Amazon-ratings~\citep{platonov2023critical}  & 24.49K  & 93.05K  & 7.60  & 0.14  & 5  & 300   & Acc \\
\addlinespace[1pt]
\rowcolor[gray]{0.93}\multicolumn{8}{@{}l@{}}{\textit{Homophilic small/medium}} \\
WikiCS~\citep{mernyei2020wiki}               & 11.70K  & 215.60K & 36.85 & 0.58  & 10 & 300   & Acc \\
\rowcolor[gray]{0.975}CiteSeer~\citep{sen2008collective}          & 3.33K   & 4.55K   & 2.74  & 0.67  & 6  & 3.70K & Acc \\
Amazon-Computer~\citep{shchur2018pitfalls}  & 13.75K  & 245.86K & 35.76 & 0.68  & 10 & 767   & Acc \\
\rowcolor[gray]{0.975}PubMed~\citep{sen2008collective}            & 19.72K  & 44.32K  & 4.50  & 0.69  & 3  & 500   & Acc \\
Cora~\citep{sen2008collective}              & 2.71K   & 5.28K   & 3.90  & 0.77  & 7  & 1.43K & Acc \\
\rowcolor[gray]{0.975}Coauthor-CS~\citep{shchur2018pitfalls}      & 18.33K  & 81.89K  & 8.93  & 0.78  & 15 & 6.81K & Acc \\
Amazon-Photo~\citep{shchur2018pitfalls}     & 7.65K   & 119.08K & 31.13 & 0.79  & 8  & 745   & Acc \\
\rowcolor[gray]{0.975}Coauthor-Physics~\citep{shchur2018pitfalls} & 34.49K  & 247.96K & 14.38 & 0.87  & 5  & 8.42K & Acc \\
\addlinespace[1pt]
\rowcolor[gray]{0.93}\multicolumn{8}{@{}l@{}}{\textit{Large-scale}} \\
ogbn-proteins~\citep{Hu2020OGB}             & 132.53K & 39.56M  & 597.00 & 0.05 & 112$^{\dagger}$ & 8   & AUC \\
\rowcolor[gray]{0.975}Pokec~\citep{lim2021large}                   & 1.63M   & 22.30M  & 27.32  & 0.42 & 2             & 65  & Acc \\
ogbn-arxiv~\citep{Hu2020OGB}                & 169.34K & 1.17M   & 13.77  & 0.42 & 40            & 128 & Acc \\
\rowcolor[gray]{0.975}ogbn-products~\citep{Hu2020OGB}            & 2.45M   & 61.86M  & 50.52  & 0.46 & 47            & 100 & Acc \\
Reddit~\citep{Hamilton2017GraphSAGE}         & 232.97K & 57.31M  & 492.00 & 0.74 & 41            & 602 & Acc \\
\bottomrule
\end{tabular*}
\vspace{2pt}
\begin{minipage}{\textwidth}
\footnotesize\emph{Note.} $|\gV|$ and $|\gE|$ are the numbers of nodes and
undirected edges; $d$ is average degree; $\gH_{\mathrm{adj}}$ is adjusted
homophily~\citep{platonov2022characterizing}; $C$ and $F$ are the number of
classes (or tasks) and feature dimension. Small/medium graphs use
$\delta\in\{0.3,0.5,0.7\}$ and large-scale graphs use
$\delta\in\{0.1,0.3,0.5\}$. Graphs are made undirected and self-loop-free before
sparsification. $^\dagger$\textsc{ogbn-proteins} has $112$ binary prediction
tasks.
\end{minipage}
\end{table*}



\subsection{Baseline Training Settings}
\label{app:competing-method-settings}

\paragraph{Budget matching and evaluation protocol.}
All ranked fixed-budget sparsifiers use the same target
$q=\lceil\delta m\rceil$ and, whenever applicable, the same data split, GNN
architecture, optimization settings, epoch budget, and model-selection rule
from Table~\ref{tab:presets}; only the graph support changes. Semantic methods
retain their native optimization procedures but are matched to $\delta$,
while sampling methods and reference anchors are reported separately.

\sgt-1 uses one $q$-edge support throughout training and inference and is
included in the matched-budget ranking. \sgt-$K$ trains on $q$ edges at a time
but refreshes supports and infers on their validation-selected union, which may
exceed $q$; it is therefore reported separately. Test labels are never used for
support or checkpoint selection.

Method objectives are summarized in Appendix~\ref{app:extended-related};
implementation-specific deviations and dataset-specific epoch budgets $T$ are
given below and in Table~\ref{tab:presets}.

\begin{table*}[!t]
\centering
\caption{Per-dataset GCN training configurations shared by all methods.}
\label{tab:presets}
\scriptsize
\setlength{\tabcolsep}{3.5pt}
\renewcommand{\arraystretch}{1.04}
\begin{tabular*}{\textwidth}{@{\extracolsep{\fill}}lrrrrlrllc@{}}
\toprule
& \multicolumn{2}{c}{\textbf{Protocol}}
& \multicolumn{3}{c}{\textbf{Architecture}}
& \multicolumn{3}{c}{\textbf{Optimization}}
& \multicolumn{1}{c}{\textbf{Evaluation}} \\
\cmidrule(lr){2-3}\cmidrule(lr){4-6}\cmidrule(lr){7-9}\cmidrule(l){10-10}
\textbf{Dataset} & \textbf{Epochs} & \textbf{Runs} & $L$ & $H$ &
\textbf{Norm./Res.} & \textbf{Dropout} & \textbf{LR} & \textbf{WD} & \textbf{Metric} \\
\midrule
\rowcolor[gray]{0.93}\multicolumn{10}{@{}l@{}}{\textit{Heterophilic small/medium}} \\
Roman-empire     & 2500 & 3  & 9  & 512 & bn, res$^{\ast}$ & 0.50 & 1e-3 & 0    & Acc \\
\rowcolor[gray]{0.975}Squirrel       & 500  & 10 & 4  & 256 & bn, res       & 0.70 & 1e-2 & 5e-4 & Acc \\
Minesweeper      & 2000 & 3  & 12 & 64  & bn, res           & 0.20 & 1e-2 & 0    & AUC \\
\rowcolor[gray]{0.975}Questions      & 1500 & 3  & 10 & 512 & res$^{\ast}$ & 0.30 & 3e-5 & 0    & AUC \\
Chameleon        & 200  & 10 & 5  & 512 & --                & 0.20 & 5e-3 & 1e-3 & Acc \\
\rowcolor[gray]{0.975}Amazon-ratings & 2500 & 3  & 4  & 512 & bn, res       & 0.50 & 1e-3 & 0    & Acc \\
\addlinespace[1pt]
\rowcolor[gray]{0.93}\multicolumn{10}{@{}l@{}}{\textit{Homophilic small/medium}} \\
WikiCS           & 1000 & 3  & 3  & 256 & ln                & 0.50 & 1e-3 & 0    & Acc \\
\rowcolor[gray]{0.975}CiteSeer       & 500  & 5  & 2  & 512 & --            & 0.50 & 1e-3 & 1e-2 & Acc \\
Amazon-Computer  & 1000 & 3  & 3  & 512 & ln                & 0.50 & 1e-3 & 5e-5 & Acc \\
\rowcolor[gray]{0.975}PubMed         & 500  & 5  & 2  & 256 & --            & 0.70 & 5e-3 & 5e-4 & Acc \\
Cora             & 500  & 5  & 3  & 512 & --                & 0.70 & 1e-3 & 5e-4 & Acc \\
\rowcolor[gray]{0.975}Coauthor-CS    & 1500 & 3  & 2  & 512 & ln, res       & 0.30 & 1e-3 & 5e-4 & Acc \\
Amazon-Photo     & 1000 & 3  & 6  & 256 & ln, res           & 0.50 & 1e-3 & 5e-5 & Acc \\
\rowcolor[gray]{0.975}Coauthor-Physics & 1500 & 3 & 2 & 64 & ln, res       & 0.30 & 1e-3 & 5e-4 & Acc \\
\addlinespace[1pt]
\rowcolor[gray]{0.93}\multicolumn{10}{@{}l@{}}{\textit{Large-scale}} \\
ogbn-proteins    & 100  & 2 & 3 & 80  & --      & 0.25 & 1e-2 & 0    & AUC \\
\rowcolor[gray]{0.975}Pokec          & 2000 & 2 & 7 & 256 & bn, res & 0.20 & 5e-4 & 0    & Acc \\
ogbn-arxiv       & 2000 & 2 & 5 & 512 & bn, res          & 0.50 & 5e-4 & 5e-4 & Acc \\
\rowcolor[gray]{0.975}ogbn-products  & 300  & 2 & 5 & 200 & ln      & 0.50 & 3e-3 & 0    & Acc \\
Reddit           & 300  & 2 & 5 & 200 & ln               & 0.50 & 3e-3 & 0    & Acc \\
\bottomrule
\end{tabular*}
\vspace{2pt}
\begin{minipage}{\textwidth}
\footnotesize\emph{Note.} $^{\ast}$ A linear input projection precedes the
backbone.
\end{minipage}
\end{table*}

\begingroup
\setlength{\intextsep}{6pt}

\begin{table}[!t]
\centering
\caption{
Training configurations of evaluated baselines.
$T$ is the dataset-specific epoch budget in Table~\ref{tab:presets};
$\delta$ is the retained-edge ratio.
}
\label{tab:competing-method-settings}

\fontsize{8}{9.2}\selectfont
\setlength{\tabcolsep}{3pt}
\renewcommand{\arraystretch}{1.05}

\newcommand{\CMgroup}[1]{%
  \makebox[12pt]{\rotatebox[origin=c]{90}{\bfseries #1}}}

\newcommand{\CMblock}[1]{%
  \begin{tabularx}{\dimexpr\textwidth-17pt\relax}{@{}
    >{\raggedright\arraybackslash}p{105pt}
    >{\raggedright\arraybackslash}p{75pt}
    >{\raggedright\arraybackslash}X@{}}
  #1
  \end{tabularx}%
}

\begin{tabular}{@{}c@{\hspace{1pt}}c@{}}
\toprule
&
\CMblock{
\textbf{Method(s)} &
\textbf{GNN / budget} &
\textbf{Training and evaluation}
\\}
\\
\midrule

\CMgroup{Anchors}
&
\CMblock{%

No Graph
& MLP; $T$
& Feature-only prediction.
\\[1pt]

Spanning Forest
& Shared GNN; $T$
& One spanning-forest support for training and inference.
\\[1pt]

Full Graph
& TunedGNN; $T$
& Original graph without sparsification.
\\
}
\\

\midrule

\CMgroup{Topology}
&
\CMblock{%

Random, Local Degree, Rank Degree, Forest Fire, SCAN
& Shared GNN; $\delta$; $T$
& One support per run and budget, reused throughout training.
\\[1pt]

L-Spar, G-Spar, L-Sim, Spectral (ER), $t$-Spanner, DSpar
& Shared GNN; $\delta$; $T$
& Each sparsifier is evaluated independently using the shared training preset.
\\
}
\\

\midrule

\CMgroup{Semantic}
&
\CMblock{%

Unified-LTH
& Shared GNN preset
& Two 200-epoch mask-search rounds followed by $T$ ticket-training epochs;
graph retention $\delta$, no weight pruning.
\\[1pt]

AdaGLT
& Shared GNN preset
& 400 mask-search epochs followed by $T$ retraining epochs;
adjacency sparsity $1-\delta$, no weight pruning.
\\[1pt]

MoG
& Shared GNN; $\delta$; $T$
& Three sparsifier experts at retention $[\delta,\delta,\delta]$, trained
jointly with the downstream GNN.
\\
}
\\

\midrule

\CMgroup{Sampling}
&
\CMblock{%

Tuned-GraphSAGE
& Tuned-GraphSAGE; $T$
& Dataset-specific neighborhood sampling on the original graph.
\\[1pt]

Tuned-GraphSAINT
& Shared GNN; $T$
& Random-walk subgraph sampling with dataset-specific settings.
\\
}
\\

\midrule

\CMgroup{Scaffold}
&
\CMblock{%

Scaffold-Greedy
& Shared GNN; $\delta$; $T$
& Full dynamic rescoring after every insertion.
\\[1pt]

Scaffold-Heap
& Shared GNN; $\delta$; $T$
& Cached priorities with selective refresh.
\\[1pt]

Scaffold-Batch
& Shared GNN; $\delta$; $T$
& Scores sampled batches and inserts top-$r$ edges.
\\[1pt]

Scaffold-Fast
& Shared GNN; $\delta$; $T$
& Scores once on the initial spanning-forest backbone.
\\[1pt]

Scaffold-Sample
& Shared GNN; $\delta$; $T$
& Reuses precomputed weights to draw exactly budgeted supports.
\\[1pt]

Scaffold settings
& $(\alpha,\beta_E,\beta_V)=(1,1,0)$
& Refreshed supports may be generated asynchronously.
\\[1pt]

Scaffold-$1$
& Shared GNN; $\delta$; $T$
& One $q$-edge support for training and inference.
\\[1pt]

Scaffold-$K$
& Shared GNN; $\delta$ per support; $T$
& Refreshes every $\rho$ epochs, retains top-$K$ validation supports
($K\in\{3,5,10\}$), and infers once on their union.
\\
}
\\

\bottomrule
\end{tabular}
\end{table}

\endgroup

\textbf{Large-graph adaptations.}
Unified-LTH and AdaGLT use sparse-neighborhood minibatching on large graphs,
\textsc{Coauthor-Physics}, and \textsc{Questions}; AdaGLT additionally uses this route on \textsc{Roman-Empire}. These runs use the dataset-specific $T$ epochs rather than the native mask-search schedules above. MoG uses random-node partitions on large graphs, with train/evaluation partition counts of $1/1$ for \textsc{ogbn-arxiv}, $10/5$ for \textsc{ogbn-proteins}, $3/3$ for \textsc{Pokec}, and $10/10$ for \textsc{Reddit} and \textsc{ogbn-products}.

\textbf{Sampling settings.}
Tuned-GraphSAINT uses random-walk sampling. Small/medium graphs use batch size 500, walk length 2, three steps per epoch, and coverage 100 with normalization.
Large graphs use walk length 4 and 30 steps per epoch, with batch size 6000
(20,000 for \textsc{ogbn-products} and \textsc{Pokec}) and coverage 0 without
normalization. For Tuned-GraphSAGE, we follow the same settings specified by the authors in ~\cite{luo2024classic}.

\section{Complete numerical results}
\label{app:complete-numerical-results}
\subsection{Results on Homophilic Graphs}
Table~\ref{tab:appendix-full-results-homophilic} shows detailed numerical comparison of \sgt with baselines $\delta\in\{0.3,0.5,0.7\}$.
\begin{table*}[!htbp]
\centering
\caption{Homophilic graphs: complete mean $\pm$ standard deviation results (\%) over eight datasets.}
\label{tab:appendix-full-results-homophilic}
\setlength{\tabcolsep}{1.15pt}
\renewcommand{\arraystretch}{0.80}
\fontsize{7pt}{7.6pt}\selectfont
 &\cellcolor[rgb]{0.88,0.95,0.88}\textbf{87.37} \\
\bottomrule
\end{tabular*}
\vspace{2pt}
\begin{minipage}{0.99\textwidth}
\scriptsize Green, blue, and amber mark the top three \emph{matched-budget} methods; Scaf.-1 is included (one fixed support at $\delta$), while Scaf.-$K$ is excluded because its union exceeds the budget and is shown below the rule with its best variant bolded. The ratio-independent block is ranked separately. GM is the geometric mean. $^{\dagger}$Ratio below the dataset's spanning-forest floor $(|\gV|-c)/|\gE|$: no connected support exists. $^{\ddagger}$Support rebuilt asynchronously. $^{\star}$Per-dataset max over $\delta\in[0.1,0.9]$; parentheses give the lowest $\delta$ attaining it.
\end{minipage}
\end{table*}

\subsection{Results on Heterophilic Graphs}
Table~\ref{tab:appendix-full-results-heterophilic} shows detailed numerical comparison of \sgt with baselines $\delta\in\{0.3,0.5,0.7\}$. 
\begin{table*}[!htbp]
\centering
\caption{Heterophilic graphs: complete mean $\pm$ standard deviation results (\%) over six datasets.}
\label{tab:appendix-full-results-heterophilic}
\setlength{\tabcolsep}{1.15pt}
\renewcommand{\arraystretch}{0.88}
\fontsize{7pt}{7.6pt}\selectfont
 &\cellcolor[rgb]{0.89,0.93,0.98}\underline{64.96} \\
\bottomrule
\end{tabular*}
\vspace{2pt}
\begin{minipage}{0.99\textwidth}
\scriptsize Green, blue, and amber mark the top three \emph{matched-budget} methods; Scaf.-1 is included (one fixed support at $\delta$), while Scaf.-$K$ is excluded because its union exceeds the budget and is shown below the rule with its best variant bolded. The ratio-independent block is ranked separately. GM: geometric mean. $^{\dagger}$Below the spanning-forest floor, no connected support exists. $^{\ddagger}$Support rebuilt asynchronously. $^{\star}$Per-dataset max over $\delta\in[0.1,0.9]$; parentheses give the lowest $\delta$ attaining it.
\end{minipage}
\end{table*}

\subsection{Results on Large-scale Graphs}
Table~\ref{tab:appendix-full-results-large} shows detailed numerical comparison of \sgt with baselines $\delta\in\{0.1,0.3,0.5\}$ .
\begin{table*}[!htbp]
\centering
\caption{Large-scale graphs: complete mean $\pm$ standard deviation results (\%) over five datasets.}
\label{tab:appendix-full-results-large}
\setlength{\tabcolsep}{1.15pt}
\renewcommand{\arraystretch}{0.88}
\fontsize{7pt}{7.6pt}\selectfont
 &\cellcolor[rgb]{0.88,0.95,0.88}\textbf{82.65} \\
\bottomrule
\end{tabular*}
\vspace{2pt}
\begin{minipage}{0.99\textwidth}
\scriptsize Green, blue, and amber mark the top three \emph{matched-budget} methods; Scaf.-1 is included (one fixed support at $\delta$), while Scaf.-$K$ is excluded because its union exceeds the budget and is shown below the rule with its best variant bolded. The ratio-independent block is ranked separately. GM: geometric mean. $^{\dagger}$Below the spanning-forest floor, no connected support exists. $^{\ddagger}$Support rebuilt asynchronously. $^{\star}$Per-dataset max over $\delta\in[0.1,0.9]$; parentheses give the lowest $\delta$ attaining it.
\end{minipage}
\end{table*}

\subsection{Case study of performance of different datasets}
\label{app:casestudy}
The results in Tables~\ref{tab:appendix-full-results-homophilic},\ref{tab:appendix-full-results-heterophilic},\ref{tab:appendix-full-results-large}
show cases where edge budget alone does not explain performance.

\paragraph{Budgets below the spanning-forest floor.}
For sparse graphs such as \textsc{roman-empire}, \textsc{citeseer}, and
\textsc{cora}, some evaluated retention ratios fall below the
spanning-forest floor. At these budgets, no method can
simultaneously satisfy the edge budget and preserve all connected components, so the resulting degradation should primarily be interpreted as a consequence
of an infeasible connectivity budget rather than edge-selection quality.

\paragraph{Local neighborhood preservation on \textsc{minesweeper}.}
\textsc{minesweeper} is a notable exception to the aggregate trend:
local similarity-based sparsifiers outperform \sgt-1 at $\delta=0.7$.
This suggests that, for some tasks, preserving the distribution of retained
edges within individual neighborhoods can be more important than optimizing
global support paths. 

\paragraph{Backbone choice on \textsc{chameleon} and \textsc{squirrel}.}
These small heterophilic graphs are the two cases where \sgt-1 uses the
low-stretch SLSF backbone. Here \sgt-1, Spectral, and MoG obtain similar
performance, while several degree- or similarity-based sparsifiers are
noticeably weaker. This is consistent with the importance of preserving
short, well-distributed communication paths on these graphs.

\paragraph{\sgt-$K$ is not uniformly better than \sgt-1.}
Refreshing supports generally improves performance, but the gain is dataset dependent. In particular, \sgt-$K$ slightly underperforms \sgt-1 on \textsc{ogbn-proteins}, \textsc{reddit}, and \textsc{ogbn-products} at the matched target ratios, while improving substantially on datasets such as \textsc{ogbn-arxiv} and \textsc{pokec}. Because \sgt-$K$ trains on individual $q$-edge supports but evaluates on their union, its benefit depends on whether the refreshed supports provide complementary useful structure.  Also, irregular swap of support graph can be harmful. The effects of refresh frequency and training versus inference topology are examined further in Appendix~\ref{app:refresh-frequency}.

\subsection{Realized Inference Union Size of \sgt-$K$}
\label{app:inference_size}

Table~\ref{tab:inference-budget} shows that the realized \sgt-$K$ union varies substantially across datasets and variants, ranging from nearly a single $q$-edge support to almost the full graph. This variability motivates reporting \sgt-$K$ separately from the matched-budget \sgt-1 results.

\begin{table}[!htbp]
\centering
\caption{Realized union size for Scaffold-$K$ at the targets $\delta_d$.}
\label{tab:inference-budget}


\definecolor{IBBatch}{HTML}{000000}
\definecolor{IBFast}{HTML}{000000}
\definecolor{IBSample}{HTML}{000000}

\setlength{\tabcolsep}{2.2pt}
\renewcommand{\arraystretch}{1.00}
\fontsize{7.2pt}{8.4pt}\selectfont

\begin{tabular*}{\textwidth}{@{\extracolsep{\fill}}lc*{3}{rrr}@{}}
\toprule
\textbf{Dataset} & $\delta_d$
& \multicolumn{3}{c}{\textcolor{IBBatch}{\textbf{Scaffold-Batch}}}
& \multicolumn{3}{c}{\textcolor{IBFast}{\textbf{Scaffold-Fast}}}
& \multicolumn{3}{c}{\textcolor{IBSample}{\textbf{Scaffold-Sample}}} \\
\cmidrule(lr){3-5}
\cmidrule(lr){6-8}
\cmidrule(l){9-11}
& & $K$ & $\delta_{\cup}$ & $|E_{\cup}|/q$
& $K$ & $\delta_{\cup}$ & $|E_{\cup}|/q$
& $K$ & $\delta_{\cup}$ & $|E_{\cup}|/q$ \\
\midrule

Cora
& 0.7
& 10 & 0.998 & 1.43
& 10 & 0.870 & 1.24
& 10 & 0.844 & 1.21 \\

CiteSeer
& 0.7
& 10 & 0.981 & 1.40
& 10 & 0.910 & 1.30
& 10 & 0.752 & 1.07 \\

Amazon Photo
& 0.3
& 10 & 0.905 & 3.02
& 10 & 0.808 & 2.69
& 10 & 0.695 & 2.32 \\

WikiCS
& 0.3
& 10 & 0.329 & 1.10
& 10 & 0.300 & 1.00
& 10 & 0.821 & 2.74 \\

Amazon Computers
& 0.3
& 10 & 0.442 & 1.47
& 10 & 0.436 & 1.45
& 5 & 0.570 & 1.90 \\

Coauthor CS
& 0.5
& 10 & 0.989 & 1.98
& 10 & 0.651 & 1.30
& 10 & 0.720 & 1.44 \\

PubMed
& 0.5
& 10 & 0.889 & 1.78
& 10 & 0.500 & 1.00
& 10 & 0.500 & 1.00 \\

Coauthor Physics
& 0.3
& 10 & 0.533 & 1.78
& 10 & 0.498 & 1.66
& 5 & 0.576 & 1.92 \\

Chameleon
& 0.3
& 5 & 0.725 & 2.42
& 10 & 0.300 & 1.00
& 5 & 0.453 & 1.51 \\

Squirrel
& 0.3
& 5 & 0.300 & 1.00
& 10 & 0.300 & 1.00
& 5 & 0.549 & 1.83 \\

Minesweeper
& 0.7
& 10 & 1.000 & 1.43
& 10 & 0.700 & 1.00
& 10 & 0.760 & 1.09 \\

Roman Empire
& 0.7
& 10 & 0.858 & 1.23
& 10 & 0.992 & 1.42
& 10 & 0.761 & 1.09 \\

Amazon Ratings
& 0.7
& 10 & 0.999 & 1.43
& 10 & 0.975 & 1.39
& 10 & 0.898 & 1.28 \\

Questions
& 0.7
& 10 & 0.963 & 1.38
& 10 & 0.973 & 1.39
& 10 & 0.744 & 1.06 \\

ogbn-proteins
& 0.3
& 10 & 0.305 & 1.02
& 10 & 0.324 & 1.08
& 10 & 0.987 & 3.29 \\

ogbn-arxiv
& 0.3
& 10 & 0.583 & 1.94
& 10 & 0.541 & 1.80
& 10 & 0.802 & 2.67 \\

Reddit
& 0.3
& 10 & 0.325 & 1.08
& 10 & 0.325 & 1.08
& 10 & 0.994 & 3.31 \\

Pokec
& 0.3
& 10 & 0.479 & 1.60
& 10 & 0.523 & 1.74
& 10 & 0.840 & 2.80 \\

ogbn-products
& 0.3
& 10 & 0.626 & 2.09
& 10 & 0.585 & 1.95
& 10 & 0.754 & 2.51 \\

\bottomrule
\end{tabular*}

\vspace{2pt}
\begin{minipage}{\textwidth}
\fontsize{6.5pt}{7.5pt}\selectfont
$q=\lceil\delta_d|E|\rceil$ and
$\delta_{\cup}=|E_{\cup}|/|E|$.
\end{minipage}

\end{table}

\FloatBarrier
\clearpage
\section{Runtime and Memory Details with GNN Benchmarks}
\label{app:runtime-details}
The runtime of \sgt variants on synthetic graphs is provided in Appendix~\ref{app:scaffold-runtime}. In this section, we compare them with other sparsifiers and methods when used with GNNs.

\subsection{Sparsification time and Scaling of \sgt}
\label{app:runtime-profiles}

\paragraph{CPU scaling of \sgt support generation.}
Table~\ref{tab:runtime-workers} shows how the scalable \sgt variants benefit from additional CPU workers on \textsc{Reddit} and \textsc{ogbn-products}. 
\texttt{Fast} and \texttt{Batch} remain relatively expensive on these large graphs, motivating asynchronous construction when supports are refreshed during training. 
\texttt{Sample} has a larger one-time startup cost, but subsequent supports are generated much faster by reusing its precomputed sampling state.

\begin{table}[!htbp]
\centering
\caption{CPU scaling of \sgt support generation at $\delta=0.3$.}
\label{tab:runtime-workers}
\scriptsize
\setlength{\tabcolsep}{2pt}
\renewcommand{\arraystretch}{1.05}
\begin{tabular*}{\linewidth}{@{}l@{\extracolsep{\fill}}rrrrrr@{}}
\toprule
\textbf{Variant / stage} & \multicolumn{6}{c}{\textbf{CPU workers}} \\
\cmidrule(lr){2-7}
 & 1 & 2 & 4 & 8 & 16 & 32 \\
\midrule
\multicolumn{7}{@{}l@{}}{\textbf{\textsc{Reddit}}} \\
\textbf{Scaffold-Fast} & 498.38 & 284.13 & 213.08 & 196.54 & 201.54 & 182.08 \\
\textbf{Scaffold-Batch} & 630.43 & 355.01& 265.93 & 242.33 & 232.81 & 216.50 \\
\textbf{Sample}: first support & 504.96 & 358.56 & 232.61 & 139.49 & 124.69 & 87.25 \\
\textbf{Sample}: subsequent support & 1.40 & 1.27 & 1.22 & 1.20 & 1.18 & 1.16 \\
\midrule
\multicolumn{7}{@{}l@{}}{\textbf{\textsc{ogbn-products}}} \\
\textbf{Scaffold-Fast} & 379.32 & 230.34 & 175.69 & 156.10 & 115.04 & 109.39 \\
\textbf{Scaffold-Batch} & 539.84 & 333.42 & 247.68 & 202.80 & 159.35 & 148.05 \\
\textbf{Sample}: first support & 1109.74 & 615.16 & 345.92 & 198.35 & 125.22 & 90.23 \\
\textbf{Sample}: subsequent support & 1.51 & 1.35 & 1.29 & 1.25 & 1.21 & 1.21 \\
\bottomrule
\end{tabular*}

\par\vspace{2pt}
\begin{minipage}{\linewidth}
\fontsize{7pt}{8pt}\selectfont
Medians over 3 completed sessions. Sample subsequent-support time reuses cached weights and sampling state. Dataset loading and warm-up are excluded.
\end{minipage}
\end{table}

\paragraph{Startup and reuse cost of \sgtsa.}
Table~\ref{tab:runtime-sample-stages} further separates the one-time initialization of \sgtsa from repeated support generation. 
Most startup cost comes from constructing the forests and sampling weights, whereas loading saved weights and drawing subsequent supports is substantially cheaper. 
This reuse makes \sgtsa particularly suitable for \sgt-$K$, where sparse supports are refreshed repeatedly during training.

\begin{table}[!htbp]
\centering
\caption{\sgtsa startup and reuse costs at $\delta=0.3$.}
\label{tab:runtime-sample-stages}
\scriptsize
\setlength{\tabcolsep}{2pt}
\renewcommand{\arraystretch}{1.05}
\begin{tabular*}{\linewidth}{@{}l@{\extracolsep{\fill}}rrrrrr@{}}
\toprule
\textbf{Stage} & \multicolumn{6}{c}{\textbf{CPU workers}} \\
\cmidrule(lr){2-7}
 & 1 & 2 & 4 & 8 & 16 & 32 \\
\midrule
\multicolumn{7}{@{}l@{}}{\textbf{\textsc{Reddit}}} \\
First support (total) & 504.96 & 358.56 & 232.61 & 139.49 & 124.69 & 87.25 \\
\quad Forests + weights & 472.57 & 320.16 & 195.16 & 107.40 & 87.36 & 54.80 \\
\quad Save weights & 25.70 & 30.71 & 30.91 & 25.53 & 30.79 & 25.71 \\
\quad Other first-support work & 6.73 & 6.60 & 6.57 & 6.53 & 6.54 & 6.53 \\
Startup from saved weights & 37.02 & 47.40 & 36.21 & 35.74 & 41.83 & 36.11 \\
Subsequent support & 1.40 & 1.27 & 1.22 & 1.20 & 1.18 & 1.16 \\
\midrule
\multicolumn{7}{@{}l@{}}{\textbf{\textsc{ogbn-products}}} \\
First support (total) & 1109.74 & 615.16 & 345.92 & 198.35 & 125.22 & 90.23 \\
\quad Forests + weights & 1097.54 & 600.69 & 332.41 & 185.97 & 113.01 & 75.17 \\
\quad Save weights & 2.09 & 4.35 & 2.49 & 2.35 & 2.13 & 4.72 \\
\quad Other first-support work & 10.29 & 10.14 & 10.09 & 10.04 & 10.01 & 10.02 \\
Startup from saved weights & 14.90 & 15.04 & 14.82 & 15.31 & 14.40 & 15.07 \\
Subsequent support & 1.51 & 1.35 & 1.29 & 1.25 & 1.21 & 1.21 \\
\bottomrule
\end{tabular*}

\par\vspace{2pt}
\begin{minipage}{\linewidth}
\fontsize{7pt}{8pt}\selectfont
Medians over 3 sessions. Indented rows decompose the first-support cost; subsequent supports reuse the existing sampler. Independently computed medians may not sum exactly.
\end{minipage}
\end{table}

\subsection{Training profile breakdown of \sgtf-1 and Full-GCN.}
Table~\ref{tab:runtime-profiles} (data behind Figure~\ref{fig:runtime_profiling}) shows how much time \sgtf-1 and Full-Graph (TunedGNN~\citep{luo2024classic}) spend on sparsification and training. We profile two large datasets, \textsc{Reddit} and \textsc{ogbn-products}, each run for 300 epochs on an H100 80GB GPU with 8 CPU workers.
We further break down training time to show the cost of \texttt{Gather-Scatter} aggregation with the \texttt{COO} format and \texttt{SpMM} with the \texttt{CSR/CSC} format.
The benefits of \sgt are greatest on large graphs, at low retention ratios, and over long runs, where per-epoch savings amortize the sparsification overhead.


\begin{table}[!htbp]
\centering
\caption{\texttt{Gather-Scatter/SpMM} values for Figure~\ref{fig:runtime_profiling}. Times are seconds over 300 epochs; GPU is peak allocated memory (GiB).}
\label{tab:runtime-profiles}
\scriptsize
\setlength{\tabcolsep}{1.5pt}
\renewcommand{\arraystretch}{1.02}
\begin{tabular*}{\linewidth}{@{}l@{\extracolsep{\fill}}rrrrrrrrrr@{}}
\toprule
\textbf{Variant} & $\delta$ & $S$ & \multicolumn{4}{c}{\textbf{Gather-Scatter}} & \multicolumn{4}{c}{\textbf{SpMM}} \\
\cmidrule(lr){4-7}\cmidrule(lr){8-11}
 &  &  & Agg. & Rest & Total & GPU & Agg. & Rest & Total & GPU \\
\midrule
\rowcolor[gray]{0.90}
\multicolumn{11}{@{}l@{}}{\textbf{\textsc{reddit}}} \\
\textbf{Scaffold-Fast-1} & 0.1 & 133.9 & 37.5 & 70.8 & 242.3 & 9.98 & 12.6 & 64.6 & 211.2 & 3.97 \\
\textbf{Scaffold-Fast-1} & 0.3 & 174.8 & 138.4 & 132.1 & 445.3 & 24.52 & 27.2 & 106.9 & 308.9 & 5.54 \\
\textbf{Scaffold-Fast-1} & 0.5 & 223.5 & 236.7 & 190.3 & 650.5 & 39.09 & 41.5 & 146.8 & 411.7 & 7.35 \\
\cmidrule(lr){1-11}
Full-GCN & 1.0 & 0.0 & 553.0 & 336.2 & 889.3 & 75.43 & 77.0 & 249.8 & 326.8 & 11.84 \\

\rowcolor[gray]{0.90}
\multicolumn{11}{@{}l@{}}{\textbf{\textsc{ogbn-products}}} \\
\textbf{Scaffold-Fast-1} & 0.1 & 122.2 & 57.2 & 326.9 & 506.2 & 25.18 & 74.7 & 313.3 & 510.2 & 25.17 \\
\textbf{Scaffold-Fast-1} & 0.3 & 179.7 & 133.4 & 402.3 & 715.4 & 35.21 & 94.2 & 369.0 & 642.8 & 26.17 \\
\textbf{Scaffold-Fast-1} & 0.5 & 224.4 & 274.2 & 475.6 & 974.1 & 45.23 & 116.4 & 424.5 & 765.3 & 27.18 \\
\cmidrule(lr){1-11}
Full-GCN & 1.0 & 0.0 & 591.4 & 655.7 & 1247.0 & 70.33 & 180.1 & 560.4 & 740.5 & 29.69 \\
\bottomrule
\end{tabular*}

\par\vspace{2pt}
\begin{minipage}{\linewidth}
\scriptsize
$S$: support generation; Agg./Rest: aggregation/other training work;
Total $=S+\mathrm{Agg.}+\mathrm{Rest}$.
Fast is the Scaffold-Fast variant. Each entry is one profiling run.
\end{minipage}
\end{table}

\subsection{Sparsification and end-to-end runtime}
\label{app:runtime-construction}

\paragraph{Sparsification time.}
Table~\ref{tab:runtime-sparsification} compares standalone sparsification time
at the same retention ratio, $\delta=0.3$, across five large-scale graphs.
Among the Scaffold variants, \sgtsa is the fastest on all five graphs,
while \sgtf and \sgtb incur higher cost from additional support-specific
computation. 
The benefit of \sgtsa really comes when we need to resparsify at subsequent epochs during training.

\begin{table}[!htbp]
\centering
\caption{Sparsification time (s) at $\delta=0.3$ on five large-scale graphs.}
\label{tab:runtime-sparsification}

\scriptsize
\setlength{\tabcolsep}{4pt}
\renewcommand{\arraystretch}{1.05}

\begin{tabular*}{\linewidth}{@{}l@{\extracolsep{\fill}}rrrrr@{}}
\toprule
\textbf{Method}
& \textbf{Reddit}
& \textbf{\shortstack{ogbn-\\products}}
& \textbf{\shortstack{ogbn-\\arxiv}}
& \textbf{\shortstack{ogbn-\\proteins}}
& \textbf{Pokec} \\
\midrule

Random
& 1.23
& 1.37
& 0.015
& 0.819
& 0.582 \\

Local degree
& 41.20
& 30.39
& 0.543
& 24.01
& 10.02 \\

Rank degree
& 249.83
& 69.20
& 4.05
& 169.86
& 27.50 \\

Forest Fire
& 2501.61
& ---
& 10.33
& 1414.05
& 692.14 \\

SCAN
& 78.66
& 47.57
& 0.649
& 40.52
& 13.32 \\

L-Spar
& 80.46
& 49.19
& 0.594
& 40.83
& 11.48 \\

G-Spar
& 78.30
& 47.63
& 0.654
& 40.20
& 13.43 \\

L-Sim
& 82.70
& 54.08
& 0.664
& 42.61
& 13.04 \\

Spectral$^{\ddagger}$
& 387.50
& 379.95
& 5.80
& 286.24
& 108.09 \\

Spanning tree$^{\dagger}$
& 36.21
& 17.25
& 0.225
& 2.10
& 3.32 \\

\midrule

\textbf{Scaffold-Fast}
& 72.00
& 153.05
& 6.30
& 96.12
& 80.09 \\

\textbf{Scaffold-Batch}
& 208.85
& 250.71
& 9.27
& 115.42
& 78.77 \\

\textbf{Scaffold-Sample}
& 48.83
& 133.89
& 6.06
& 24.09
& 11.68 \\

\bottomrule
\end{tabular*}

\par\vspace{2pt}
\begin{minipage}{\linewidth}\scriptsize
$\dagger$: spanning-tree reference with unmatched retention;
---: unavailable.
\sgtsa includes one-time weight preparation.

$\ddagger$: approximate effective resistance; using the Spielman--Srivastava JL method
\citep{spielman2008graph}, avoiding explicit Laplacian pseudoinversion.
We use $d=\lceil4\ln n\rceil$ Gaussian projections with approximate-Cholesky
solves~\citep{gao2023robust} ($10^{-2}$ tolerance) on the smaller graph,
64 Rademacher projections with PCG on \textsc{ogbn-products},
\textsc{ogbn-proteins}, and \textsc{Pokec}, and
TGT+~\citep{zhang2023efficient} with 128 eigenpairs on \textsc{Reddit}
\end{minipage}
\end{table}

\paragraph{End-to-end runtime.}
Table~\ref{tab:runtime-end-to-end} compares end-to-end runtime at the same
$\delta=0.3$ retention ratio over 300 training epochs, including one-time
sparsification cost. Despite this initial overhead, all three Scaffold variants
reduce total runtime relative to Full-GCN on \textsc{Reddit},
\textsc{ogbn-products}, and \textsc{ogbn-arxiv}. 
For example, \sgtsa
requires 287.1\,s versus 713.4\,s on \textsc{Reddit}, 642.0\,s versus 1154.0\,s on
\textsc{ogbn-products}, and 76.7\,s versus 96.7\,s on \textsc{ogbn-arxiv}. \sgtf and \sgtb show the same trend, indicating that the lower sparse-graph training cost amortizes support construction over the 300-epoch horizon. The marked AdaGLT and Unified-LTH
configurations exceed the 3600\,s run limit and are projected.

\begin{table}[!htbp]
\centering
\caption{End-to-end runtime (s) at $\delta=0.3$ over 300 epochs.
$S$: sparsification; $T_{300}$: training; $E_{300}=S+T_{300}$.}
\label{tab:runtime-end-to-end}

\scriptsize
\setlength{\tabcolsep}{2pt}
\renewcommand{\arraystretch}{1.08}

\begin{tabular*}{\linewidth}{@{}l@{\extracolsep{\fill}}rrrrrrrrr@{}}
\toprule
\textbf{Method}
& \multicolumn{3}{c}{\textbf{Reddit}}
& \multicolumn{3}{c}{\textbf{ogbn-products}}
& \multicolumn{3}{c}{\textbf{ogbn-arxiv}} \\
\cmidrule(lr){2-4}
\cmidrule(lr){5-7}
\cmidrule(lr){8-10}

& $S$ & $T_{300}$ & $E_{300}$
& $S$ & $T_{300}$ & $E_{300}$
& $S$ & $T_{300}$ & $E_{300}$ \\
\midrule

Full-GCN
& 0.0 & 713.4 & 713.4
& 0.0 & 1154.0 & 1154.0
& 0.0 & 96.7 & 96.7 \\

\midrule

Scaffold-Fast-1
& 72.0 & 237.0 & 309.0
& 153.1 & 549.0 & 702.0
& 6.3 & 76.3 & 82.6 \\

Scaffold-Batch-1
& 208.8 & 235.3 & 444.2
& 250.7 & 522.4 & 773.1
& 9.3 & 72.8 & 82.1 \\

Scaffold-Sample-1
& 48.8 & 238.3 & 287.1
& 133.9 & 508.1 & 642.0
& 6.1 & 70.7 & 76.7 \\

\midrule

DSpar
& 4.0 & 224.4 & 228.4
& 4.1 & 491.8 & 496.0
& 0.1 & 70.4 & 70.5 \\

\midrule

MoG
& incl. & 1432.3 & 1432.3
& incl. & 2536.4 & 2536.4
& incl. & 154.1 & 154.1 \\

AdaGLT
& incl. & 8819.6 & 8819.6$^{\dagger}$
& incl. & 15213.3 & 15213.3$^{\dagger}$
& incl. & 916.3 & 916.3 \\

Unified-LTH
& 5.4 & 7144.1 & 7149.5$^{\dagger}$
& 6.7 & 13161.1 & 13167.8$^{\dagger}$
& 0.1 & 395.9 & 396.0 \\

\bottomrule
\end{tabular*}

\par\vspace{2pt}
\begin{minipage}{\linewidth}\scriptsize
$^{\dagger}$ marks AdaGLT/Unified-LTH totals exceeding the 3600\,s run limit and are projected from the epochs they ran;
``incl.'' means graph-selection cost is included in training.
\end{minipage}
\end{table}

\FloatBarrier
\section{Additional Ablation and Structural Analysis}
\label{app:ablation}

\subsection{Scaffold Variants and Support Backbones with \texttt{Greedy}}
\label{app:variant-backbone}
Since \sgte is expensive to use in large graphs,
Figure~\ref{fig:backbonesvariantsgreedy} compares the performance of variants and backbones on two small graphs \textsc{Cora} and \textsc{Chameleon}. 
Across support backbones, \texttt{Greedy}, \texttt{Heap}, and \texttt{Batch} generally provide the strongest reductions in dilation and congestion as expected, while \texttt{Fast} and \texttt{Sample} trade structural refinement for substantially lower construction cost. As in the main results, predictive performance does not strictly follow the structural score and also depends on
the dataset and backbone.

\begin{figure}[!htbp]
    \centering
    \includegraphics[width=\linewidth]{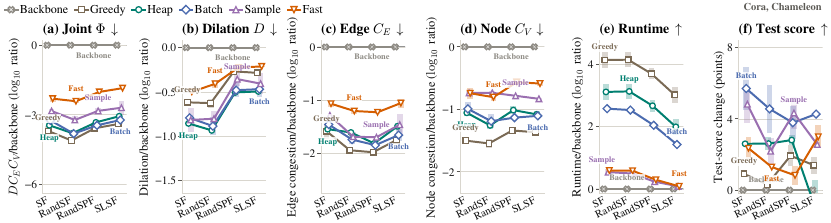}
    \caption{\sgt variants across support backbones.
    Extension of Figure~\ref{fig:variants_backbones} including the full
    Greedy reference on Cora and Chameleon, where its computational cost
    remains tractable.}
    \label{fig:backbonesvariantsgreedy}
\end{figure}

\FloatBarrier
\subsection{Effect of the path node and edge congestion parameter $p_E$, $p_V$}
\label{app:pathcongestion}
We study the effect of the path aggregation parameters $p_E$ and $p_V$ of 
Equation~\ref{eq:scaffold-path-edge-congestion} and \ref{eq:scaffold-path-node-congestion}.
We compare average ($p=1$), quadratic ($p=2$), and maximum ($p=\infty$) aggregation using \sgtb with $(\alpha,\beta_E,\beta_V)=(1,1,1)$, while keeping the support backbone and edge budget fixed.

\begin{figure}[!htbp]
\centering
\includegraphics[width=\textwidth]{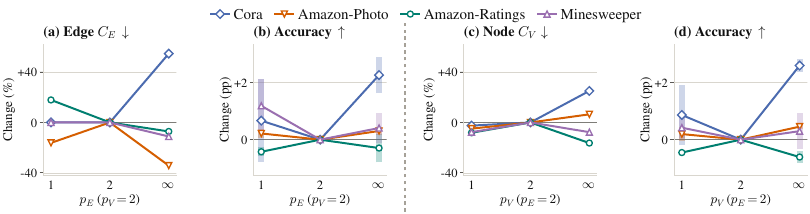}
\caption{Effect of path congestion aggregation.
(a--b) vary $p_E$ with $p_V=2$; (c--d) vary $p_V$ with $p_E=2$.
Changes are relative to $p_E=p_V=2$.}
\label{fig:path-order-concise}
\end{figure}

Figure~\ref{fig:path-order-concise} shows that changing the aggregation
order affects both the resulting edge/node congestion and downstream
accuracy. Larger $p_E$ places more emphasis on highly loaded edges, while
larger $p_V$ emphasizes highly loaded nodes. For these GNN benchmarks, the effect is dataset dependent,
and reductions in maximum congestion do not always translate directly into
higher accuracy. We therefore use $p_E=p_V=2$ as the default.

\FloatBarrier
\subsection{Sensitivity to the number of supports $K$}
\label{app:k-sensitivity}

We vary $K$ while keeping training settings and per-support budgets fixed. Evaluation unions $K$ validation-selected supports; training refreshes supports even at $K=1$.
On \textsc{Cora}, increasing $K$ from $1$ to $20$ raises peak coverage from $70\%$ to nearly $100\%$, while accuracy increases by $0.58$ (\texttt{Fast}) and $1.20$ (\texttt{Sample}) percentage points. On \textsc{Amazon-Computers}, both variants gain $0.85$ points by $K=4$, with little further improvement at $K=20$. On Coauthor-CS, both variants have lower accuracy at $K=20$ than at $K=16$
despite higher peak coverage.
The experiment shows that retaining more support for evaluation is useful; however, it comes at a computational cost. 

\begin{table}[!htbp]
\centering
\definecolor{KSensitivityBest}{HTML}{007F70}
\caption{Sensitivity to $K$. Accuracy is mean $\pm$ standard deviation (\%).
Cov. is the maximum logged union edge coverage (\%), excluding self-loops.
Green boxes pair each column's best accuracy with its coverage at the same $K$.}
\label{tab:k-sensitivity}
\setlength{\tabcolsep}{1.7pt}
\renewcommand{\arraystretch}{1.05}
\fontsize{8.4pt}{10pt}\selectfont
\begin{tabular*}{\textwidth}{@{}r@{\extracolsep{\fill}}*{6}{cc}@{}}
\toprule
\textbf{$K$} & \multicolumn{4}{c}{\tikz[remember picture,overlay]\coordinate (ks-table-top);\textbf{Cora}} & \multicolumn{4}{c}{\textbf{Amazon-Computers}} & \multicolumn{4}{c}{\textbf{Coauthor-CS}} \\
& \multicolumn{4}{c}{{\fontsize{6.8pt}{7.2pt}\selectfont$\delta=0.7$, 5 runs}} & \multicolumn{4}{c}{{\fontsize{6.8pt}{7.2pt}\selectfont$\delta=0.3$, 3 runs}} & \multicolumn{4}{c}{{\fontsize{6.8pt}{7.2pt}\selectfont$\delta=0.3$, 3 runs}} \\
\cmidrule(lr){2-5}\cmidrule(lr){6-9}\cmidrule(lr){10-13}
& \multicolumn{2}{c}{Fast} & \multicolumn{2}{c}{Sample} & \multicolumn{2}{c}{Fast} & \multicolumn{2}{c}{Sample} & \multicolumn{2}{c}{Fast} & \multicolumn{2}{c}{Sample} \\
\cmidrule(lr){2-3}\cmidrule(lr){4-5}\cmidrule(lr){6-7}\cmidrule(lr){8-9}\cmidrule(lr){10-11}\cmidrule(lr){12-13}
& Acc. & Cov. & Acc. & Cov. & Acc. & Cov. & Acc. & Cov. & Acc. & Cov. & Acc. & Cov. \\
\midrule
1 & 82.96{\color[gray]{0.45}\fontsize{6.2pt}{6.4pt}\selectfont\,\textpm0.86} & {\color[gray]{0.30}70.0} & 82.70{\color[gray]{0.45}\fontsize{6.2pt}{6.4pt}\selectfont\,\textpm0.85} & {\color[gray]{0.30}70.0}\tikz[remember picture,overlay]\coordinate (ks-gap-1-left); & \tikz[remember picture,overlay]\coordinate (ks-gap-1-right);91.88{\color[gray]{0.45}\fontsize{6.2pt}{6.4pt}\selectfont\,\textpm0.28} & {\color[gray]{0.30}30.0} & 92.58{\color[gray]{0.45}\fontsize{6.2pt}{6.4pt}\selectfont\,\textpm0.13} & {\color[gray]{0.30}30.0}\tikz[remember picture,overlay]\coordinate (ks-gap-2-left); & \tikz[remember picture,overlay]\coordinate (ks-gap-2-right);95.27{\color[gray]{0.45}\fontsize{6.2pt}{6.4pt}\selectfont\,\textpm0.14} & {\color[gray]{0.30}30.0} & 94.81{\color[gray]{0.45}\fontsize{6.2pt}{6.4pt}\selectfont\,\textpm0.36} & {\color[gray]{0.30}30.0} \\
2 & 82.70{\color[gray]{0.45}\fontsize{6.2pt}{6.4pt}\selectfont\,\textpm0.86} & {\color[gray]{0.30}87.4} & 82.62{\color[gray]{0.45}\fontsize{6.2pt}{6.4pt}\selectfont\,\textpm0.58} & {\color[gray]{0.30}80.3} & 92.35{\color[gray]{0.45}\fontsize{6.2pt}{6.4pt}\selectfont\,\textpm0.08} & {\color[gray]{0.30}33.8} & 93.04{\color[gray]{0.45}\fontsize{6.2pt}{6.4pt}\selectfont\,\textpm0.42} & {\color[gray]{0.30}49.7} & 95.35{\color[gray]{0.45}\fontsize{6.2pt}{6.4pt}\selectfont\,\textpm0.06} & {\color[gray]{0.30}46.7} & 95.35{\color[gray]{0.45}\fontsize{6.2pt}{6.4pt}\selectfont\,\textpm0.16} & {\color[gray]{0.30}37.6} \\
3 & 82.78{\color[gray]{0.45}\fontsize{6.2pt}{6.4pt}\selectfont\,\textpm0.37} & {\color[gray]{0.30}94.3} & 82.72{\color[gray]{0.45}\fontsize{6.2pt}{6.4pt}\selectfont\,\textpm0.99} & {\color[gray]{0.30}85.4} & 92.52{\color[gray]{0.45}\fontsize{6.2pt}{6.4pt}\selectfont\,\textpm0.42} & {\color[gray]{0.30}36.4} & 93.29{\color[gray]{0.45}\fontsize{6.2pt}{6.4pt}\selectfont\,\textpm0.36} & {\color[gray]{0.30}59.6} & 95.81{\color[gray]{0.45}\fontsize{6.2pt}{6.4pt}\selectfont\,\textpm0.04} & {\color[gray]{0.30}58.8} & 95.36{\color[gray]{0.45}\fontsize{6.2pt}{6.4pt}\selectfont\,\textpm0.22} & {\color[gray]{0.30}45.2} \\
4 & 82.82{\color[gray]{0.45}\fontsize{6.2pt}{6.4pt}\selectfont\,\textpm0.70} & {\color[gray]{0.30}97.0} & 82.96{\color[gray]{0.45}\fontsize{6.2pt}{6.4pt}\selectfont\,\textpm0.73} & {\color[gray]{0.30}90.8} & 92.73{\color[gray]{0.45}\fontsize{6.2pt}{6.4pt}\selectfont\,\textpm0.27} & {\color[gray]{0.30}38.3} & 93.43{\color[gray]{0.45}\fontsize{6.2pt}{6.4pt}\selectfont\,\textpm0.32} & {\color[gray]{0.30}68.6} & 95.72{\color[gray]{0.45}\fontsize{6.2pt}{6.4pt}\selectfont\,\textpm0.05} & {\color[gray]{0.30}67.3} & 95.45{\color[gray]{0.45}\fontsize{6.2pt}{6.4pt}\selectfont\,\textpm0.21} & {\color[gray]{0.30}52.8} \\
5 & 83.28{\color[gray]{0.45}\fontsize{6.2pt}{6.4pt}\selectfont\,\textpm1.14} & {\color[gray]{0.30}94.1} & 82.94{\color[gray]{0.45}\fontsize{6.2pt}{6.4pt}\selectfont\,\textpm0.62} & {\color[gray]{0.30}95.1} & 92.65{\color[gray]{0.45}\fontsize{6.2pt}{6.4pt}\selectfont\,\textpm0.18} & {\color[gray]{0.30}39.9} & 93.26{\color[gray]{0.45}\fontsize{6.2pt}{6.4pt}\selectfont\,\textpm0.08} & {\color[gray]{0.30}77.2} & 95.98{\color[gray]{0.45}\fontsize{6.2pt}{6.4pt}\selectfont\,\textpm0.15} & {\color[gray]{0.30}74.0} & 95.50{\color[gray]{0.45}\fontsize{6.2pt}{6.4pt}\selectfont\,\textpm0.05} & {\color[gray]{0.30}58.3} \\
6 & 83.00{\color[gray]{0.45}\fontsize{6.2pt}{6.4pt}\selectfont\,\textpm0.76} & {\color[gray]{0.30}99.0} & 82.86{\color[gray]{0.45}\fontsize{6.2pt}{6.4pt}\selectfont\,\textpm0.96} & {\color[gray]{0.30}91.3} & 92.75{\color[gray]{0.45}\fontsize{6.2pt}{6.4pt}\selectfont\,\textpm0.33} & {\color[gray]{0.30}41.2} & 93.34{\color[gray]{0.45}\fontsize{6.2pt}{6.4pt}\selectfont\,\textpm0.24} & {\color[gray]{0.30}83.2} & 95.91{\color[gray]{0.45}\fontsize{6.2pt}{6.4pt}\selectfont\,\textpm0.16} & {\color[gray]{0.30}79.0} & 95.71{\color[gray]{0.45}\fontsize{6.2pt}{6.4pt}\selectfont\,\textpm0.04} & {\color[gray]{0.30}64.5} \\
8 & 83.14{\color[gray]{0.45}\fontsize{6.2pt}{6.4pt}\selectfont\,\textpm0.44} & {\color[gray]{0.30}99.6} & 82.96{\color[gray]{0.45}\fontsize{6.2pt}{6.4pt}\selectfont\,\textpm0.34} & {\color[gray]{0.30}93.3} & 92.79{\color[gray]{0.45}\fontsize{6.2pt}{6.4pt}\selectfont\,\textpm0.30} & {\color[gray]{0.30}42.1} & 93.30{\color[gray]{0.45}\fontsize{6.2pt}{6.4pt}\selectfont\,\textpm0.37} & {\color[gray]{0.30}89.5} & 95.86{\color[gray]{0.45}\fontsize{6.2pt}{6.4pt}\selectfont\,\textpm0.23} & {\color[gray]{0.30}86.0} & 95.84{\color[gray]{0.45}\fontsize{6.2pt}{6.4pt}\selectfont\,\textpm0.11} & {\color[gray]{0.30}75.0} \\
10 & 82.84{\color[gray]{0.45}\fontsize{6.2pt}{6.4pt}\selectfont\,\textpm1.06} & {\color[gray]{0.30}100.0} & 83.32{\color[gray]{0.45}\fontsize{6.2pt}{6.4pt}\selectfont\,\textpm0.81} & {\color[gray]{0.30}88.7} & \tikz[remember picture,overlay]\coordinate (ks-best-1-fast-10-left);{\color{black}\bfseries 92.84}{\color[gray]{0.45}\fontsize{6.2pt}{6.4pt}\selectfont\,\textpm0.13} & {\color{KSensitivityBest}44.2}\tikz[remember picture,overlay]\coordinate (ks-best-1-fast-10-right); & 93.25{\color[gray]{0.45}\fontsize{6.2pt}{6.4pt}\selectfont\,\textpm0.09} & {\color[gray]{0.30}92.4} & 95.92{\color[gray]{0.45}\fontsize{6.2pt}{6.4pt}\selectfont\,\textpm0.16} & {\color[gray]{0.30}90.1} & 95.62{\color[gray]{0.45}\fontsize{6.2pt}{6.4pt}\selectfont\,\textpm0.34} & {\color[gray]{0.30}76.8} \\
12 & 83.22{\color[gray]{0.45}\fontsize{6.2pt}{6.4pt}\selectfont\,\textpm0.70} & {\color[gray]{0.30}99.0} & 83.22{\color[gray]{0.45}\fontsize{6.2pt}{6.4pt}\selectfont\,\textpm0.67} & {\color[gray]{0.30}95.1} & 92.58{\color[gray]{0.45}\fontsize{6.2pt}{6.4pt}\selectfont\,\textpm0.46} & {\color[gray]{0.30}44.7} & 93.32{\color[gray]{0.45}\fontsize{6.2pt}{6.4pt}\selectfont\,\textpm0.29} & {\color[gray]{0.30}94.0} & 95.77{\color[gray]{0.45}\fontsize{6.2pt}{6.4pt}\selectfont\,\textpm0.00} & {\color[gray]{0.30}93.0} & 95.77{\color[gray]{0.45}\fontsize{6.2pt}{6.4pt}\selectfont\,\textpm0.12} & {\color[gray]{0.30}85.1} \\
16 & \tikz[remember picture,overlay]\coordinate (ks-best-0-fast-16-left);{\color{black}\bfseries 83.80}{\color[gray]{0.45}\fontsize{6.2pt}{6.4pt}\selectfont\,\textpm1.07} & {\color{KSensitivityBest}100.0}\tikz[remember picture,overlay]\coordinate (ks-best-0-fast-16-right); & 83.44{\color[gray]{0.45}\fontsize{6.2pt}{6.4pt}\selectfont\,\textpm0.86} & {\color[gray]{0.30}99.1} & 92.61{\color[gray]{0.45}\fontsize{6.2pt}{6.4pt}\selectfont\,\textpm0.19} & {\color[gray]{0.30}46.6} & \tikz[remember picture,overlay]\coordinate (ks-best-1-sample-16-left);{\color{black}\bfseries 93.61}{\color[gray]{0.45}\fontsize{6.2pt}{6.4pt}\selectfont\,\textpm0.16} & {\color{KSensitivityBest}95.4}\tikz[remember picture,overlay]\coordinate (ks-best-1-sample-16-right); & \tikz[remember picture,overlay]\coordinate (ks-best-2-fast-16-left);{\color{black}\bfseries 95.99}{\color[gray]{0.45}\fontsize{6.2pt}{6.4pt}\selectfont\,\textpm0.20} & {\color{KSensitivityBest}96.4}\tikz[remember picture,overlay]\coordinate (ks-best-2-fast-16-right); & \tikz[remember picture,overlay]\coordinate (ks-best-2-sample-16-left);{\color{black}\bfseries 95.97}{\color[gray]{0.45}\fontsize{6.2pt}{6.4pt}\selectfont\,\textpm0.08} & {\color{KSensitivityBest}85.7}\tikz[remember picture,overlay]\coordinate (ks-best-2-sample-16-right); \\
\tikz[remember picture,overlay]\coordinate (ks-table-bottom);20 & 83.54{\color[gray]{0.45}\fontsize{6.2pt}{6.4pt}\selectfont\,\textpm0.70} & {\color[gray]{0.30}100.0} & \tikz[remember picture,overlay]\coordinate (ks-best-0-sample-20-left);{\color{black}\bfseries 83.90}{\color[gray]{0.45}\fontsize{6.2pt}{6.4pt}\selectfont\,\textpm1.04} & {\color{KSensitivityBest}99.3}\tikz[remember picture,overlay]\coordinate (ks-best-0-sample-20-right); & 92.72{\color[gray]{0.45}\fontsize{6.2pt}{6.4pt}\selectfont\,\textpm0.22} & {\color[gray]{0.30}47.6} & 93.58{\color[gray]{0.45}\fontsize{6.2pt}{6.4pt}\selectfont\,\textpm0.22} & {\color[gray]{0.30}99.6} & 95.56{\color[gray]{0.45}\fontsize{6.2pt}{6.4pt}\selectfont\,\textpm0.39} & {\color[gray]{0.30}97.9} & 95.90{\color[gray]{0.45}\fontsize{6.2pt}{6.4pt}\selectfont\,\textpm0.11} & {\color[gray]{0.30}91.7} \\
\bottomrule
\end{tabular*}
\begin{tikzpicture}[remember picture,overlay]
\coordinate (ks-divider-1) at ($(ks-gap-1-left)!0.5!(ks-gap-1-right)$);
\draw[gray!65,line width=0.4pt,dash pattern=on 2pt off 2pt]
([yshift=7pt]ks-divider-1 |- ks-table-top) -- ([yshift=-3pt]ks-divider-1 |- ks-table-bottom);
\coordinate (ks-divider-2) at ($(ks-gap-2-left)!0.5!(ks-gap-2-right)$);
\draw[gray!65,line width=0.4pt,dash pattern=on 2pt off 2pt]
([yshift=7pt]ks-divider-2 |- ks-table-top) -- ([yshift=-3pt]ks-divider-2 |- ks-table-bottom);
\draw[KSensitivityBest,line width=0.5pt,dash pattern=on 2pt off 1.3pt]
([xshift=-1.5pt,yshift=7.1pt]ks-best-1-fast-10-left) rectangle ([xshift=1.5pt,yshift=-2.1pt]ks-best-1-fast-10-right);
\draw[KSensitivityBest,line width=0.5pt,dash pattern=on 2pt off 1.3pt]
([xshift=-1.5pt,yshift=7.1pt]ks-best-0-fast-16-left) rectangle ([xshift=1.5pt,yshift=-2.1pt]ks-best-0-fast-16-right);
\draw[KSensitivityBest,line width=0.5pt,dash pattern=on 2pt off 1.3pt]
([xshift=-1.5pt,yshift=7.1pt]ks-best-1-sample-16-left) rectangle ([xshift=1.5pt,yshift=-2.1pt]ks-best-1-sample-16-right);
\draw[KSensitivityBest,line width=0.5pt,dash pattern=on 2pt off 1.3pt]
([xshift=-1.5pt,yshift=7.1pt]ks-best-2-fast-16-left) rectangle ([xshift=1.5pt,yshift=-2.1pt]ks-best-2-fast-16-right);
\draw[KSensitivityBest,line width=0.5pt,dash pattern=on 2pt off 1.3pt]
([xshift=-1.5pt,yshift=7.1pt]ks-best-2-sample-16-left) rectangle ([xshift=1.5pt,yshift=-2.1pt]ks-best-2-sample-16-right);
\draw[KSensitivityBest,line width=0.5pt,dash pattern=on 2pt off 1.3pt]
([xshift=-1.5pt,yshift=7.1pt]ks-best-0-sample-20-left) rectangle ([xshift=1.5pt,yshift=-2.1pt]ks-best-0-sample-20-right);
\end{tikzpicture}
\end{table}

\subsubsection{Evaluation on the full graph}
\label{app:full-graph-evaluation}
If memory permits, evaluating on the full graph can be preferable while training on a $q$-edge budget, since evaluation requires less memory. However, the following results show that full-graph evaluation does not consistently yield higher scores. One possible reason is that the best $K$ supports may exclude noisy edges and provide better support for evaluation. For small graphs with larger $K$, the support union can cover most or all of the graph.

\begin{table}[!htbp]
\centering
\definecolor{FullEvalBest}{HTML}{000000}
\definecolor{FullEvalGain}{HTML}{007F70}
\definecolor{FullEvalLoss}{HTML}{D55E00}

\caption{Scaffold evaluation at each dataset's target retention $\delta$.
\sgt-1 uses one fixed support for training and evaluation; \sgt-$K$
refreshes supports and evaluates their validation-selected union ($K=10$).
\sgt-Full trains on sparse supports and evaluates on the full graph.
Full-GCN trains and evaluates on the full graph. Reported mean $\pm$ SD (\%).}
\label{tab:scaffold-full-evaluation}

\setlength{\tabcolsep}{3pt}
\renewcommand{\arraystretch}{1.05}
\fontsize{8.4pt}{10pt}\selectfont

\begin{tabular*}{\textwidth}{@{}l@{\extracolsep{\fill}}cc*{3}{c}@{\hspace{14pt}}l@{}}
\toprule
\textbf{Dataset}
& $\boldsymbol{\delta}$
& \textbf{Variant}
& \textbf{Scaffold-1}
& \textbf{Scaffold-$K$}
& \textbf{Scaffold-Full}
& \tikz[remember picture,overlay]\coordinate (full-eval-ref-top);\textbf{Full-GCN} \\
\midrule

Cora
& 0.7
& Batch
& 83.80{\color[gray]{0.45}\fontsize{6.2pt}{6.4pt}\selectfont\,\textpm0.92}
& \makebox[7pt][l]{{\color{FullEvalLoss}\fontsize{7.2pt}{8pt}\selectfont$\downarrow$}}83.54{\color[gray]{0.45}\fontsize{6.2pt}{6.4pt}\selectfont\,\textpm0.71}
& \makebox[7pt][l]{{\color{FullEvalGain}\fontsize{7.2pt}{8pt}\selectfont$\uparrow$}}{\color{FullEvalBest}\bfseries 84.58}{\color[gray]{0.45}\fontsize{6.2pt}{6.4pt}\selectfont\,\textpm0.56}
& 84.56{\color[gray]{0.45}\fontsize{6.2pt}{6.4pt}\selectfont\,\textpm0.63} \\

CiteSeer
& 0.7
& Fast
& 70.18{\color[gray]{0.45}\fontsize{6.2pt}{6.4pt}\selectfont\,\textpm0.83}
& \makebox[7pt][l]{{\color{FullEvalGain}\fontsize{7.2pt}{8pt}\selectfont$\uparrow$}}71.52{\color[gray]{0.45}\fontsize{6.2pt}{6.4pt}\selectfont\,\textpm0.97}
& \makebox[7pt][l]{{\color{FullEvalGain}\fontsize{7.2pt}{8pt}\selectfont$\uparrow$}}{\color{FullEvalBest}\bfseries 71.54}{\color[gray]{0.45}\fontsize{6.2pt}{6.4pt}\selectfont\,\textpm0.29}
& 72.30{\color[gray]{0.45}\fontsize{6.2pt}{6.4pt}\selectfont\,\textpm0.47} \\

PubMed
& 0.5
& Fast
& 79.34{\color[gray]{0.45}\fontsize{6.2pt}{6.4pt}\selectfont\,\textpm0.99}
& \makebox[7pt][l]{{\color{FullEvalLoss}\fontsize{7.2pt}{8pt}\selectfont$\downarrow$}}79.08{\color[gray]{0.45}\fontsize{6.2pt}{6.4pt}\selectfont\,\textpm0.78}
& \makebox[7pt][l]{{\color{FullEvalGain}\fontsize{7.2pt}{8pt}\selectfont$\uparrow$}}{\color{FullEvalBest}\bfseries 80.46}{\color[gray]{0.45}\fontsize{6.2pt}{6.4pt}\selectfont\,\textpm0.63}
& 81.22{\color[gray]{0.45}\fontsize{6.2pt}{6.4pt}\selectfont\,\textpm1.11} \\

Amazon-Photo
& 0.3
& Batch
& 95.10{\color[gray]{0.45}\fontsize{6.2pt}{6.4pt}\selectfont\,\textpm0.17}
& \makebox[7pt][l]{{\color{FullEvalGain}\fontsize{7.2pt}{8pt}\selectfont$\uparrow$}}{\color{FullEvalBest}\bfseries 95.92}{\color[gray]{0.45}\fontsize{6.2pt}{6.4pt}\selectfont\,\textpm0.15}
& \makebox[7pt][l]{{\color{FullEvalGain}\fontsize{7.2pt}{8pt}\selectfont$\uparrow$}}95.82{\color[gray]{0.45}\fontsize{6.2pt}{6.4pt}\selectfont\,\textpm0.28}\rlap{$^{*}$}
& 95.95{\color[gray]{0.45}\fontsize{6.2pt}{6.4pt}\selectfont\,\textpm0.33} \\

Amazon-Computers
& 0.3
& Sample
& 93.06{\color[gray]{0.45}\fontsize{6.2pt}{6.4pt}\selectfont\,\textpm0.11}
& \makebox[7pt][l]{{\color{FullEvalGain}\fontsize{7.2pt}{8pt}\selectfont$\uparrow$}}{\color{FullEvalBest}\bfseries 93.57}{\color[gray]{0.45}\fontsize{6.2pt}{6.4pt}\selectfont\,\textpm0.18}
& \makebox[7pt][l]{{\color{FullEvalGain}\fontsize{7.2pt}{8pt}\selectfont$\uparrow$}}93.30{\color[gray]{0.45}\fontsize{6.2pt}{6.4pt}\selectfont\,\textpm0.38}
& 93.80{\color[gray]{0.45}\fontsize{6.2pt}{6.4pt}\selectfont\,\textpm0.20} \\

Coauthor-CS
& 0.5
& Fast
& 95.39{\color[gray]{0.45}\fontsize{6.2pt}{6.4pt}\selectfont\,\textpm0.24}
& \makebox[7pt][l]{{\color{FullEvalGain}\fontsize{7.2pt}{8pt}\selectfont$\uparrow$}}{\color{FullEvalBest}\bfseries 96.04}{\color[gray]{0.45}\fontsize{6.2pt}{6.4pt}\selectfont\,\textpm0.06}
& \makebox[7pt][l]{{\color{FullEvalGain}\fontsize{7.2pt}{8pt}\selectfont$\uparrow$}}95.99{\color[gray]{0.45}\fontsize{6.2pt}{6.4pt}\selectfont\,\textpm0.25}
& 95.72{\color[gray]{0.45}\fontsize{6.2pt}{6.4pt}\selectfont\,\textpm0.04} \\

Coauthor-Physics
& 0.3
& Sample
& 97.09{\color[gray]{0.45}\fontsize{6.2pt}{6.4pt}\selectfont\,\textpm0.06}
& \makebox[7pt][l]{{\color{FullEvalGain}\fontsize{7.2pt}{8pt}\selectfont$\uparrow$}}97.22{\color[gray]{0.45}\fontsize{6.2pt}{6.4pt}\selectfont\,\textpm0.14}
& \makebox[7pt][l]{{\color{FullEvalGain}\fontsize{7.2pt}{8pt}\selectfont$\uparrow$}}{\color{FullEvalBest}\bfseries 97.43}{\color[gray]{0.45}\fontsize{6.2pt}{6.4pt}\selectfont\,\textpm0.06}
& 97.50{\color[gray]{0.45}\fontsize{6.2pt}{6.4pt}\selectfont\,\textpm0.12} \\

WikiCS
& 0.3
& Sample
& 79.48{\color[gray]{0.45}\fontsize{6.2pt}{6.4pt}\selectfont\,\textpm0.32}
& \makebox[7pt][l]{{\color{FullEvalGain}\fontsize{7.2pt}{8pt}\selectfont$\uparrow$}}79.97{\color[gray]{0.45}\fontsize{6.2pt}{6.4pt}\selectfont\,\textpm0.63}
& \makebox[7pt][l]{{\color{FullEvalGain}\fontsize{7.2pt}{8pt}\selectfont$\uparrow$}}{\color{FullEvalBest}\bfseries 80.21}{\color[gray]{0.45}\fontsize{6.2pt}{6.4pt}\selectfont\,\textpm0.50}
& 80.27{\color[gray]{0.45}\fontsize{6.2pt}{6.4pt}\selectfont\,\textpm0.44} \\

\midrule

Amazon-Ratings
& 0.7
& Batch
& 52.60{\color[gray]{0.45}\fontsize{6.2pt}{6.4pt}\selectfont\,\textpm0.19}
& \makebox[7pt][l]{{\color{FullEvalGain}\fontsize{7.2pt}{8pt}\selectfont$\uparrow$}}{\color{FullEvalBest}\bfseries 54.73}{\color[gray]{0.45}\fontsize{6.2pt}{6.4pt}\selectfont\,\textpm0.20}
& \makebox[7pt][l]{{\color{FullEvalLoss}\fontsize{7.2pt}{8pt}\selectfont$\downarrow$}}52.36{\color[gray]{0.45}\fontsize{6.2pt}{6.4pt}\selectfont\,\textpm0.36}\rlap{$^{*}$}
& 53.61{\color[gray]{0.45}\fontsize{6.2pt}{6.4pt}\selectfont\,\textpm0.63} \\

Minesweeper$^{\dagger}$
& 0.7
& Fast
& 86.98{\color[gray]{0.45}\fontsize{6.2pt}{6.4pt}\selectfont\,\textpm0.76}
& \makebox[7pt][l]{{\color{FullEvalGain}\fontsize{7.2pt}{8pt}\selectfont$\uparrow$}}{\color{FullEvalBest}\bfseries 93.35}{\color[gray]{0.45}\fontsize{6.2pt}{6.4pt}\selectfont\,\textpm0.34}
& \makebox[7pt][l]{{\color{FullEvalGain}\fontsize{7.2pt}{8pt}\selectfont$\uparrow$}}93.22{\color[gray]{0.45}\fontsize{6.2pt}{6.4pt}\selectfont\,\textpm0.23}
& 97.18{\color[gray]{0.45}\fontsize{6.2pt}{6.4pt}\selectfont\,\textpm0.45} \\

Questions$^{\dagger}$
& 0.7
& Sample
& 77.81{\color[gray]{0.45}\fontsize{6.2pt}{6.4pt}\selectfont\,\textpm0.14}
& \makebox[7pt][l]{{\color{FullEvalGain}\fontsize{7.2pt}{8pt}\selectfont$\uparrow$}}78.38{\color[gray]{0.45}\fontsize{6.2pt}{6.4pt}\selectfont\,\textpm0.13}
& \makebox[7pt][l]{{\color{FullEvalGain}\fontsize{7.2pt}{8pt}\selectfont$\uparrow$}}{\color{FullEvalBest}\bfseries 78.45}{\color[gray]{0.45}\fontsize{6.2pt}{6.4pt}\selectfont\,\textpm0.40}\rlap{$^{*}$}
& \tikz[remember picture,overlay]\coordinate (full-eval-ref-bottom);78.47{\color[gray]{0.45}\fontsize{6.2pt}{6.4pt}\selectfont\,\textpm0.29} \\

\bottomrule
\end{tabular*}

\tikz[remember picture,overlay]{
\draw[
    gray!65,
    line width=0.4pt,
    dash pattern=on 2pt off 2pt
]
([xshift=-7pt,yshift=7pt]full-eval-ref-top)
--
([xshift=-7pt,yshift=-3pt]full-eval-ref-top |- full-eval-ref-bottom);
}

\vspace{3pt}
\begin{minipage}{\textwidth}
\fontsize{7.2pt}{8.5pt}\selectfont
{\color{FullEvalBest}\textbf{Blue bold}}: best Scaffold mean;
{\color{FullEvalGain}$\uparrow$}/{\color{FullEvalLoss}$\downarrow$}: vs.\ Scaffold-1.
$^{\dagger}$AUC; otherwise accuracy. $^{*}$Time-limited training.
Variants chosen by Scaffold-$K$ validation. Training used separate runs and settings.
\end{minipage}
\end{table}

\FloatBarrier
\subsection{Support-Refresh Frequency}
\label{app:refresh-frequency}

Figure~\ref{fig:appendix-cora-support-refresh} studies support-refresh frequency
on \textsc{Cora} using a shared fast approximation of random spanning forest (\texttt{RandSF}) sequence. Refreshing every epoch reduces the gap to the full graph to $0.89$ percentage points, compared with $2.09$ and $3.79$ points when refreshing every 10 and 50 epochs, respectively.
Matched-support controls show a similar effect: with identical support sets, every-epoch refresh improves accuracy by $2.65$ points over every 50 epochs and by $1.06$ points over every 10 epochs.
This suggests that stale supports can hinder optimization beyond the effect of support diversity.
We treat this single-setting experiment as a mechanism ablation.

\begin{figure}[!htbp]
    \centering
    \includegraphics[width=\linewidth]{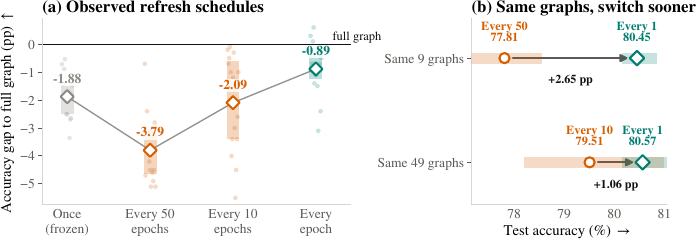}
    \caption{Support-refresh ablation on Cora with fast approximation of random forest (\texttt{RandSF}) at a 70\% edge budget. \textbf{(a)} Accuracy across refresh schedules.
    \textbf{(b)} Matched-support comparisons with identical support sets.}
    \label{fig:appendix-cora-support-refresh}
\end{figure}

\FloatBarrier

\subsection{Impact of Synchronous vs.\ Asynchronous Support Generation}
\label{app:sync-async}

For large graphs, constructing a new sparse support can be expensive and may stall GPU training. \sgtsa mitigates this by precomputing edge-sampling
probabilities, making subsequent support sampling inexpensive. Other variants, such as \texttt{Fast} and \texttt{Batch}, can require substantially more time
to construct each support. We therefore generate the next support asynchronously on CPU workers while the GPU trains on the current one, and swap supports once the new support is ready. As shown in Figure~\ref{fig:appendix-cora-support-refresh}, frequent support refresh is generally preferable to long refresh intervals.

On Cora, synchronization has little effect on runtime. On the relatively larger \textsc{Amazon-Computers} graph, asynchronous generation reduces runtime by
$7.9\times$ for \texttt{Fast} and $1.4\times$ for \texttt{Sample}, with negligible accuracy changes. The smaller gap for \texttt{Sample} reflects its cheap sampling step, whereas variants with more expensive support construction benefit substantially from asynchronous generation. This makes asynchronous refresh particularly important for scaling \texttt{\sgt-$K$} and \texttt{\sgt-Full} to large graphs.

\begin{table}[!htbp]
\centering
\caption{Synchronous versus asynchronous support generation. Accuracy is
mean $\pm$ SD; runtime is mean seconds per run.}
\label{tab:async-vs-sync}

\setlength{\tabcolsep}{4pt}
\renewcommand{\arraystretch}{1.08}
\fontsize{8.4pt}{10pt}\selectfont

\begin{tabular}{@{}lcl@{\hspace{6pt}}cc@{\hspace{6pt}}cc@{}}
\toprule
\textbf{Dataset} & $\boldsymbol{\delta}$ & \textbf{Method}
& \multicolumn{2}{c}{\textbf{Accuracy (\%)}}
& \multicolumn{2}{c}{\textbf{Time (s)}} \\
\cmidrule(lr){4-5}\cmidrule(lr){6-7}
& & & Sync & Async & Sync & Async \\
\midrule
Cora & 0.7 & \textbf{Scaffold-Fast}
& 82.92{\color[gray]{0.45}\fontsize{6.2pt}{6.4pt}\selectfont\,\textpm0.47}
& 82.94{\color[gray]{0.45}\fontsize{6.2pt}{6.4pt}\selectfont\,\textpm0.90}
& 12.6 & 12.3 \\

& & \textbf{Scaffold-Sample}
& 83.28{\color[gray]{0.45}\fontsize{6.2pt}{6.4pt}\selectfont\,\textpm0.26}
& 83.00{\color[gray]{0.45}\fontsize{6.2pt}{6.4pt}\selectfont\,\textpm0.29}
& 11.8 & 11.4 \\

\midrule
Amazon-Computers & 0.3 & \textbf{Scaffold-Fast}
& 92.09{\color[gray]{0.45}\fontsize{6.2pt}{6.4pt}\selectfont\,\textpm0.09}
& 92.18{\color[gray]{0.45}\fontsize{6.2pt}{6.4pt}\selectfont\,\textpm0.15}
& 280.6 & 35.7 \\

& & \textbf{Scaffold-Sample}
& 92.52{\color[gray]{0.45}\fontsize{6.2pt}{6.4pt}\selectfont\,\textpm0.28}
& 92.55{\color[gray]{0.45}\fontsize{6.2pt}{6.4pt}\selectfont\,\textpm0.36}
& 50.0 & 36.8 \\
\bottomrule
\end{tabular}
\end{table}

\subsection{Training Support Refresh and Inference Topology}
\label{app:train-inference}

Figure~\ref{fig:appendix-train-eval-sparse} compares fixed-support and every-epoch support-refresh training under different inference topologies. On \textsc{Cora}, full-graph inference improves both fixed-support and refreshed training, whereas on \textsc{Amazon-Photo} the differences among inference choices are small. On \textsc{Squirrel}, the best-5 support ensemble performs best after support refresh, while full-graph inference provides little benefit.
For \textsc{Chameleon}, full-graph inference substantially improves fixed-support training (38.90\% to 43.93\%), but provides almost no additional
gain after every-epoch refresh (43.22\% versus 43.17\%).
Overall, training-time support refresh and inference topology interact differently across datasets, and full-graph inference is most beneficial when training on a single fixed support.

\begin{figure}[!htbp]
    \centering
    \includegraphics[width=\linewidth]{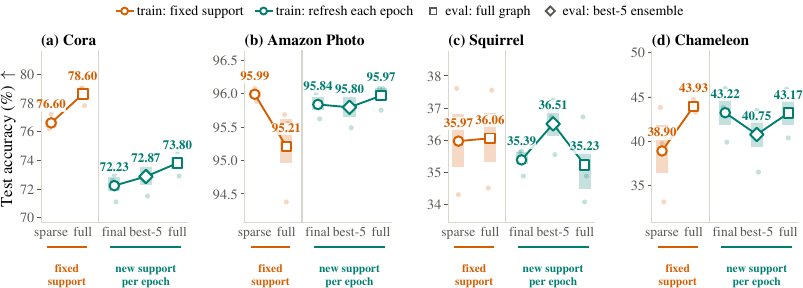}
    \caption{Training support refresh and inference topology at a 70\% edge budget. Models are trained on either a fixed support or a new support each epoch and evaluated on the corresponding sparse support, the best-5 supports, or the full graph.}
    \label{fig:appendix-train-eval-sparse}
\end{figure}

\FloatBarrier
\subsection{\texttt{\sgt-K} vs Random sparse sample at every epoch}
\label{app:sparsesamplerefresh}
To test whether the choice of edges matters beyond repeated sampling, we compare Scaffold-$K$ with Random-$K$. Random-$K$ draws a uniform random edge sample each epoch and evaluates on the union of its $K=5$ samples with the highest validation
accuracy. We match the main-table GCN settings, splits, and training budgets. \texttt{Random-1} is the fixed Random baseline from the main table. Each training sample retains $30\%$ of the edges; a union may exceed this budget.

\begin{table}[!htbp]
\centering
\caption{Sampling comparison at $\delta=0.3$. Test accuracy (mean $\pm$ SD, \%),
with 10 runs per setting. Scaffold uses the Sample variant.}
\label{tab:random-sampling}

\setlength{\tabcolsep}{6pt}
\renewcommand{\arraystretch}{1.0}
\fontsize{8.4pt}{10pt}\selectfont

\begin{tabular}{@{}lcc@{}}
\toprule
\textbf{Method} & \textbf{Chameleon} & \textbf{Squirrel} \\
\midrule

Random-1
& 37.85{\color[gray]{0.45}\fontsize{6.2pt}{6.4pt}\selectfont\,\textpm3.35}
& 40.90{\color[gray]{0.45}\fontsize{6.2pt}{6.4pt}\selectfont\,\textpm1.70} \\

\textbf{Scaffold-1}
& \textbf{42.50}{\color[gray]{0.45}\fontsize{6.2pt}{6.4pt}\selectfont\,\textpm4.38}
& \textbf{44.62}{\color[gray]{0.45}\fontsize{6.2pt}{6.4pt}\selectfont\,\textpm1.67} \\

\midrule

Random-$K$
& 43.09{\color[gray]{0.45}\fontsize{6.2pt}{6.4pt}\selectfont\,\textpm3.30}
& 43.24{\color[gray]{0.45}\fontsize{6.2pt}{6.4pt}\selectfont\,\textpm2.03} \\

\textbf{Scaffold-$K$}
& \textbf{43.89}{\color[gray]{0.45}\fontsize{6.2pt}{6.4pt}\selectfont\,\textpm2.40}
& \textbf{43.99}{\color[gray]{0.45}\fontsize{6.2pt}{6.4pt}\selectfont\,\textpm1.59} \\

\midrule

Full-GCN
& 45.46{\color[gray]{0.45}\fontsize{6.2pt}{6.4pt}\selectfont\,\textpm4.42}
& 44.77{\color[gray]{0.45}\fontsize{6.2pt}{6.4pt}\selectfont\,\textpm2.36} \\

\bottomrule
\end{tabular}
\end{table}
\FloatBarrier
\subsection{Structural implications of \sgt}
\label{app:structural-implications}

Figure~\ref{fig:scaffold-implications} highlights how \sgt preserves communication under sparsification. The spanning backbone preserves connected components above the spanning-forest floor, while low dilation keeps more omitted relations within the finite-hop receptive field of a GNN. Randomized backbones further expose complementary routes across repeated supports: on \textsc{Cora} at $\delta=0.7$, refreshing \texttt{RandSF} improves accuracy by $2.13$ points and approaches full-graph performance while increasing four-hop coverage.

\begin{figure}[!htbp]
    \centering
    \includegraphics[width=\linewidth]{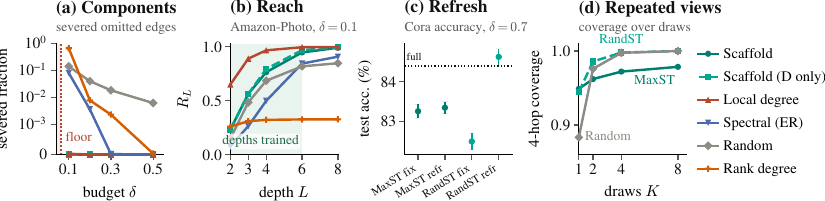}
    \caption{
    The spanning backbone preserves connectivity; low-dilation maintains finite-hop access, and randomized refresh exposes complementary
    communication routes across repeated sparse views.
    }
    \label{fig:scaffold-implications}
\end{figure}

\subsection{Compatibility of Scaffold across GNN architectures}
\label{app:other-gnns}

\textsc{Scaffold} is backbone-agnostic, and it sparsifies the graph without modifying the GNN architecture. 
Table~\ref{tab:backbone-amazon-arxiv} evaluates \sgt with GCN, GAT~\citep{velivckovic2017graph}, GIN~\citep{xu2018powerful}, and GraphSAGE on \textsc{Amazon-Computers} and \textsc{ogbn-arxiv} at $\delta=0.3$.
\sgt-1 uses one fixed support, whereas \sgt-$K$ asynchronously refreshes supports during training and evaluates on the union of $K=5$ validation-best supports.
All methods within each dataset use matched training settings and hardware, so runtime comparisons are made within each dataset and architecture.
Across architectures, \sgt generally reduces runtime while remaining competitive in accuracy. Compared with \texttt{\sgt-1}, \texttt{\sgt-K} can recover some accuracy through support refresh, potentially because different sparse views expose complementary information to the GNN.

The lower panel reports the number of asynchronous support swaps applied during training.


\newcolumntype{A}{>{\centering\arraybackslash}m{1.35cm}}
\newcolumntype{T}{>{\centering\arraybackslash}m{3.35cm}}

\begin{table}[!htbp]
\centering
\caption{\sgt across GNN architectures at $\delta=0.3$. Accuracy is mean $\pm$ SD; time is mean wall-clock runtime. Raised annotations show accuracy gap (pp) and speedup relative to the corresponding full-graph model.}
\label{tab:backbone-amazon-arxiv}

\setlength{\tabcolsep}{2pt}
\renewcommand{\arraystretch}{1.05}
\fontsize{8.2pt}{9.8pt}\selectfont

\begin{tabular}{@{}l A T @{\hspace{6pt}} A T@{}}
\toprule
\textbf{Method}
& \multicolumn{2}{c}{\shortstack{\textbf{Amazon}\\\textbf{Computers}}}
& \multicolumn{2}{c}{\textbf{ogbn-arxiv}} \\

& \multicolumn{2}{c}{{\fontsize{6.8pt}{7.2pt}\selectfont $\delta_d=0.3$}}
& \multicolumn{2}{c}{{\fontsize{6.8pt}{7.2pt}\selectfont $\delta_d=0.3$}} \\

& \multicolumn{2}{c}{{\fontsize{6pt}{6.6pt}\selectfont
$|V|$\,13.8K\quad $|E|$\,245.9K}}
& \multicolumn{2}{c}{{\fontsize{6pt}{6.6pt}\selectfont
$|V|$\,169.3K\quad $|E|$\,1.2M}} \\

\cmidrule(lr){2-3}\cmidrule(lr){4-5}

& {\fontsize{7.2pt}{7.6pt}\selectfont Acc.}
& {\fontsize{7.2pt}{7.6pt}\selectfont Time}
& {\fontsize{7.2pt}{7.6pt}\selectfont Acc.}
& {\fontsize{7.2pt}{7.6pt}\selectfont Time} \\
\midrule

GCN
& 93.77{\color[gray]{0.45}\fontsize{6.2pt}{6.4pt}\selectfont\,\textpm0.03}
& {\color[gray]{0.30}39.4\,s}
& 72.58{\color[gray]{0.45}\fontsize{6.2pt}{6.4pt}\selectfont\,\textpm1.57}
& {\color[gray]{0.30}1.9\,min} \\

\textbf{Scaf.-GCN}$^{1}$
& 92.86{\color[gray]{0.45}\fontsize{6.2pt}{6.4pt}\selectfont\,\textpm0.44}
  \rlap{\smash{\raisebox{0.62ex}{
  \color[rgb]{0.706,0.275,0.059}
  \fontsize{5.2pt}{5.2pt}\selectfont\,-0.91}}}
& {\color[gray]{0.30}34.9\,s}
  \rlap{\smash{\raisebox{0.62ex}{
  \color[rgb]{0.043,0.478,0.357}
  \fontsize{5.2pt}{5.2pt}\selectfont\,1.13$\times$}}}
& 69.83{\color[gray]{0.45}\fontsize{6.2pt}{6.4pt}\selectfont\,\textpm0.36}
  \rlap{\smash{\raisebox{0.62ex}{
  \color[rgb]{0.706,0.275,0.059}
  \fontsize{5.2pt}{5.2pt}\selectfont\,-2.75}}}
& {\color[gray]{0.30}1.8\,min}
  \rlap{\smash{\raisebox{0.62ex}{
  \color[rgb]{0.043,0.478,0.357}
  \fontsize{5.2pt}{5.2pt}\selectfont\,1.06$\times$}}} \\

\textbf{Scaf.-GCN}$^{K,\mathrm{async}}$
& 93.51{\color[gray]{0.45}\fontsize{6.2pt}{6.4pt}\selectfont\,\textpm0.08}
  \rlap{\smash{\raisebox{0.62ex}{
  \color[rgb]{0.706,0.275,0.059}
  \fontsize{5.2pt}{5.2pt}\selectfont\,-0.25}}}
& {\color[gray]{0.30}36.3\,s}
  \rlap{\smash{\raisebox{0.62ex}{
  \color[rgb]{0.043,0.478,0.357}
  \fontsize{5.2pt}{5.2pt}\selectfont\,1.08$\times$}}}
& 72.15{\color[gray]{0.45}\fontsize{6.2pt}{6.4pt}\selectfont\,\textpm0.46}
  \rlap{\smash{\raisebox{0.62ex}{
  \color[rgb]{0.706,0.275,0.059}
  \fontsize{5.2pt}{5.2pt}\selectfont\,-0.43}}}
& {\color[gray]{0.30}1.7\,min}
  \rlap{\smash{\raisebox{0.62ex}{
  \color[rgb]{0.043,0.478,0.357}
  \fontsize{5.2pt}{5.2pt}\selectfont\,1.16$\times$}}} \\

\addlinespace[1pt]
\cmidrule(lr){1-5}
\addlinespace[1pt]

GAT
& 93.49{\color[gray]{0.45}\fontsize{6.2pt}{6.4pt}\selectfont\,\textpm0.26}
& {\color[gray]{0.30}59.0\,s}
& 72.13{\color[gray]{0.45}\fontsize{6.2pt}{6.4pt}\selectfont\,\textpm0.35}
& {\color[gray]{0.30}4.8\,min} \\

\textbf{Scaf.-GAT}$^{1}$
& 92.78{\color[gray]{0.45}\fontsize{6.2pt}{6.4pt}\selectfont\,\textpm0.08}
  \rlap{\smash{\raisebox{0.62ex}{
  \color[rgb]{0.706,0.275,0.059}
  \fontsize{5.2pt}{5.2pt}\selectfont\,-0.71}}}
& {\color[gray]{0.30}35.1\,s}
  \rlap{\smash{\raisebox{0.62ex}{
  \color[rgb]{0.043,0.478,0.357}
  \fontsize{5.2pt}{5.2pt}\selectfont\,1.68$\times$}}}
& 69.24{\color[gray]{0.45}\fontsize{6.2pt}{6.4pt}\selectfont\,\textpm0.00}
  \rlap{\smash{\raisebox{0.62ex}{
  \color[rgb]{0.706,0.275,0.059}
  \fontsize{5.2pt}{5.2pt}\selectfont\,-2.88}}}
& {\color[gray]{0.30}2.5\,min}
  \rlap{\smash{\raisebox{0.62ex}{
  \color[rgb]{0.043,0.478,0.357}
  \fontsize{5.2pt}{5.2pt}\selectfont\,1.92$\times$}}} \\

\textbf{Scaf.-GAT}$^{K,\mathrm{async}}$
& 93.38{\color[gray]{0.45}\fontsize{6.2pt}{6.4pt}\selectfont\,\textpm0.41}
  \rlap{\smash{\raisebox{0.62ex}{
  \color[rgb]{0.706,0.275,0.059}
  \fontsize{5.2pt}{5.2pt}\selectfont\,-0.11}}}
& {\color[gray]{0.30}41.0\,s}
  \rlap{\smash{\raisebox{0.62ex}{
  \color[rgb]{0.043,0.478,0.357}
  \fontsize{5.2pt}{5.2pt}\selectfont\,1.44$\times$}}}
& 70.76{\color[gray]{0.45}\fontsize{6.2pt}{6.4pt}\selectfont\,\textpm1.47}
  \rlap{\smash{\raisebox{0.62ex}{
  \color[rgb]{0.706,0.275,0.059}
  \fontsize{5.2pt}{5.2pt}\selectfont\,-1.36}}}
& {\color[gray]{0.30}2.4\,min}
  \rlap{\smash{\raisebox{0.62ex}{
  \color[rgb]{0.043,0.478,0.357}
  \fontsize{5.2pt}{5.2pt}\selectfont\,1.97$\times$}}} \\

\addlinespace[1pt]
\cmidrule(lr){1-5}
\addlinespace[1pt]

GIN
& 92.58{\color[gray]{0.45}\fontsize{6.2pt}{6.4pt}\selectfont\,\textpm0.10}
& {\color[gray]{0.30}26.0\,s}
& 70.30{\color[gray]{0.45}\fontsize{6.2pt}{6.4pt}\selectfont\,\textpm0.25}
& {\color[gray]{0.30}1.7\,min} \\

\textbf{Scaf.-GIN}$^{1}$
& 92.13{\color[gray]{0.45}\fontsize{6.2pt}{6.4pt}\selectfont\,\textpm0.54}
  \rlap{\smash{\raisebox{0.62ex}{
  \color[rgb]{0.706,0.275,0.059}
  \fontsize{5.2pt}{5.2pt}\selectfont\,-0.45}}}
& {\color[gray]{0.30}21.5\,s}
  \rlap{\smash{\raisebox{0.62ex}{
  \color[rgb]{0.043,0.478,0.357}
  \fontsize{5.2pt}{5.2pt}\selectfont\,1.21$\times$}}}
& 56.56{\color[gray]{0.45}\fontsize{6.2pt}{6.4pt}\selectfont\,\textpm1.51}
  \rlap{\smash{\raisebox{0.62ex}{
  \color[rgb]{0.706,0.275,0.059}
  \fontsize{5.2pt}{5.2pt}\selectfont\,-13.74}}}
& {\color[gray]{0.30}1.7\,min}
  \rlap{\smash{\raisebox{0.62ex}{
  \color[rgb]{0.043,0.478,0.357}
  \fontsize{5.2pt}{5.2pt}\selectfont\,1.06$\times$}}} \\

\textbf{Scaf.-GIN}$^{K,\mathrm{async}}$
& 91.22{\color[gray]{0.45}\fontsize{6.2pt}{6.4pt}\selectfont\,\textpm0.03}
  \rlap{\smash{\raisebox{0.62ex}{
  \color[rgb]{0.706,0.275,0.059}
  \fontsize{5.2pt}{5.2pt}\selectfont\,-1.36}}}
& {\color[gray]{0.30}26.0\,s}
  \rlap{\smash{\raisebox{0.62ex}{
  \color[rgb]{0.043,0.478,0.357}
  \fontsize{5.2pt}{5.2pt}\selectfont\,1.00$\times$}}}
& 68.91{\color[gray]{0.45}\fontsize{6.2pt}{6.4pt}\selectfont\,\textpm0.54}
  \rlap{\smash{\raisebox{0.62ex}{
  \color[rgb]{0.706,0.275,0.059}
  \fontsize{5.2pt}{5.2pt}\selectfont\,-1.39}}}
& {\color[gray]{0.30}83.5\,s}
  \rlap{\smash{\raisebox{0.62ex}{
  \color[rgb]{0.043,0.478,0.357}
  \fontsize{5.2pt}{5.2pt}\selectfont\,1.26$\times$}}} \\

\addlinespace[1pt]
\cmidrule(lr){1-5}
\addlinespace[1pt]

GraphSAGE
& 93.18{\color[gray]{0.45}\fontsize{6.2pt}{6.4pt}\selectfont\,\textpm0.03}
& {\color[gray]{0.30}27.8\,s}
& 71.71{\color[gray]{0.45}\fontsize{6.2pt}{6.4pt}\selectfont\,\textpm0.48}
& {\color[gray]{0.30}46.4\,s} \\

\textbf{Scaf.-SAGE}$^{1}$
& 92.89{\color[gray]{0.45}\fontsize{6.2pt}{6.4pt}\selectfont\,\textpm0.33}
  \rlap{\smash{\raisebox{0.62ex}{
  \color[rgb]{0.706,0.275,0.059}
  \fontsize{5.2pt}{5.2pt}\selectfont\,-0.29}}}
& {\color[gray]{0.30}19.5\,s}
  \rlap{\smash{\raisebox{0.62ex}{
  \color[rgb]{0.043,0.478,0.357}
  \fontsize{5.2pt}{5.2pt}\selectfont\,1.43$\times$}}}
& 68.88{\color[gray]{0.45}\fontsize{6.2pt}{6.4pt}\selectfont\,\textpm0.28}
  \rlap{\smash{\raisebox{0.62ex}{
  \color[rgb]{0.706,0.275,0.059}
  \fontsize{5.2pt}{5.2pt}\selectfont\,-2.82}}}
& {\color[gray]{0.30}45.3\,s}
  \rlap{\smash{\raisebox{0.62ex}{
  \color[rgb]{0.043,0.478,0.357}
  \fontsize{5.2pt}{5.2pt}\selectfont\,1.02$\times$}}} \\

\textbf{Scaf.-SAGE}$^{K,\mathrm{async}}$
& 92.53{\color[gray]{0.45}\fontsize{6.2pt}{6.4pt}\selectfont\,\textpm0.49}
  \rlap{\smash{\raisebox{0.62ex}{
  \color[rgb]{0.706,0.275,0.059}
  \fontsize{5.2pt}{5.2pt}\selectfont\,-0.65}}}
& {\color[gray]{0.30}19.2\,s}
  \rlap{\smash{\raisebox{0.62ex}{
  \color[rgb]{0.043,0.478,0.357}
  \fontsize{5.2pt}{5.2pt}\selectfont\,1.45$\times$}}}
& 70.75{\color[gray]{0.45}\fontsize{6.2pt}{6.4pt}\selectfont\,\textpm0.18}
  \rlap{\smash{\raisebox{0.62ex}{
  \color[rgb]{0.706,0.275,0.059}
  \fontsize{5.2pt}{5.2pt}\selectfont\,-0.96}}}
& {\color[gray]{0.30}41.4\,s}
  \rlap{\smash{\raisebox{0.62ex}{
  \color[rgb]{0.043,0.478,0.357}
  \fontsize{5.2pt}{5.2pt}\selectfont\,1.12$\times$}}} \\

\bottomrule
\end{tabular}

\vspace{6pt}

{
\fontsize{8.2pt}{9.8pt}\selectfont
\textbf{Applied async swaps (seeds 42 / 43)}
\par\vspace{2pt}

\setlength{\tabcolsep}{6pt}
\begin{tabular}{@{}lcc@{}}
\toprule
\textbf{Architecture}
& \shortstack{\textbf{Amazon}\\\textbf{Computers}}
& \textbf{ogbn-arxiv} \\
\midrule
GCN       & 774 / 494 & 923 / 827 \\
GAT       & 832 / 834 & 956 / 950 \\
GIN       & 631 / 619 & 875 / 873 \\
GraphSAGE & 264 / 276 & 269 / 273 \\
\bottomrule
\end{tabular}
}

\end{table}

\end{document}